\newcommand{\suppmat}{appendix\xspace}
\newcommand{\eqdef}{\coloneqq}
\newcommand\thsnd[1]{#1\thinspace000}
\theoremstyle{plain}
\newtheorem{theorem}{Theorem}[section]
\newtheorem{proposition}[theorem]{Proposition}
\theoremstyle{definition}
\theoremstyle{remark}
\newtheorem{remark}[theorem]{Remark}
\newcommand{\x}{\mathbf{x}}
\newcommand{\xt}{\x_{t}}
\newcommand{\xtone}{\x_{t-1}}
\newcommand{\xtplusone}{\x_{t+1}}
\newcommand{\xtplusoneT}{\x_{t+1:T}}
\newcommand{\xtt}{\x_{t:T}}
\newcommand{\xtonet}{\x_{t-1:T}}
\newcommand{\xzerot}{\x_{0:T}}
\newcommand{\xs}{\x_s}
\newcommand{\xsplusone}{\x_{s+1}}
\newcommand{\z}{\mathbf{z}}
\newcommand{\zt}{\z_{t}}
\newcommand{\ux}{\underline{\x}}
\newcommand{\ox}{\overline{\x}}
\newcommand{\concat}{\ \widehat{} \ }
\newcommand{\y}{\mathbf{y}}
\newcommand{\recon}{f_\theta}
\newcommand{\gammat}{\gamma_t(\xtt)}
\newcommand{\gammatplusone}{\gamma_{t+1}(\xtplusoneT)}
\newcommand{\gammazero}{\gamma_0(\xzerot)}
\newcommand{\ptheta}{p_{\theta}}
\newcommand{\abar}{\bar\alpha}
\newcommand{\abt}{\abar_t}
\newcommand{\abtplusone}{\abar_{t+1}}
\newcommand{\abtau}{\abar_\tau}
\newcommand{\ytildetau}{\widetilde \y_\tau}
\def\eqref#1{equation~\ref{#1}}
\def\1{\bm{1}}
\def\eps{{\epsilon}}
\def\rvb{{\mathbf{b}}}
\def\rvp{{\mathbf{p}}}
\def\mA{{\bm{A}}}
\def\mM{{\bm{M}}}
\DeclareMathAlphabet{\mathsfit}{\encodingdefault}{\sfdefault}{m}{sl}
\SetMathAlphabet{\mathsfit}{bold}{\encodingdefault}{\sfdefault}{bx}{n}
\newcommand{\R}{\mathbb{R}}
\icmltitlerunning{Decoupled Diffusion Sequential Monte Carlo}
\begin{document}

\twocolumn[
\icmltitle{Solving Linear-Gaussian Bayesian Inverse Problems with\\Decoupled Diffusion Sequential Monte Carlo}

\icmlsetsymbol{equal}{*}

\begin{icmlauthorlist}
\icmlauthor{Filip Ekström Kelvinius}{stima}
\icmlauthor{Zheng Zhao}{stima}
\icmlauthor{Fredrik Lindsten}{stima}
\end{icmlauthorlist}

\icmlaffiliation{stima}{Department of Computer and Information Science (IDA), Linköping University, Sweden}
\icmlcorrespondingauthor{Filip Ekström Kelvinius}{filip.ekstrom@liu.se}
\

\icmlkeywords{generative modeling, diffusion models, inverse problems, sequential Monte Carlo}
\vskip 0.3in
]

\printAffiliationsAndNotice{}  %

\begin{abstract}
A recent line of research has exploited pre-trained generative diffusion models as priors for solving Bayesian inverse problems. We contribute to this research direction by designing a sequential Monte Carlo method for linear-Gaussian inverse problems which builds on ``decoupled diffusion", where the generative process is designed such that larger updates to the sample are possible. The method is asymptotically exact and we demonstrate the effectiveness of our Decoupled Diffusion Sequential Monte Carlo (DDSMC) algorithm on both synthetic as well as protein and image data. Further, we demonstrate how the approach can be extended to discrete data.
\end{abstract}

\section{Introduction}
Generative diffusion models \citep{sohl-dickstein_deep_2015, ho_denoising_2020-2, song_score-based_2021} have sparked an enormous interest from the research community, and shown impressive results on a wide variety of modeling task, ranging from image synthesis \citep{dhariwal_diffusion_2021-1, rombach_high-resolution_2022, saharia_photorealistic_2022} and audio generation \citep{chen_wavegrad_2020,kong_diffwave_2021} to molecule and protein generation \citep{hoogeboom_equivariant_2022, xu_geodiff_2022, corso_diffdock_2023}. A diffusion model consists of a neural network which implicitly, through a generative procedure, defines an approximation of the data distribution, $p_\theta(\x)$. While methods~\citep[e.g.][]{ho_classifier-free_2022} where the model is explicitly trained to define a conditional distribution $p_\theta(\x|\y)$ exist, this conditional training is not always possible. Alternatively, a domain-specific likelihood $p(\y|\x)$ might be known, and in this case, using $p_\theta(\x)$ as a prior in a Bayesian inverse problem setting, i.e., sampling from the posterior distribution $p_\theta(\x|\y) \propto p(\y|\x)p_\theta(\x)$, becomes an appealing alternative. This approach is flexible since many different likelihoods can be used with the same diffusion model prior, without requiring retraining or access to paired training data. Previous methods for posterior sampling with diffusion priors, while providing impressive results on tasks like image reconstruction \citep{kawar_denoising_2022, chung_diffusion_2023, song2023pseudoinverseguided}, often rely on approximations and fail or perform poorly on simple tasks \citep[and our \Cref{sec:gmm_exp}]{cardoso_monte_2023-2}, making it uncertain to what extent they can solve Bayesian inference problems in general.

Sequential Monte Carlo (SMC) \cite{del_moral_feynman-kac_2004,NaessethLS:2019a} is a well-established method for Bayesian inference, and its use of sequences of distributions makes it a natural choice to combine with diffusion priors. It also offers asymptotical guarantees, and the combination of SMC and diffusion models has recently seen a spark of interest \citep{trippe_diffusion_2023,dou_diffusion_2023-1,wu_practical_2023, cardoso_monte_2023-2}. Moreover, the design of an efficient SMC algorithm involves a high degree of flexibility while guaranteeing asymptotic exactness, which makes it an interesting framework for continued exploration. 

As previous (non-SMC) works on posterior sampling has introduced different approximations, and SMC offers a flexible framework with asymptotic guarantees, we aim to further investigate the use of SMC for Bayesian inverse problems with diffusion priors. In particular, we target the case of linear-Gaussian likelihood models. We develop a method which we call Decoupled Diffusion SMC (DDSMC) utilizing and extending a previously introduced technique for posterior sampling based on decoupled diffusion \citep{zhang_improving_2024}. With this approach, it is possible to make larger updates to the sample during the generative procedure, and it also opens up new ways of taking the conditioning on $\y$ into account in the design of the SMC proposal distribution. We show how this method can effectively perform posterior sampling on synthetic data, and its effectiveness on image and protein structure reconstruction tasks. We also further generalize DDSMC for discrete data. Code is available online\footnote{\url{https://github.com/filipekstrm/ddsmc}}.

\section{Background}
As mentioned in the introduction, we are interested in sampling from the posterior distribution
\(
    p_\theta(\x|\y) \propto p(\y|\x)p_\theta(\x),
\)
where the prior $p_\theta(\x)$ is implicitly defined by a pre-trained diffusion model and the likelihood is linear-Gaussian, i.e., 
\(
    p(\y|\x) = \mathcal{N}(\y|A\x, \sigma_y^2I).
\)
We will use the notation $p_\theta(\cdot)$ for any distribution that is defined via the diffusion model, the notation $p(\cdot)$ for the likelihood (which is assumed to be known), and $q(\cdot)$ for any distribution related to the fixed \emph{forward} diffusion process (see next section).

\subsection{Diffusion Models}
\label{sec:diffmodels}
Diffusion models are based on transforming data, $\x_0$, to Gaussian noise by a Markovian \emph{forward} process of the form\footnote{In principle, $\xt$ could also be multiplied by some time-dependent parameter, but we skip this here for simplicity.}
\begin{align}
    q(\xtplusone|\xt) = \mathcal{N}(\xtplusone|\xt, \beta_{t+1}I). \label{eq:difftrans}
\end{align}
The generative process then consists in trying to reverse this process, and is parametrized as a backward Markov process
\begin{align}
    p_\theta(\x_{0:T}) = p_\theta(\x_T)\prod_{t=0}^{T-1} p_\theta(\xt|\xtplusone). \label{eq:diffbackward_seq}
\end{align}
The reverse process $p_\theta$ is fitted to approximate the reversal of the forward process, i.e., $p_\theta(\xt|\xtplusone) \approx q(\xt|\xtplusone)$, where the latter can be expressed as 
\begin{align}
q(\xt|\xtplusone) = \int q(\xt|\xtplusone, \x_0)q(\x_0|\xtplusone)d\x_0. \label{eq:diffbackwardint}
\end{align}
While $q(\xt|\xtplusone, \x_0)$ is available in closed form, $q(\x_0|\xtplusone)$ is not, thereby rendering \Cref{eq:diffbackwardint} intractable. In practice, this is typically handled by replacing the conditional $q(\x_0|\xtplusone)$ with a point estimate $\delta_{\recon(\xtplusone)}(\x_0)$, where $\recon(\xtplusone)$ is a ``reconstruction" of $\x_0$, computed by a neural network. This is typically done with Tweedie's formula, where $\recon(\xtplusone) = \xt + \sigma_{t+1}^2 s_\theta(\xtplusone, t+1) \approx \mathbb{E}[\x_0|\xtplusone]$ and $s_\theta(\xtplusone, t+1)$ is a neural network which approximates the score $\nabla_{\xtplusone}\log q(\xtplusone)$ (and using the true score would result in the reconstruction being equal to the expected value). Plugging this approximation into \Cref{eq:diffbackwardint}, the backward kernel becomes $p_\theta(\xt|\xtplusone) \eqdef q(\xt|\xtplusone, \x_0=\recon(\xtplusone))$. 

\paragraph{Probability Flow ODE}
\citet{song_score-based_2021} described how diffusion models can be generalized as time-continuous stochastic differential equations (SDEs), and how sampling form a diffusion model can be viewed as solving the corresponding reverse-time SDE. They further derive a ``probability flow ordinary differential equation (PF-ODE)'', which allows sampling from the same distribution $p_\theta(\xt)$ as the SDE by solving a deterministic ODE (initialized randomly). The solution $\x_0$ is a sample from $p_\theta(\x_0)$, and this is used by \citet{zhang_improving_2024} to construct an alternative transition kernel: first step is obtaining a sample $\hat \x_{0, t+1}$ by solving the ODE starting at $\xtplusone$, then sample from $q(\xt|\x_0 = \hat \x_{0, t+1})$. This can give samples from the same marginals $p_\theta(\xt)$ as if using the regular kernel $p_\theta(\xt|\xtplusone)$, but for unconditional sampling this is a convoluted way of performing generation as $\hat \x_{0, t+1}$ is already a sample from the desired distribution. However, as we discuss in \Cref{sec:ddsmc}, this approach does provide an interesting possibility for conditional sampling.

\subsection{Sequential Monte Carlo}
\label{sec:smc_background}
Sequential Monte Carlo \citep[SMC; see, e.g.,][for an overview]{NaessethLS:2019a} is a class of methods that enable sampling from a sequence of distributions $\{\pi_t(\xtt)\}_{t=0}^T$, which are known and can be evaluated up to a normalizing constant, i.e.,
\begin{align} %
\pi_{t}(\xtt) = \gammat/Z_t,
\end{align} %
where $\gammat$ can be evaluated pointwise. In an SMC algorithm, a set of $N$ samples, or \emph{particles}, are generated in parallel, and they are weighted such that a set of (weighted) particles $\{(\xtt^i, w_t^i)\}_{i=1}^N$ are approximate draws from the target distribution $\pi_t(\xtt)$. For each $t$, an SMC algorithm consists of three steps. The \emph{resampling} step samples a new set of particles $\{(\xtt^{a_t^i})\}_{i=1}^N$ with $a_t^1, \dots, a_t^N \overset{\text{iid}}{\sim} \text{Categorical}(\{1, \dots, N\}; \{w_t^i\}_{i=1}^N)$\footnote{The resampling procedure is a design choice. In this paper we use Multinomial with probabilities $\{w_t^i\}_{i=1}^N$ for simplicity.}. The second step is the \emph{proposal} step, where new samples $\{\xtonet^i\}_{i=1}^N$ are proposed as $\xtone^i\sim r_{t-1}(\xtone^i|\xtt^i), \ \xtonet^i = (\xtone^i, \xtt^i)$, and finally a \emph{weighting} step,
\begin{align}
w^i_{t-1} \propto \gamma_{t-1}(\xtonet^i) / (r_{t-1}(\xtone^i|\xtt^i) \gamma_{t}(\xtt^i)).
\end{align} %
To construct an SMC algorithm, it is hence necessary to determine two components: the target distributions $\{\pi_t(\xtt)\}_{t=0}^T$ (or rather, the unnormalized distributions $\{\gammat\}_{t=0}^T$), and the proposals $\{r_{t-1}(\xtone|\xtt)\}_{t=1}^{T}$. 

\paragraph{SMC as a General Sampler}
Even though SMC relies on a sequence of (unnormalized) distributions, it can still be used as a general sampler to sample from some ``static'' distribution $\phi(\x)$ by introducing auxiliary variables $\x_{0:T}$ and \emph{constructing} a sequence of distributions over $\{ \x_{t:T} \}_{t=0}^T$. As long as the marginal distribution of $\x_0$ w.r.t.\ the \emph{final} distribution $\pi_0(\xzerot)$ is equal to $\phi(\x_0)$, the SMC algorithm will provide a consistent approximation of $\phi(\x)$ (i.e.,\ increasingly accurate as the number of particles $N$ increases). The \emph{intermediate target} distributions $\{\pi_t(\xtt)\}_{t=1}^T$ are then merely a means for approximating the final target
$\pi_0(\xzerot)$.

\section{Decoupled Diffusion SMC}
\label{sec:ddsmc}
\subsection{Target Distributions}
\label{sec:ddsmc_prior}
The sequential nature of the generative diffusion model naturally suggests that SMC can be used for the Bayesian inverse problem, by constructing a sequence of target distributions based on \Cref{eq:diffbackward_seq}. This approach has been explored in several recent works \citep[see also \Cref{sec:related_work}]{wu_practical_2023, cardoso_monte_2023-2}. However, \citet{zhang_improving_2024} recently proposed an alternative simulation protocol for tackling inverse problems with diffusion priors, based on a ``decoupling" argument: they propose to simulate (approximately) from $p_\theta(\x_0|\xtplusone, \y)$ and then push this sample forward to diffusion time $t$ by sampling from the forward kernel $q(\xt|\x_0)$ instead of $q(\xt|\x_0, \xtplusone)$. The motivation is to reduce the autocorrelation in the generative process to enable making transitions with larger updates and thus correct larger, global errors. The resulting method is referred to as Decoupled Annealing Posterior Sampling (DAPS).

To leverage this idea, we can realize that the SMC framework is in fact very general and, as discussed in \Cref{sec:smc_background}, the sequence of target distributions can be seen as a design choice, as long as the final target $\pi_0(\xzerot)$ admits the distribution of interest as a marginal. Thus, it is possible to use the DAPS sampling protocol as a basis for SMC. This corresponds to redefining the prior over trajectories, from \Cref{eq:diffbackward_seq} to $p_\theta^0(\xzerot) = p_\theta(\x_T)\prod_{t=0}^{T-1}p_\theta^0(\xt|\xtplusone)$ where
\begin{align}
    p_\theta^0(\xt|\xtplusone) = q(\xt|\x_0=\recon(\xtplusone)).
\end{align}
Conceptually, this corresponds to reconstructing $\x_0$ conditionally on the current state $\xtplusone$, followed by adding noise to the reconstructed sample using the forward model. As discussed by \citet{zhang_improving_2024}, and also proven by us in \Cref{app:proofs}, the two transitions can still lead to the same marginal distributions for all time points $t$, i.e., $\int p_\theta^0(\xtt)d\xtplusoneT = \int p_\theta(\xtt)d\xtplusoneT$, under the assumption that $\x_0 = \recon(\xtplusone)$ is a sample from $p_\theta(\x_0)$ and that the generative process $p_\theta$ indeed inverts the forward process $q$. The sample $\x_0$ can be approximately obtained by, e.g., solving the reverse-time PF-ODE, and using this solution as reconstruction model $\recon(\xtplusone)$.

\paragraph{Generalizing the DAPS Prior}
By rewriting the conditional forward kernel $q(\xt|\xtplusone, \x_0)$ using Bayes theorem,
\begin{align}
    q(\xt|\xtplusone, \x_0) \propto q(\xtplusone|\xt)q(\xt|\x_0),
\end{align}
we can view the standard diffusion backward kernel \Cref{eq:diffbackwardint} as applying the DAPS kernel, but conditioning the sample on the previous state $\xtplusone$, which acts as an ``observation'' with likelihood $q(\xt|\xtplusone)$. We can thus generalize the kernel in \Cref{eq:diffbackwardint} by annealing this likelihood with the inverse temperature $\eta$, 
\begin{align}
    q_\eta(\xt|\xtplusone, \x_0) \propto q(\xtplusone|\xt)^\eta q(\xt|\x_0), \label{eq:q_eta}
\end{align}
and define our backward transition using this as
\begin{align}
    p_\theta^\eta(\xt|\xtplusone) 
    \eqdef \int q_\eta(\xt|\xtplusone, \x_0)
    \delta_{\recon(\xtplusone)}(\x_0)
    d\x_0
\end{align}
which allows us to smoothly transition between the DAPS ($\eta=0$) and standard ($\eta=1$) backward kernels.

\paragraph{Likelihood}
Having defined the prior as the generalized DAPS prior $p_\theta^\eta(\xtt)$, we also need to incorporate the conditioning on the observation $\y$. 
A natural starting point would be to choose as targets
\begin{align}
    \gammat = p(\y|\xt)p_\theta^\eta(\xtt) \label{eq:smc_ideal_target},
\end{align}
as this for $t=0$ leads to a distribution with marginal $p_\theta^\eta(\x|\y)$. However, the likelihood $p(\y|\xt) = \int p(\y|\x_0) p_\theta(\x_0|\xt)d\x_0$ in \Cref{eq:smc_ideal_target} is not tractable for $t>0$, and needs to be approximated. As the reconstruction $\recon(\xtplusone)$ played a central role in the prior, we can utilize this also for our likelihood. \citet{song2023pseudoinverseguided} proposed to use a Gaussian approximation $\tilde p_\theta(\x_0|\xt) \eqdef \mathcal{N} \left(\x_0|\recon(\xt), \rho_t^2I\right)$, resulting in
\begin{align}
p(\y|\xt) 
&\approx \tilde p(\y|\xt)
=\mathcal{N}\bigl(\y|A\recon(\xt), \sigma_y^2 I + \rho_t^2 AA^T\bigr), \label{eq:tilde_likelihood}
\end{align}
as the likelihood is linear and Gaussian. From hereon we will use the notation $\tilde p_\theta$ for any distribution derived from the Gaussian approximation $\tilde p_\theta(\x_0|\xt)$. As for $t=0$ the likelihood is known, we can rely on the consistency of SMC to obtain asymptotically exact samples, even if using approximate likelihoods in the intermediate targets.

\paragraph{Putting it Together}
To summarize, we define a sequence of intermediate target distributions for SMC according to
\begin{align}
    \gammat 
    &= \tilde p(\y|\xt) p_\theta(\x_T)\prod_{s=t}^{T-1} p^\eta_\theta(\xs|\xsplusone) \nonumber \\
    &= \frac{\tilde p(\y|\xt)}{\tilde p(\y|\xtplusone)} p_\theta^\eta(\xt|\xtplusone)\gammatplusone, \label{eq:target_putting_it_together}
\end{align}
where
\begin{align}
    \tilde p(\y|\xt) 
    &= \mathcal{N}\left(\y|A\recon(\xt), \sigma_y^2 I + \rho_t^2 AA^T\right) \label{eq:ddsmc_likelihood}
\end{align}
and, for $\eta=0$,\footnote{See \Cref{app:general_DAPS_prior} for an expression for general $\eta$.}
\begin{align}
    p_\theta^{\eta=0}(\xt|\xtplusone) 
    &= \mathcal{N}\left(\xt|\recon(\xtplusone),
    \sigma^2_t I\right).
    \label{eq:ddsmc_target_transition}
\end{align}
We view the reconstruction $\recon$ as a design choice, and try Tweedie's formula and solving the PF-ODE initialized at $\xtplusone$ in the experiments section.

\begin{proposition}
    \label{prop:smc_consistency}
    The target in \Cref{eq:target_putting_it_together,eq:ddsmc_likelihood} with $\rho_0^2 = 0$ will, under mild assumptions, give an SMC algorithm targeting $p_\theta^{\eta}(\x_0|\y) \propto p(\y|\x_0)p_\theta^{\eta}(\x_0)$ that is asymptotically exact, i.e., asymptotically unbiased when the number of particles increases.
\end{proposition}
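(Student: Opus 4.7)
The plan is to establish the proposition in two steps: first verify that the marginal of the final SMC target $\pi_0(\x_{0:T})$ with respect to $\x_0$ coincides with $p_\theta^\eta(\x_0|\y)$, and then invoke standard SMC consistency results as recalled in \Cref{sec:smc_background}. Because SMC targeting a sequence of distributions $\{\pi_t\}$ produces, in the limit $N\to\infty$, consistent approximations of the marginals of $\pi_0$, the proposition reduces to checking the target-at-$t=0$ has the correct $\x_0$-marginal, plus the usual regularity conditions on weights and proposals.

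For the marginal step, substitute $\rho_0^2 = 0$ into \Cref{eq:ddsmc_likelihood} so that
\begin{equation*}
\tilde p(\y|\x_0) = \mathcal{N}\bigl(\y|A\recon(\x_0),\sigma_y^2 I\bigr).
\end{equation*}
The mild assumption I would invoke is that the reconstruction network is the identity at the clean end, $\recon(\x_0)=\x_0$, which is natural since $\recon$ estimates $\mathbb{E}[\x_0|\cdot]$ and the conditional expectation given $\x_0$ itself is $\x_0$. Under this assumption $\tilde p(\y|\x_0) = p(\y|\x_0)$. Then, using \Cref{eq:target_putting_it_together} and marginalizing over $\x_{1:T}$,
\begin{equation*}
\int \gammazero\, d\x_{1:T}
= p(\y|\x_0)\int p_\theta(\x_T)\!\!\prod_{s=0}^{T-1}p_\theta^\eta(\xs|\xsplusone)\, d\x_{1:T}
= p(\y|\x_0)\, p_\theta^\eta(\x_0),
\end{equation*}
which is exactly $Z_0 \cdot p_\theta^\eta(\x_0|\y)$.

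For the SMC consistency step, I would appeal to the standard asymptotic results (see, e.g., \citet{del_moral_feynman-kac_2004,NaessethLS:2019a}): given a well-defined sequence of unnormalized targets $\gamma_t$ that can be evaluated pointwise, proposals $r_{t-1}(\xtone|\xtt)$ whose supports contain those of the corresponding targets, and incremental weights with finite variance under the proposal, the weighted empirical measure of the particles converges to $\pi_0$ as $N\to\infty$, and in particular expectations of bounded test functions are asymptotically unbiased. This is the content subsumed by the phrase ``mild assumptions.''

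The main obstacle I expect is controlling the incremental weights, which through \Cref{eq:target_putting_it_together} involve the ratio $\tilde p(\y|\xt)/\tilde p(\y|\xtplusone)$ together with $p_\theta^\eta(\xt|\xtplusone)/r_{t-1}(\xt|\xtplusone)$. The variance term $\sigma_y^2 I + \rho_t^2 AA^T$ collapses as $t\to 0$ with $\rho_0^2=0$, so one has to argue that the proposal is chosen compatibly (e.g., matching supports and yielding finite-variance weights near $t=0$). Given the flexibility in choosing $r_{t-1}$ this is a design condition rather than a deep technical hurdle, but making it rigorous is the only nontrivial piece; the rest is bookkeeping that reduces the claim to the existing SMC consistency theorems.
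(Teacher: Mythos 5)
Your proof is correct and follows essentially the same route as the paper's: verify that with $\rho_0^2=0$ the final unnormalized target $\gamma_0$ equals $p(\y|\x_0)\,p_\theta^\eta(\x_{0:T})$, so that $\pi_0$ has $p_\theta^\eta(\x_0|\y)$ as its $\x_0$-marginal, and then invoke standard SMC consistency. You usefully make explicit the assumption $\recon(\x_0)=\x_0$ and the weight-variance/support conditions that the paper leaves implicit under ``mild assumptions,'' but the argument is the same.
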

This follows from general SMC theory, as the unnormalized target at $t=0$ becomes $\gammazero = p(\y|\x_0)p_\theta(\xzerot)$, meaning the normalized target $\pi_0(\xzerot) = \gammazero / Z_0$ will have $p_\theta(\x_0|\y)$ as marginal, i.e., $\int \pi_0(\xzerot) d\x_{1:T} = p_\theta(\x_0|\y)$. As SMC provides consistent approximations of all its target distributions $\{\pi_t(\xtt)\}_{t=0}^T$, it will in particular give a consistent approximation at $t=0$.  
\begin{remark}
    Whether the assumptions needed for the marginals of the DAPS ($\eta=0$) and standard diffusion ($\eta=1$) prior to coincide actually are fulfilled \emph{does not affect the consistency in Proposition \ref{prop:smc_consistency}}. If they are not fulfilled, the priors will be different, but the consistency refers to consistent approximations of the posterior \emph{induced by the chosen prior}. This is in contrast to FPS \citep{dou_diffusion_2023-1} and SMCDiff \citep{trippe_diffusion_2023}, which \emph{rely on the assumption of the matching of the backward and forward processes for their target to coincide with the induced posterior}, and hence do not enjoy the asymptotical exactness guarantees unless this assumption is met.
\end{remark}

\begin{algorithm}[tb]
   \caption{Decoupled Diffusion SMC (DDSMC). All operations for $i=1, \dots, N$}
   \label{algo:ddsmc}
   \hspace*{\algorithmicindent} \textbf{Input:}
   Score model $s_\theta$, 
   measurement $\y$
   \\
   \hspace*{\algorithmicindent} \textbf{Output:} Sample $\x_0$
\begin{algorithmic}
\STATE Sample $\x_T^i \sim p(\x_T)$
\FOR{$t$ in $T \dots, 1$}
\STATE Predict $\hat \x_{0, t}^i = \recon(\xt^i)$
\STATE Compute $\tilde p(\y|\xt^i)$ \COMMENT{\Cref{eq:ddsmc_likelihood}}
\IF{$t=T$}
\STATE Set $\tilde w^i = \tilde p(\y |\x_T^i)$ 
\ELSE
\STATE Set $\tilde w^i = \frac{\tilde p(\y |\xt^i)p_\theta^\eta(\xt^i|\xtplusone^i)}{\tilde p(\y |\xtplusone^i) r_t(\xt^i|\xtplusone^i, \y)}$
\ENDIF
\STATE Compute  $w^i = \tilde w^i /\sum_{j=1}^N \tilde w^j$
\STATE Resample $\{\x_t^i, \hat \x_{0, t}^i\}_{i=1}^N$
\STATE Sample $\x_{t-1}^i \sim r_{t-1}(\x_{t-1}|\xt^i, \y)$ \COMMENT{\Cref{eq:ddsmc_proposal}, or \Cref{eq:general_eta_proposal} for general $\eta$.}
\ENDFOR
\STATE Compute $\tilde w^i = \frac{p(\y | \x_0^i)}{p(\y | \x_1^i)}$ and $w^i = \tilde w^i /\sum_{j=1}^N \tilde w^j$
\STATE Sample $\x_0\sim \text{Multinomial}(\{\x_0^j\}_{j=1}^N, \{w^j\}_{j=1}^N)$
\RETURN $\x_0$ \COMMENT{Or the full set of weighted samples depending on application}
\end{algorithmic}
\end{algorithm}

\subsection{Proposal}
\label{sec:ddsmc_proposal}
As the efficiency of the SMC algorithm in practice very much depends on the proposal, we will, as previous works on SMC (e.g., TDS \citep{wu_practical_2023} and MCGDiff \citep{cardoso_monte_2023-2}), use a proposal which incorporates information about the measurement $\y$. Again we are inspired by DAPS where the Gaussian approximation $\tilde p_\theta(\x_0|\xt)$ plays a central role. In their method, they use this approximation as a prior over $\x_0$, which together with the likelihood $p(\y|\x_0)$ form an (approximate) posterior
\begin{align}
    \tilde p_\theta(\x_0|\xtplusone, \y) \propto p(\y|\x_0)\tilde p_\theta(\x_0|\xtplusone).\label{eq:x0_posterior_general}
\end{align}
The DAPS method is designed for general likelihoods and makes use of Langevin dynamics to sample from this approximate posterior. However, in the linear-Gaussian case we can, similarly to \Cref{eq:tilde_likelihood}, obtain a closed form expression as
\begin{align}
    \tilde p_\theta(\x_0|\xtplusone, \y) = \mathcal{N}(\x_0|\tilde \mu_\theta^{t+1}(\xtplusone, \y), \mM^{-1}_{t+1}), \label{eq:x0_posterior_closed_form}
\end{align}
with $\tilde \mu_\theta^{t+1}(\xtplusone, \y) = \mM^{-1}_{t+1}\rvb_{t+1}$ and
\begin{subequations}
\label{eq:x0_posterior_mean_precisionMat}
\begin{align}
    \mM_{t+1} &= \frac{1}{\sigma_y^2}\mA^T\mA + \frac{1}{\rho_{t+1}^{2}}I, \\ 
    \rvb_{t+1} &= \frac{1}{\sigma_y^2}\mA^T\y + \frac{1}{\rho_{t+1}^2}\recon(\xtplusone).
\end{align}
\end{subequations}
Although this is an approximation of the true posterior, it offers an interesting venue for an SMC proposal by propagating a sample from the posterior forward in time. Assuming $\eta=0$ for notational brevity (see \cref{app:general_DAPS_prior} for the general expression) we can write this as 
\begin{align}
    r_t(\xt|\xtplusone, \y) = \int \tilde q(\xt|\x_0)\tilde p_\theta(\x_0|\xtplusone, \y)d\x_0. \label{eq:ddsmc_proposal_general}
\end{align}
This proposal is similar to one step of the generative procedure used in DAPS, where in their case the transition would correspond to using the diffusion forward kernel, i.e., $\tilde q(\xt|\x_0)=q(\xt|\x_0)$. We make a slight adjustment based on the following intuition. Considering the setting of non-informative measurements, we expect our posterior $p_\theta(\x_0|\y)$ to coincide with the prior. To achieve this, we could choose $\tilde q(\xt|\x_0)$ such that $r_t(\xt|\xtplusone) = p_\theta^0(\xt|\xtplusone)$. Using $q(\xt|\x_0)$ together with the prior $\tilde p_\theta(\x_0|\xtplusone)$ will, however, not match the covariance in $p^0_\theta(\xt|\xtplusone)$, since the Gaussian approximation $\tilde p_\theta(\x_0|\xtplusone)$ will inflate the variance by a term $\rho_{t+1}^2$.
To counter this effect, we instead opt for a covariance $\lambda_t^2I$ (i.e., $\tilde q(\xt|\x_0) = \mathcal{N}\left(\xt|\x_0, \lambda_t^2I\right)$) such that in the unconditional case, $r_t(\xt|\xtplusone) = p_\theta^0(\xt|\xtplusone)$. As everything is Gaussian, the marginalization in \Cref{eq:ddsmc_proposal_general} can be computed exactly, and we obtain our proposal for $t>0$ as
\begin{align}
    r_t(\xt|\xtplusone, \y) 
    = 
    \mathcal{N}\left( \xt|
    \tilde \mu_\theta^{t+1}(\xtplusone, \y), \lambda_{t}^2I + \mM_{t+1}^{-1}
    \right)
    \label{eq:ddsmc_proposal}
\end{align}
where $\lambda_t^2 = \sigma^2_t - \rho_{t+1}^2$ and, for $t=0$, $r_0(\x_0|\x_1, \y) = \delta_{\tilde \mu_\theta^1(\x_1, \y)}(\x_0)$. We can directly see that in the non-informative case (i.e., $\sigma_y \rightarrow \infty$), we will recover the prior as $\mM_{t+1}^{-1} = \rho_{t+1}^2I$ and $\rvb_{t+1} = \frac{1}{\rho_{t+1}^2}f_\theta(\xtplusone)$. 

It can be noted that, although the proposal relies on approximations (like the approximate posterior in \Cref{eq:x0_posterior_closed_form}), the SMC framework still provides an asymptotically exact algorithm by targeting the desired posterior $p_\theta(\xzerot|\y)$, as established in Proposition \ref{prop:smc_consistency}. Approximations in the proposal do not affect that property.

If using the generalized DAPS prior from the previous section, we can also replace $\tilde q(\xt|\x_0)$ with $\tilde q_\eta(\xt|\xtplusone, \x_0)$ (cf. \Cref{eq:q_eta}) and make the corresponding matching of covariance.

\paragraph{Detailed Expressions for Diagonal $A$}
If first assuming that $A$ is diagonal in the sense that it is of shape $d_y \times d_x$ ($d_y \leq d_x$) with non-zero elements $(a_1, \dots, a_{d_y})$ only along the main diagonal, we can write out explicit expressions for the proposal. In this case, the covariance matrix $\mM_{t+1}^{-1}$ is diagonal with the $i$:th diagonal element ($i=1, \dots, d_x$) becoming
\begin{align}
    (\mM_{t+1}^{-1})_{ii} = 
    \begin{cases}
    \frac{\rho_{t+1}^2\sigma_y^2}{a_i^2\rho_{t+1}^2 + \sigma_y^2} & i\leq d_y\\
    \rho_{t+1}^2 & i>d_y
    \end{cases}.
\end{align}
This means that the covariance matrix in the proposal is diagonal with the elements $\sigma_t^2 - \rho_{t+1}^2 + \frac{\rho_{t+1}^2\sigma_y^2}{a_i^2\rho_{t+1}^2 + \sigma_y^2}$ for $i\leq d_y$, and $\sigma_t^2$ for $i> d_y$. Further, the mean of the $i$:th variable in the proposal becomes
\begin{multline}
    \tilde \mu_\theta^{t+1}(\xtplusone, \y)_i =  \\
    \begin{cases}
    \frac{a_i\rho_{t+1}^2}{a_i^2\rho_{t+1}^2 + \sigma_y^2} y_i + \frac{\sigma_y^2}{a_i^2\rho^2_{t+1} + \sigma_y^2}\recon(\xtplusone)_i & i\leq d_y \\
    \recon(\xtplusone)_i & i>d_y.
    \end{cases}
\end{multline}
We can see the effect of the decoupling clearly by considering the noise-less case, $\sigma_y=0$. In that case, we are always using the known observed value $(\x_0)_i=y_i/a_i$ as the mean, not taking the reconstruction, nor the previous value $(\xtplusone)_i$, into account. 

As the proposal is a multivariate Gaussian with diagonal covariance matrix, we can efficiently sample from it using independent samples from a standard Gaussian. We provide expressions for general choices of $\eta$ in \Cref{app:general_DAPS_prior}.

\paragraph{Non-diagonal $A$}
For a general non-diagonal $A$, the proposal at first glance looks daunting as it requires inverting $\mM_{t+1}$ which is a (potentially very large) non-diagonal matrix, and sampling from a multivariate Gaussian with non-diagonal covariance matrix (which requires a computationally expensive Cholesky decomposition). However, similar to previous work \citep{kawar_snips_2021, kawar_denoising_2022, cardoso_monte_2023-2}, by writing $A$ in terms of its singular value decomposition $A=USV^T$ where $U\in\R^{d_y\times d_y}$ and $V\in \R^{d_x\times d_x}$ are orthonormal matrices, and $S$ a $d_y \times d_x$-dimensional matrix with non-zero elements only on its main diagonal, we can multiply both sides of the measurement equation by $U^T$ to obtain
\begin{align}
    U^T \y = SV^T \x + \sigma_y U^T\epsilon.
\end{align}
By then defining $\y' = U^T\y$, $\x' = V^T \x$, $A'=S$, and $\epsilon' = U^T \eps \sim\mathcal{N}(0, UIU^T) = \mathcal{N}(0, I)$, we obtain the new measurement equation
\begin{align}
    \y' = A'\x' + \sigma_y \epsilon', \quad \eps' \sim \mathcal{N}(0, I).
\end{align}
We can now run our DDSMC algorithm in this new basis and use the expressions from the diagonal case by replacing all variables with their corresponding primed versions, enabling efficient sampling also for non-diagonal $A$, given an implementable SVD. An algorithm outlining an implementation of DDSMC can be found in \Cref{algo:ddsmc}.

\begin{table*}[tb!]
\caption{Results on the Gaussian mixture model experiment when using DDSMC and reconstructing $\x_0$ using either Tweedie's formula ($\recon(\xtplusone) = \mathbb{E}[\x_0|\xt]$) or solving the PF-ODE. Metric is the sliced Wasserstein distance between true posterior samples and samples from DDSMC, averaged over 20 seeds with $95 \%$ CLT confidence intervals. }
\label{tab:gmm-table-ddsmc}
\vskip 0.15in
\begin{center}
\begin{small}
\begin{sc}
\begin{tabular}{cccccccc}
\toprule
                     &       & \multicolumn{3}{c}{Tweedie}                                  & \multicolumn{3}{c}{PF-ODE}                 \\
                     \cmidrule(lr){3-5}                                                 \cmidrule(lr){6-8}
$d_x$                & $d_y$ & $\eta=0.0$         & $\eta=0.5$         & $\eta=1.0$         & $\eta=0.0$         & $\eta=0.5$         & $\eta=1.0$          \\
\midrule
\multirow{3}{*}{8}   & 1     & $1.90\pm0.48$      & $1.78\pm0.44$      & $1.57\pm0.43$      & $1.15\pm0.24$      & $0.88\pm0.33$      & $1.01\pm0.39$       \\
                     & 2     & $0.69\pm0.27$      & $0.64\pm0.29$      & $0.50\pm0.27$      & $0.56\pm0.23$      & $0.34\pm0.20$      & $0.39\pm0.22$       \\
                     & 4     & $0.28\pm0.04$      & $0.22\pm0.03$      & $0.13\pm0.05$      & $0.21\pm0.10$      & $0.09\pm0.08$      & $0.17\pm0.06$       \\ \midrule
\multirow{3}{*}{80}  & 1     & $1.26\pm0.34$      & $1.07\pm0.31$      & $0.96\pm0.28$      & $0.69\pm0.17$      & $0.50\pm0.16$      & $0.77\pm0.12$       \\
                     & 2     & $1.14\pm0.40$      & $0.92\pm0.36$      & $0.81\pm0.35$      & $0.69\pm0.36$      & $0.37\pm0.20$      & $0.69\pm0.15$       \\
                     & 4     & $0.66\pm0.29$      & $0.50\pm0.27$      & $0.44\pm0.22$      & $0.41\pm0.23$      & $0.21\pm0.13$      & $0.54\pm0.04$       \\ \midrule
\multirow{3}{*}{800} & 1     & $1.73\pm0.54$      & $1.64\pm0.61$      & $1.86\pm0.63$      & $1.37\pm0.43$      & $1.68\pm0.52$      & $2.09\pm0.51$       \\
                     & 2     & $1.06\pm0.56$      & $0.90\pm0.65$      & $1.34\pm0.68$      & $0.81\pm0.40$      & $1.17\pm0.65$      & $1.76\pm0.67$       \\
                     & 4     & $0.35\pm0.08$      & $0.16\pm0.10$      & $0.56\pm0.19$      & $0.23\pm0.06$      & $0.47\pm0.22$      & $1.24\pm0.47$       \\
\bottomrule
\end{tabular}
\end{sc}
\end{small}
\end{center}
\vskip -0.1in
\end{table*}

\begin{figure*}[tb!]
        \centering
        \includegraphics[width=\textwidth]{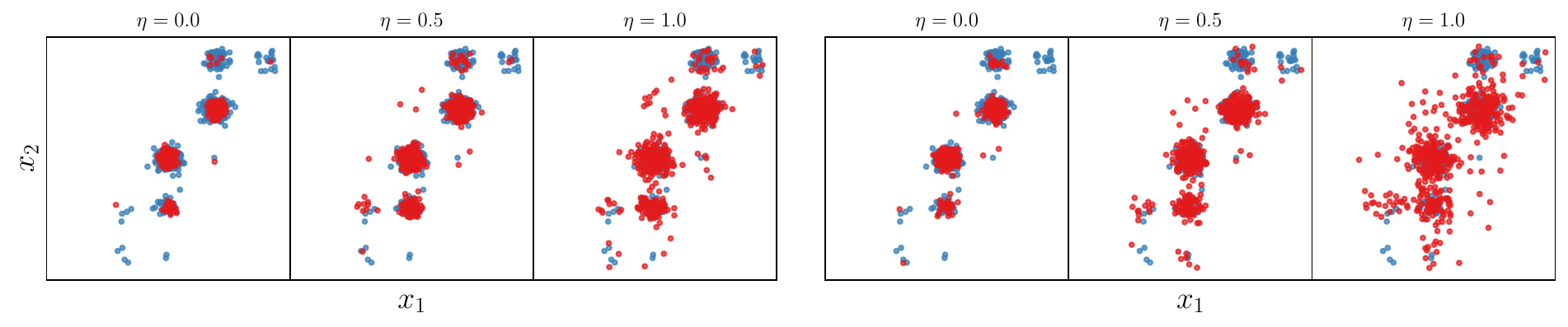}\vspace{-2ex}%
\caption{Samples from DDSMC from the GMM experiments ($d_x=800$ and $d_y=1$) using either Tweedie's formula (three left-most figures) or the solution of the PF-ODE (three right-most figures) as a reconstruction, and using different values of $\eta$ in the generalized DAPS prior. Blue samples are from the posterior, while red samples are from DDSMC. More examples can be found in \Cref{app:gmm}.}
\label{fig:ddsmc-gmm} 
\end{figure*}

\subsection{D3SMC -- A Discrete Version}
We also extend our DDSMC algorithm to a discrete setting which we call Discrete Decoupled Diffusion SMC (D3SMC). Using $\x$ to denote a one-hot encoding of a single variable, we target the case where the measurement can be described by a transition matrix $Q_y$ as
\begin{align}
    p(\y|\x_0) = \text{Categorical}(\rvp = \x_0 Q_y), \label{eq:discrete_meas}
\end{align}
and when the data consists of $D$ variables, the measurement factorizes over these variables (i.e., for each of the $D$ variables, we have a corresponding measurement). This setting includes both inpainting (a variable is observed with some probability, otherwise it is in an ``unknown'' state) and denoising (a variable randomly transitions into a different state). To tackle this problem, we explore the D3PM-uniform model \citep{austin_structured_2021} and derive a discrete analogy to DDSMC which uses the particular structure of D3PM. We elaborate on the D3PM model in general and our D3SMC algorithm in particular in \Cref{app:d3smc}.

\section{Related Work}
\label{sec:related_work}
\paragraph{SMC and Diffusion Models} 
The closest related SMC methods for Bayesian inverse problems with diffusion priors are the Twisted Diffusion Sampler (TDS) \citep{wu_practical_2023} and Monte Carlo Guided Diffusion (MCGDiff) \citep{cardoso_monte_2023-2}. TDS is a general approach for solving inverse problems, while MCGDiff specifically focuses on the linear-Gaussian setting. These methods differ in the choices of intermediate targets and proposals.
TDS makes use of the reconstruction network to approximate the likelihood at time $t$ as $p(\y \mid \x_0 = f_\theta(\x_{t}))$ and then add the score of this approximate likelihood as a drift in the transition kernel. This requires differentiating through the reconstruction model w.r.t. $\x_{t}$, which can incur a significant computational overhead. MCGDiff instead uses the forward diffusion model to push the observation $\y$ "forward in time". Specifically, they introduce a potential function at time $t$ which can be seen as a likelihood corresponding to the observation model $\hat \y_t = A\x_t$, where $\hat \y_t$ is a noised version (according to the forward model at time $t$) of the original observation $\y$. 
DDSMC differs from both of these methods, and is conceptually based on reconstructing $\x_0$ from the current state $\x_t$, performing an explicit conditioning on $\y$ at time $t=0$, and then pushing the \emph{posterior distribution} forward to time $t-1$ using the forward model.
We further elaborate on the differences between TDS, MCGDiff, and DDSMC in \Cref{app:smc_comparison}~(see also the discussion by \citet{Zhao2024rsta}). 

SMCDiff \citep{trippe_diffusion_2023} and FPS \citep{dou_diffusion_2023-1} are two other SMC algorithms that target posterior sampling with diffusion priors, but these rely on the assumption that the learned backward process is an exact reversal of the forward process, and are therefore not consistent in general. SMC has also been used as a type of \emph{discriminator} guidance \citep{kim_refining_2023-1} of diffusion models in both the continuous \citep{liu_correcting_2024} and discrete \citep{ekstrom_kelvinius_discriminator_2024} setting.

\begin{table*}[tb!]
\caption{Comparison of DDSMC with other methods in the Gaussian mixture setting. Numbers for DDSMC are the best numbers from \Cref{tab:gmm-table-ddsmc}. *All methods have been run for 20 different seeds, but TDS shows instability and crashes, meaning that their numbers are computed over fewer runs.}
\label{tab:gmm-table-other}
\vskip 0.15in
\begin{center}
\begin{small}
\begin{sc}
\begin{tabular}{cccccccc}
\toprule
$d_x$                & $d_y$ & DDSMC              & MCGDiff       &  TDS*            & DCPS           & DDRM         & DAPS  \\
\midrule
\multirow{3}{*}{8}   & 1     & $0.88\pm0.33$      & $1.79\pm0.54$ & $9.72\pm9.89$  & $2.51\pm1.02$  & $3.83\pm1.05$ & $5.63\pm0.90$ \\
                     & 2     & $0.34\pm0.20$      & $0.80\pm0.41$ & $5.23\pm2.57$   & $1.35\pm0.70$  & $2.25\pm0.92$ & $5.93\pm1.16$ \\
                     & 4     & $0.09\pm0.08$      & $0.26\pm0.21$ & $3.02\pm2.58$   & $0.44\pm0.25$  & $0.55\pm0.29$ & $4.85\pm1.34$ \\ \midrule
\multirow{3}{*}{80}  & 1     & $0.50\pm0.16$      & $1.06\pm0.36$ & $7.37\pm7.62$   & $1.19\pm0.41$  & $5.19\pm1.07$ & $6.85\pm1.16$ \\
                     & 2     & $0.37\pm0.20$      & $1.04\pm0.49$ & $2.63\pm1.40$   & $1.10\pm0.55$  & $5.62\pm1.09$ & $8.49\pm0.92$ \\
                     & 4     & $0.21\pm0.13$      & $0.80\pm0.42$ & $1.47\pm1.42$   & $0.56\pm0.24$  & $4.95\pm1.25$ & $9.04\pm0.74$ \\ \midrule
\multirow{3}{*}{800} & 1     & $1.37\pm0.43$      & $1.64\pm0.48$ & $2.45\pm0.96$   & $2.45\pm0.58$  & $7.15\pm1.11$ & $7.03\pm1.20$ \\
                     & 2     & $0.81\pm0.40$      & $1.29\pm0.66$ & $2.84\pm1.32$   & $2.80\pm0.91$  & $8.21\pm0.88$ & $8.31\pm1.01$ \\
                     & 4     & $0.16\pm0.10$      & $1.21\pm0.94$ & $3.58\pm3.18$   & $1.84\pm0.76$  & $8.66\pm0.87$ & $9.21\pm0.83$ \\
\bottomrule
\end{tabular}
\end{sc}
\end{small}
\end{center}
\vskip -0.1in
\end{table*}

\begin{figure*}[tb!]
        \centering
        \includegraphics[width=\textwidth]{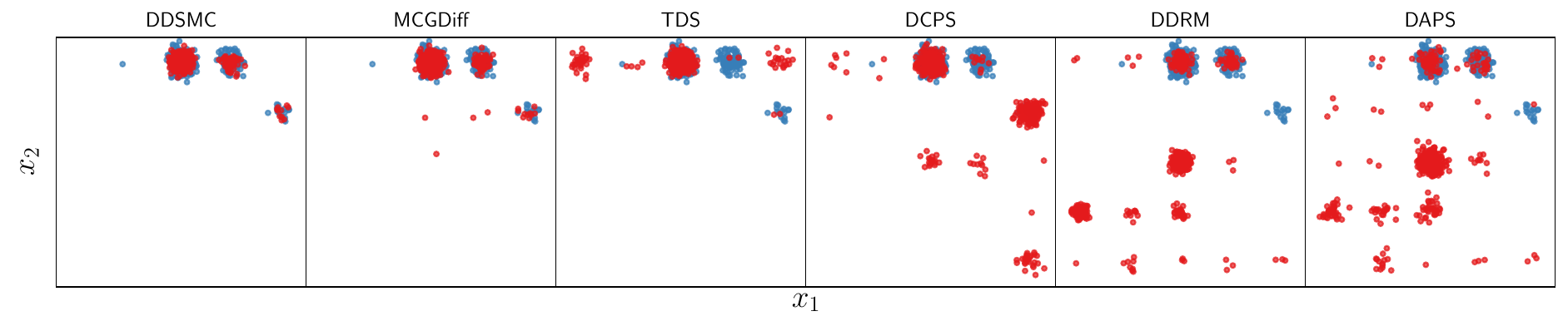}\vspace{-2ex}%
\caption{Qualitative comparison between DDSMC (using PF-ODE as reconstruction and $\eta=0$) and other methods on the GMM experiments, $d_x=800$ and $d_y=1$. More qualitative comparisons can be found in \Cref{app:gmm}}
\label{fig:gmm-comparison} 
\end{figure*}

\paragraph{Posterior Sampling with Diffusion Priors}
The closest non-SMC work to ours is of course DAPS \citep{zhang_improving_2024}, which we build upon, but also generalize in several ways. DDSMC is also related to $\Pi$GDM \citep{song2023pseudoinverseguided}, which introduces the Gaussian approximation used in \Cref{eq:tilde_likelihood,eq:x0_posterior_general}. %
Other proposed methods include DDRM \citep{kawar_denoising_2022} which defines a task-specific \emph{conditional} diffusion process which depends on a reconstruction $\recon(\xt)$, but where the optimal solution can be approximated with a model trained on the regular unconditional task. Recently, \citet{janati_divide-and-conquer_2024} proposed a method referred to as DCPS, which also builds on the notion of intermediate targets, but not within an SMC framework. Instead, to sample from these intermediate targets, they make use of Langevin sampling and a variational approximation that is optimized with stochastic gradient descent within each step of the generative model. 
All of these methods include various approximations, and contrary to DDSMC, none of them provides a consistent approximation of a given posterior. Other sampling methods include MCMC methods like Gibbs sampling \citep{coeurdoux_plug-and-play_2024,wu2024principled}.

\section{Experiments}
\subsection{Gaussian Mixture Model}
\label{sec:gmm_exp}
We first experiment on synthetic data, and use the Gaussian mixture model problem from \citet{cardoso_monte_2023-2}. Here, the prior on $\x$ is a Gaussian mixture, and both the posterior and score are therefore known on closed form\footnote{This includes using the DDPM \citep{ho_denoising_2020-2} or "Variance-preserving" formulation of a diffusion model. We have included a derivation of DDSMC for this setting in \Cref{app:ddsmc_ddpm}} (we give more details in \Cref{app:gmm}), enabling us to verify the efficiency of DDSMC while ablating all other errors. As a metric, we use the sliced Wasserstein distance \citep{flamary_pot_2021} between 10k samples from the true posterior and each sampling algorithm. 
\begin{figure*}[h]
    \centering
    \begin{subfigure}[t]{0.083\linewidth}
        \centering
        \includegraphics[width=\linewidth]{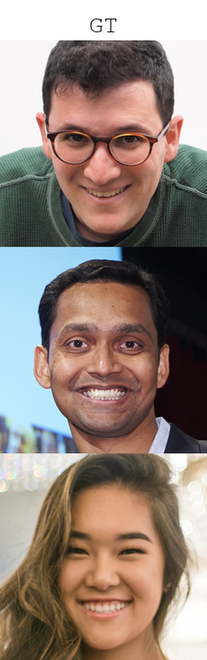}
    \end{subfigure}%
    \hfill
    \begin{subfigure}[t]{0.083\linewidth}
        \centering
        \includegraphics[width=\linewidth]{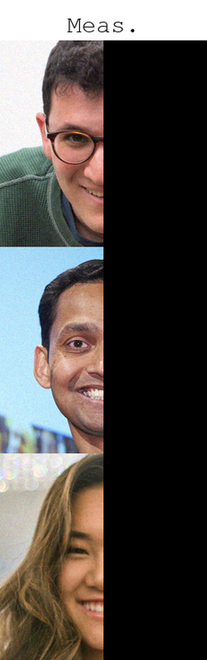}
    \end{subfigure}%
    \hfill
    \begin{subfigure}[t]{0.25\linewidth}
        \centering
        \includegraphics[width=\linewidth]{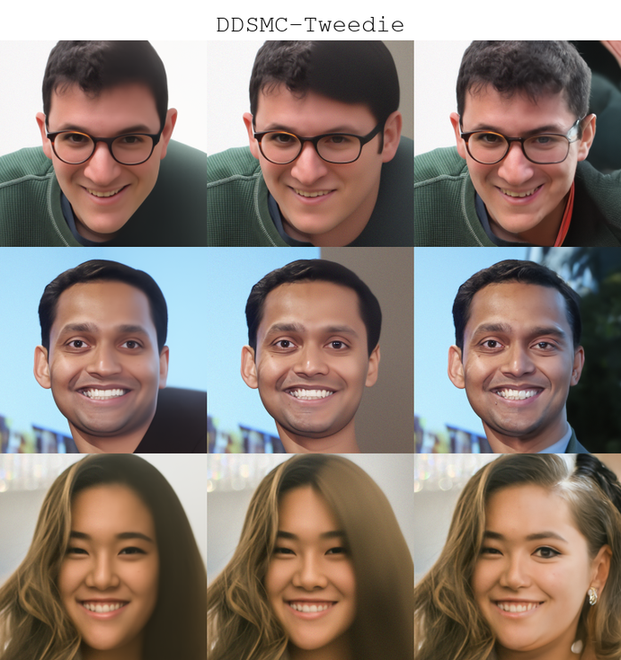}
    \end{subfigure}%
    \hfill
    \begin{subfigure}[t]{0.25\linewidth}
        \centering
        \includegraphics[width=\linewidth]{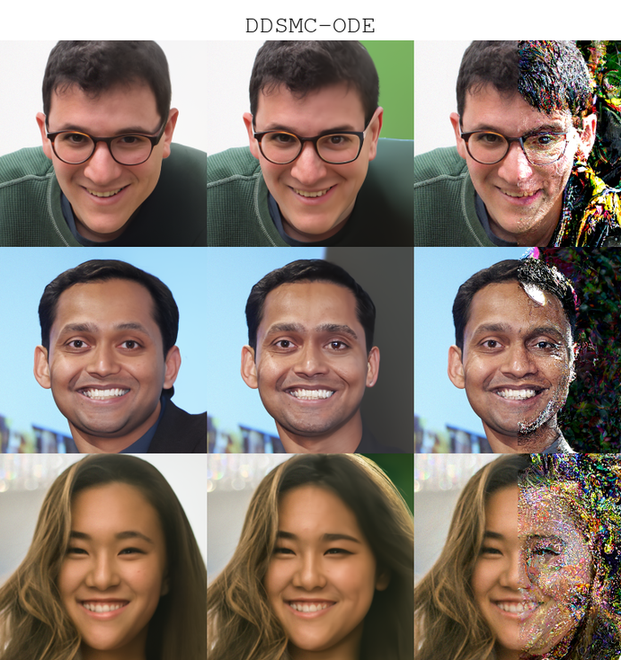}
    \end{subfigure}%
    \hfill
    \begin{subfigure}[t]{0.083\linewidth}
        \centering
        \includegraphics[width=\linewidth]{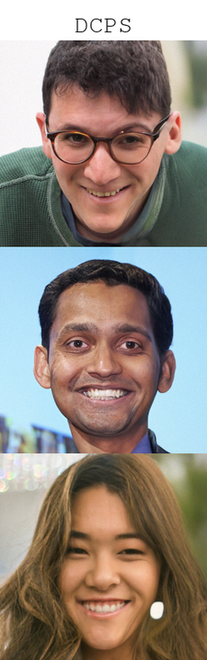}
    \end{subfigure}%
    \begin{subfigure}[t]{0.083\linewidth}
        \centering
        \includegraphics[width=\linewidth]{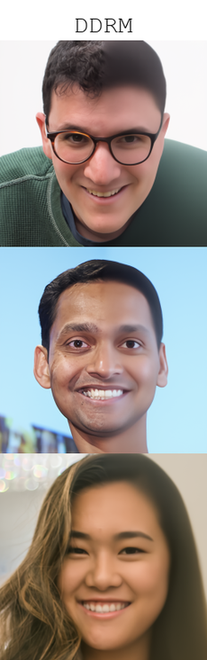}
    \end{subfigure}%
    \begin{subfigure}[t]{0.083\linewidth}
        \centering
        \includegraphics[width=\linewidth]{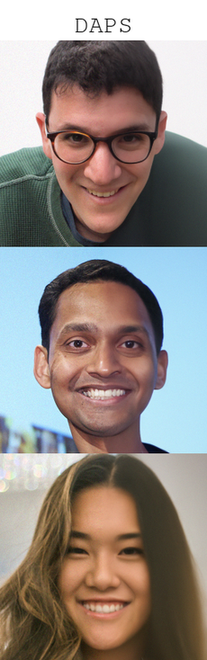}
    \end{subfigure}%
    \begin{subfigure}[t]{0.083\linewidth}
        \centering
        \includegraphics[width=\linewidth]{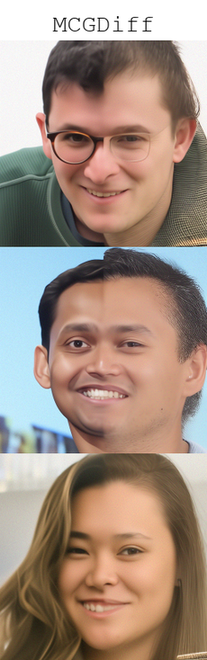}
    \end{subfigure}%
\caption{Examples of generated images for the outpainting task. The DDSMC samples are ordered with $\eta=0$ to the left, $\eta=0.5$ in the middle, and $\eta=1$ to the right. Ground truth images are FFHQ images ID 100, 103, and 110 (see \Cref{tab:ffhq-attribution} in the \suppmat for attribution).}
\label{fig:ffhq_main}
\end{figure*}

\begin{table}[tb!]
\caption{LPIPS results on FFHQ experiments. The noise level is $\sigma_y=0.05$, and the tasks are inpainting a box in the \textbf{middle}, outpainting right \textbf{half} of the image, and \textbf{s}uper-\textbf{r}esolution $4\times$. Numbers are averages over 1k images. \textbf{Lower is better.}}
\label{tab:image-table-lpips}
\vskip 0.15in
\begin{center}
\begin{small}
\begin{sc}
\begin{tabular}{clccc}
\toprule
&            & Middle & Half & SR4 \\
            \midrule
& DDRM        & 0.04 & 0.25 & 0.20 \\  %
& DCPS        & 0.03 & 0.20 & 0.10 \\
& DAPS        & 0.05 & 0.24 & 0.15 \\
& MCGDiff     & 0.10 & 0.34 & 0.15 \\
\midrule
\multirow{3}{*}{\rotatebox[origin]{0}{Tweedie}}
& DDSMC-$0.0$ & 0.07 & 0.26 & 0.27 \\ 
& DDSMC-$0.5$ & 0.07 & 0.27 & 0.20 \\
& DDSMC-$1.0$ & 0.05 & 0.24 & 0.14 \\
\midrule
\multirow{3}{*}{\rotatebox[origin]{0}{ODE}}
& DDSMC-$0.0$ & 0.05 & 0.23 & 0.21 \\ 
& DDSMC-$0.5$ & 0.05 & 0.23 & 0.15 \\
& DDSMC-$1.0$ & 0.08 & 0.4 & 0.36  \\
\bottomrule

\end{tabular}
\end{sc}
\end{small}
\end{center}
\vskip -0.1in
\end{table}

We start with investigating the influence of $\eta$ and the reconstruction function $\recon$. 
We run DDSMC with $N=256$ particles, and use $T=20$ steps in the generative process. 
As a reconstruction, we compare Tweedie's formula and the DDIM ODE solver \citep{song_denoising_2021}, see \Cref{eq:ddim_ode} in the \suppmat, where we solve the ODE using as many steps as there are ``left" in the diffusion process. In \Cref{tab:gmm-table-ddsmc}, we see that using Tweedie's reconstruction requires a larger value of $\eta$. This can be explained by the fact that $\eta=0$ (DAPS prior) requires exact samples from $p_\theta(\x_0|\xt)$ for the DAPS prior to agree with the original prior (see \Cref{app:proofs}), and this distribution is more closely approximated using the ODE reconstruction. For higher dimensions (i.e., $d_x=800$) and using $\eta >0$, Tweedie's formula works slightly better than the ODE solver. Qualitatively, we find that using a smaller $\eta$ and/or using Tweedie's reconstruction tends to concentrate the samples around the different modes, see \Cref{fig:ddsmc-gmm} for an example of $d_x=800$ and $d_y=1$ (additional examples in \Cref{app:gmm}). This plot also shows how, in high dimensions, using a smaller $\eta$ is preferable, which could be attributed to lower $\eta$ enabling larger updates necessary in higher dimensions. It hence seems to be an interplay between the data dimensionality, the inverse temperature $\eta$, and the reconstruction function.

Next, we compare DDSMC with other methods, both SMC-based (MCGDiff and TDS) and non-SMC-based (DDRM, DCPS, DAPS). For all SMC methods, we use 256 particles, and accelerated sampling according to DDIM \citep{song_denoising_2021} with 20 steps, which we also used for DAPS to enabling evaluating the benefits of our generalization of the DAPS method without confounding the results with the effect of common hyperparameters. For DDRM and DCPS, however, we used \thsnd{1} steps. More details are available in \Cref{app:gmm}. We see quantitatively in \Cref{tab:gmm-table-other} that DDSMC outperforms all other methods, even when using Tweedie's reconstruction (which requires the same amount of compute as MCGDiff). We provide additional results with fewer particles in the \suppmat. As our \emph{proposal} is essentially equal to DAPS (if using $\lambda^2 = \sigma^2$ in \Cref{eq:ddsmc_proposal}), running DDSMC with \emph{a single particle} is essentially equal to DAPS (we verify this empirically in \Cref{app:gmm-single-particle}). Therefore, we can conclude that it is indeed the introduction of SMC (multiple particles and resampling) that contributes to the improved performance over DAPS. A qualitative inspection of generated samples in \Cref{fig:gmm-comparison} shows how DDSMC is the most resistant method to sampling from spurious modes, while DAPS struggles to sample from the posterior. These are general trends we see when repeating with different seeds, see \Cref{app:gmm}.

\subsection{Image Restoration}
\begin{figure*}[tb!]
\includegraphics[width=\textwidth]{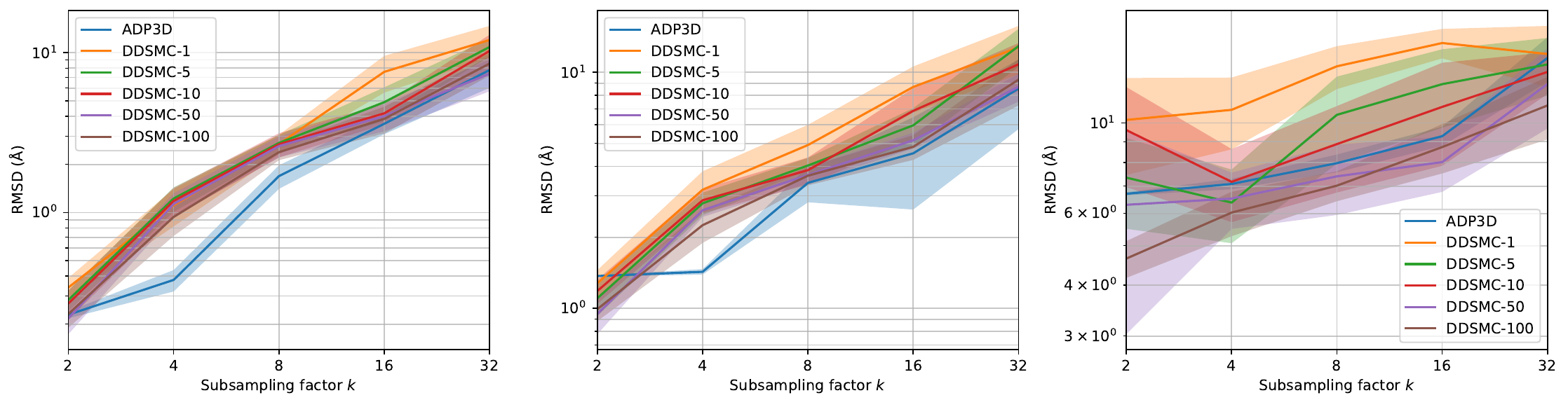}
\caption{RMSD vs subsampling factor $k$ for the protein structure completion problem on the ATAD2 protein \cite{davison_mapping_2022} (PDB identifier \texttt{7qum} \cite{Turberville2023-vs}) and different noise levels $\sigma$: $0$ (left), $0.1$ (middle), $0.5$ (right). Comparing ADP-3D \citep{levy_solving_2024} and DDSMC with different number of particles.}
\label{fig:protein_main}
\end{figure*}

We now turn our attention to the problem of image restoration, and use our method for inpainting, outpainting, and super-resolution on the FFHQ \citep{karras_style-based_2019} dataset (downsampled to $256\times256$), using a pretrained model by \citet{chung_diffusion_2023}. We implement DDSMC in the codebase of DCPS\footnote{\url{https://github.com/Badr-MOUFAD/dcps/}}, and use 1k images from the validation set and compute LPIPS \citep{zhang_unreasonable_2018} as a quantitative metric (see PSNR results in \Cref{app:image}). The results can be found in \Cref{tab:image-table-lpips}, in addition to numbers for DDRM, DCPS, DAPS, and MCGDiff. Standard deviations can be found in \Cref{app:image}. We used 5 particles for our method, and when using the PF-ODE as reconstruction, we used 5 ODE steps. Further implementation details are available in \Cref{app:image}. It should be noted that LPIPS measures perceptual similarity between the sampled image and the ground truth, which is not the same as a measurement of how well the method samples from the true posterior. Specifically, it does not say anything about diversity of samples or how well the model captures the posterior uncertainty, which we could verify in the GMM experiment. Nevertheless, the numbers indicate that image reconstruction is on par with previous methods. Notably, MCGDiff, which is also an SMC method and have the same asymptotic guarantees in terms of posterior sampling, performs similar or worse to DDSMC. 

Inspecting the generated images, we can see a visible effect when altering the reconstruction function and $\eta$, see \Cref{fig:ffhq_main} and more examples in \Cref{app:image}. It seems like increasing $\eta$ and/or using the PF-ODE as a reconstruction function adds more details to the image. With the GMM experiments in mind, where we saw that lower $\eta$ and using Tweedie's formula as reconstruction function made samples more concentrated around the modes, it can be argued that there is a similar effect here: using Tweedie's formula and lower $\eta$ lead to sampled images closer to a ``mode'' of images, meaning details are averaged out. On the other hand, changing to ODE-reconstruction and increasing $\eta$ further away from the mode, which corresponds to more details in the images. For the case of ODE and $\eta=1.0$, we see clear artifacts which explains the poor quantitative results.

\subsection{Protein Structure Completion}
\label{sec:protein_exp}
As a different type of problem, we test DDMSC on the protein structure completion example from \citet{levy_solving_2024}, where we try to predict the coordinates of all heavy atoms in the backbone, observing coordinates of all 4 heavy atoms in the backbone of every $k$ residue. More details are in \Cref{app:protein_details}. Data used is from the Protein Data Bank\footnote{\url{www.rcsb.org/}, \url{www.wwpdb.org/}} \cite{berman_protein_2000,berman_announcing_2003}. \Cref{fig:protein_main} shows an example of the RMSD for different noise levels $\sigma$ and subsampling factors $k$. Especially on high noise levels, we are close to or better than ADP-3D \citep{levy_solving_2024}, and we also see clear effects of introducing more particles, indicating the effectiveness of introducing the SMC aspect. Experiment details and more results are in \Cref{app:protein_results}.

\subsection{Discrete Data}\begin{figure}[tb!]
        \centering
        \includegraphics[width=\columnwidth]{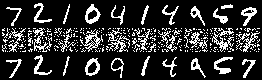}
\caption{\label{fig:bmnist} Qualitative results on binary MNIST using the discrete counterpart of DDSMC, D3SMC. Top is the ground truth, middle is measurement, and bottom is sampled image. More results can be found in \Cref{app:bmnist}}
\end{figure}

As a proof of concept of our discrete algorithm D3SMC (see detailed description in \Cref{app:d3smc}), we try denoising on binarized MNIST \cite{deng_mnist_2012} (i.e., each pixel is either 0 or 1), cropped to $24 \times 24$ pixels to remove padding. As a backbone neural network, we used a U-Net \citep{ronneberger_u-net_2015} trained with cross-entropy loss. We use a measurement model $\y = \x Q_y$ with $Q_y = (1-\beta_y) I + \beta_y \mathbbm{1}\mathbbm{1}^T/d$ and $\beta_y=0.6$, i.e., the observed pixel has the same value as the original image with probability $0.7$, and opposite with probability $0.3$. We present qualitative results for D3SMC with $N=5$ particles in \Cref{fig:bmnist}, and more qualitative results can be found in \Cref{app:bmnist}. While the digits are often recovered, there are cases when they are not (e.g., a 4 becoming a 9). Looking at multiple draws in the \suppmat, it seems the model in such cases can sample different digits, suggesting a multi-modal nature of the posterior. 

\section{Discussion \& Conclusion}
We have designed an SMC algorithm, DDSMC, for posterior sampling with diffusion priors which we also extended to discrete data. The method is based on decoupled diffusion, which we generalize by introducing a hyperparameter that allows bridging between full decoupling and standard diffusion (no decoupling). We demonstrate the superior performance of DDSMC compared to the state-of-the-art on a synthetic problem, which enables a quantitative evaluation of how well the different methods approximate the true posterior. We additionally test DDSMC on image reconstruction and protein structure completion where it performs on par with previous methods. In the image case, it performs slightly better than the alternative SMC method MCGDiff, while underperforming compared to DCPS. However, while LPIPS indeed is a useful metric for evaluating image reconstruction, it does not inherently capture posterior approximation quality, which is the aim of our method. We found that the methods performing particularly well on image reconstruction struggle on the synthetic task, highlighting a gap between perceived image quality and ability to approximate the true posterior. As DDSMC still performs on par with previous work on images and proteins, and at the same time showing excellent performance on the GMM task, we view DDSMC as a promising method for bridging between solving challenging high-dimensional inverse problems and maintaining exact posterior sampling capabilities.

\section*{Acknowledgments}
This research was supported by 
Swedish Research Council (VR) grant no. 2020-04122, 2024-05011,
the Wallenberg AI, Autonomous Systems and Software Program (WASP) funded by the Knut and Alice Wallenberg Foundation,
and
the Excellence Center at Linköping--Lund in Information Technology (ELLIIT)
.
Computations were enabled by the Berzelius resource provided by the Knut and Alice Wallenberg Foundation at the National Supercomputer Centre.

\section*{Impact Statement}

This paper presents work whose goal is to advance the field of Machine Learning. There are many potential societal consequences of our work, none which we feel must be specifically highlighted here.

\bibliography{ddsmc_references}

\begin{thebibliography}{55}
\providecommand{\natexlab}[1]{#1}
\providecommand{\url}[1]{\texttt{#1}}
\expandafter\ifx\csname urlstyle\endcsname\relax
  \providecommand{\doi}[1]{doi: #1}\else
  \providecommand{\doi}{doi: \begingroup \urlstyle{rm}\Url}\fi

\bibitem[Austin et~al.(2021)Austin, Johnson, Ho, Tarlow, and van~den Berg]{austin_structured_2021}
Austin, J., Johnson, D.~D., Ho, J., Tarlow, D., and van~den Berg, R.
\newblock Structured {{Denoising Diffusion Models}} in {{Discrete State-Spaces}}.
\newblock In \emph{Advances in {{Neural Information Processing Systems}}}, November 2021.

\bibitem[Berman et~al.(2003)Berman, Henrick, and Nakamura]{berman_announcing_2003}
Berman, H., Henrick, K., and Nakamura, H.
\newblock Announcing the worldwide {{Protein Data Bank}}.
\newblock \emph{Nature Structural \& Molecular Biology}, 10\penalty0 (12):\penalty0 980--980, December 2003.
\newblock ISSN 1545-9985.
\newblock \doi{10.1038/nsb1203-980}.

\bibitem[Berman et~al.(2000)Berman, Westbrook, Feng, Gilliland, Bhat, Weissig, Shindyalov, and Bourne]{berman_protein_2000}
Berman, H.~M., Westbrook, J., Feng, Z., Gilliland, G., Bhat, T.~N., Weissig, H., Shindyalov, I.~N., and Bourne, P.~E.
\newblock The {{Protein Data Bank}}.
\newblock \emph{Nucleic Acids Research}, 28\penalty0 (1):\penalty0 235--242, January 2000.
\newblock ISSN 0305-1048.
\newblock \doi{10.1093/nar/28.1.235}.

\bibitem[Campbell et~al.(2022)Campbell, Benton, Bortoli, Rainforth, Deligiannidis, and Doucet]{campbell_continuous_2022-1}
Campbell, A., Benton, J., Bortoli, V.~D., Rainforth, T., Deligiannidis, G., and Doucet, A.
\newblock A {{Continuous Time Framework}} for {{Discrete Denoising Models}}.
\newblock In \emph{Advances in {{Neural Information Processing Systems}}}, May 2022.

\bibitem[Cardoso et~al.(2024)Cardoso, el~Idrissi, Corff, and Moulines]{cardoso_monte_2023-2}
Cardoso, G., el~Idrissi, Y.~J., Corff, S.~L., and Moulines, E.
\newblock Monte {{Carlo}} guided {{Denoising Diffusion}} models for {{Bayesian}} linear inverse problems.
\newblock In \emph{The {{Twelfth International Conference}} on {{Learning Representations}}}, 2024.

\bibitem[Chen et~al.(2020)Chen, Zhang, Zen, Weiss, Norouzi, and Chan]{chen_wavegrad_2020}
Chen, N., Zhang, Y., Zen, H., Weiss, R.~J., Norouzi, M., and Chan, W.
\newblock {{WaveGrad}}: {{Estimating Gradients}} for {{Waveform Generation}}, October 2020.

\bibitem[Chung et~al.(2023)Chung, Kim, Mccann, Klasky, and Ye]{chung_diffusion_2023}
Chung, H., Kim, J., Mccann, M.~T., Klasky, M.~L., and Ye, J.~C.
\newblock Diffusion {{Posterior Sampling}} for {{General Noisy Inverse Problems}}.
\newblock In \emph{The {{Eleventh International Conference}} on {{Learning Representations}}}, 2023.

\bibitem[Coeurdoux et~al.(2024)Coeurdoux, Dobigeon, and Chainais]{coeurdoux_plug-and-play_2024}
Coeurdoux, F., Dobigeon, N., and Chainais, P.
\newblock Plug-and-{{Play Split Gibbs Sampler}}: {{Embedding Deep Generative Priors}} in {{Bayesian Inference}}.
\newblock \emph{IEEE Transactions on Image Processing}, 33:\penalty0 3496--3507, 2024.
\newblock ISSN 1941-0042.
\newblock \doi{10.1109/TIP.2024.3404338}.

\bibitem[Corso et~al.(2023)Corso, St{\"a}rk, Jing, Barzilay, and Jaakkola]{corso_diffdock_2023}
Corso, G., St{\"a}rk, H., Jing, B., Barzilay, R., and Jaakkola, T.~S.
\newblock {{DiffDock}}: {{Diffusion Steps}}, {{Twists}}, and {{Turns}} for {{Molecular Docking}}.
\newblock In \emph{The {{Eleventh International Conference}} on {{Learning Representations}}}, 2023.

\bibitem[Cruz et~al.(2022)Cruz, Cristy, Guha, De~Cesare, Evdokimova, Sanchez, Borek, Miram{\'o}n, Yano, Fidel, Savchenko, Andes, Stogios, Lorenz, and Garsin]{cruz_structural_2022}
Cruz, M.~R., Cristy, S., Guha, S., De~Cesare, G.~B., Evdokimova, E., Sanchez, H., Borek, D., Miram{\'o}n, P., Yano, J., Fidel, P.~L., Savchenko, A., Andes, D.~R., Stogios, P.~J., Lorenz, M.~C., and Garsin, D.~A.
\newblock Structural and functional analysis of {{EntV}} reveals a 12 amino acid fragment protective against fungal infections.
\newblock \emph{Nature Communications}, 13\penalty0 (1):\penalty0 6047, October 2022.
\newblock ISSN 2041-1723.
\newblock \doi{10.1038/s41467-022-33613-1}.

\bibitem[Cuthbert \& Goulding(2023)Cuthbert and Goulding]{Cuthbert2023-pt}
Cuthbert, B.~J. and Goulding, C.~W.
\newblock Mycobacterium thermoresistible {{MmpS5}}, September 2023.

\bibitem[Cuthbert et~al.(2024)Cuthbert, Mendoza, {de Miranda}, Papavinasasundaram, Sassetti, and Goulding]{cuthbert_structure_2024}
Cuthbert, B.~J., Mendoza, J., {de Miranda}, R., Papavinasasundaram, K., Sassetti, C.~M., and Goulding, C.~W.
\newblock The structure of {{Mycobacterium}} thermoresistibile {{MmpS5}} reveals a conserved disulfide bond across mycobacteria.
\newblock \emph{Metallomics}, 16\penalty0 (3):\penalty0 mfae011, March 2024.
\newblock ISSN 1756-5901.
\newblock \doi{10.1093/mtomcs/mfae011}.

\bibitem[Davison et~al.(2022)Davison, Martin, Turberville, Dormen, Heath, Heptinstall, Lawson, Miller, Ng, Sanderson, Hope, Wood, Cano, Endicott, Hardcastle, Noble, and Waring]{davison_mapping_2022}
Davison, G., Martin, M.~P., Turberville, S., Dormen, S., Heath, R., Heptinstall, A.~B., Lawson, M., Miller, D.~C., Ng, Y.~M., Sanderson, J.~N., Hope, I., Wood, D.~J., Cano, C., Endicott, J.~A., Hardcastle, I.~R., Noble, M. E.~M., and Waring, M.~J.
\newblock Mapping {{Ligand Interactions}} of {{Bromodomains BRD4}} and {{ATAD2}} with {{FragLites}} and {{PepLites}}-{{Halogenated Probes}} of {{Druglike}} and {{Peptide-like Molecular Interactions}}.
\newblock \emph{Journal of Medicinal Chemistry}, 65\penalty0 (22):\penalty0 15416--15432, November 2022.
\newblock ISSN 0022-2623.
\newblock \doi{10.1021/acs.jmedchem.2c01357}.

\bibitem[Del~Moral(2004)]{del_moral_feynman-kac_2004}
Del~Moral, P.
\newblock \emph{Feynman-{{Kac Formulae}}}.
\newblock Probability and Its {{Applications}}. Springer, New York, NY, 2004.
\newblock ISBN 978-1-4419-1902-1 978-1-4684-9393-1.
\newblock \doi{10.1007/978-1-4684-9393-1}.

\bibitem[Deng(2012)]{deng_mnist_2012}
Deng, L.
\newblock The {{MNIST Database}} of {{Handwritten Digit Images}} for {{Machine Learning Research}} [{{Best}} of the {{Web}}].
\newblock \emph{IEEE Signal Processing Magazine}, 29\penalty0 (6):\penalty0 141--142, November 2012.
\newblock ISSN 1558-0792.
\newblock \doi{10.1109/MSP.2012.2211477}.

\bibitem[Dhariwal \& Nichol(2021)Dhariwal and Nichol]{dhariwal_diffusion_2021-1}
Dhariwal, P. and Nichol, A.
\newblock Diffusion {{Models Beat GANs}} on {{Image Synthesis}}.
\newblock In \emph{Advances in {{Neural Information Processing Systems}}}, volume~34, pp.\  8780--8794. Curran Associates, Inc., 2021.

\bibitem[Dieleman et~al.(2022)Dieleman, Sartran, Roshannai, Savinov, Ganin, Richemond, Doucet, Strudel, Dyer, Durkan, Hawthorne, Leblond, Grathwohl, and Adler]{dieleman_continuous_2022}
Dieleman, S., Sartran, L., Roshannai, A., Savinov, N., Ganin, Y., Richemond, P.~H., Doucet, A., Strudel, R., Dyer, C., Durkan, C., Hawthorne, C., Leblond, R., Grathwohl, W., and Adler, J.
\newblock Continuous diffusion for categorical data, December 2022.

\bibitem[Dix et~al.(2022)Dix, Aziz, Baker, Evans, Dickman, Farthing, King, Nathan, Partridge, Raih, Sedelnikova, Thomas, and Rice]{Dix2022-aa}
Dix, S.~R., Aziz, A.~A., Baker, P.~J., Evans, C.~A., Dickman, M.~J., Farthing, R.~J., King, Z. L.~S., Nathan, S., Partridge, L.~J., Raih, F.~M., Sedelnikova, S.~E., Thomas, M.~S., and Rice, D.~W.
\newblock Structure of the bacterial toxin, {{TecA}}, an asparagine deamidase from {{Alcaligenes}} faecalis, November 2022.

\bibitem[Dou \& Song(2023)Dou and Song]{dou_diffusion_2023-1}
Dou, Z. and Song, Y.
\newblock Diffusion {{Posterior Sampling}} for {{Linear Inverse Problem Solving}}: {{A Filtering Perspective}}.
\newblock In \emph{The {{Twelfth International Conference}} on {{Learning Representations}}}, October 2023.

\bibitem[Ekstr{\"o}m~Kelvinius \& Lindsten(2024)Ekstr{\"o}m~Kelvinius and Lindsten]{ekstrom_kelvinius_discriminator_2024}
Ekstr{\"o}m~Kelvinius, F. and Lindsten, F.
\newblock Discriminator {{Guidance}} for {{Autoregressive Diffusion Models}}.
\newblock In \emph{Proceedings of {{The}} 27th {{International Conference}} on {{Artificial Intelligence}} and {{Statistics}}}, pp.\  3403--3411. PMLR, April 2024.

\bibitem[Flamary et~al.(2021)Flamary, Courty, Gramfort, Alaya, Boisbunon, Chambon, Chapel, Corenflos, Fatras, Fournier, Gautheron, Gayraud, Janati, Rakotomamonjy, Redko, Rolet, Schutz, Seguy, Sutherland, Tavenard, Tong, and Vayer]{flamary_pot_2021}
Flamary, R., Courty, N., Gramfort, A., Alaya, M.~Z., Boisbunon, A., Chambon, S., Chapel, L., Corenflos, A., Fatras, K., Fournier, N., Gautheron, L., Gayraud, N. T.~H., Janati, H., Rakotomamonjy, A., Redko, I., Rolet, A., Schutz, A., Seguy, V., Sutherland, D.~J., Tavenard, R., Tong, A., and Vayer, T.
\newblock {{POT}}: {{Python Optimal Transport}}.
\newblock \emph{Journal of Machine Learning Research}, 22\penalty0 (78):\penalty0 1--8, 2021.
\newblock ISSN 1533-7928.

\bibitem[Ho \& Salimans(2022)Ho and Salimans]{ho_classifier-free_2022}
Ho, J. and Salimans, T.
\newblock Classifier-{{Free Diffusion Guidance}}, July 2022.

\bibitem[Ho et~al.(2020)Ho, Jain, and Abbeel]{ho_denoising_2020-2}
Ho, J., Jain, A., and Abbeel, P.
\newblock Denoising {{Diffusion Probabilistic Models}}.
\newblock In \emph{Advances in {{Neural Information Processing Systems}}}, volume~33, pp.\  6840--6851. Curran Associates, Inc., 2020.

\bibitem[Hoogeboom et~al.(2022)Hoogeboom, Satorras, Vignac, and Welling]{hoogeboom_equivariant_2022}
Hoogeboom, E., Satorras, V.~G., Vignac, C., and Welling, M.
\newblock Equivariant {{Diffusion}} for {{Molecule Generation}} in {{3D}}.
\newblock In \emph{Proceedings of the 39th {{International Conference}} on {{Machine Learning}}}, pp.\  8867--8887. PMLR, June 2022.

\bibitem[Huegle(2023)]{Huegle2023-he}
Huegle, M.
\newblock Crystal structure of {{BRD4}}(1) in complex with the inhibitor {{MPM2}}, February 2023.

\bibitem[Ingraham et~al.(2023)Ingraham, Baranov, Costello, Barber, Wang, Ismail, Frappier, Lord, {Ng-Thow-Hing}, Van~Vlack, Tie, Xue, Cowles, Leung, Rodrigues, {Morales-Perez}, Ayoub, Green, Puentes, Oplinger, Panwar, Obermeyer, Root, Beam, Poelwijk, and Grigoryan]{ingraham_illuminating_2023}
Ingraham, J.~B., Baranov, M., Costello, Z., Barber, K.~W., Wang, W., Ismail, A., Frappier, V., Lord, D.~M., {Ng-Thow-Hing}, C., Van~Vlack, E.~R., Tie, S., Xue, V., Cowles, S.~C., Leung, A., Rodrigues, J.~V., {Morales-Perez}, C.~L., Ayoub, A.~M., Green, R., Puentes, K., Oplinger, F., Panwar, N.~V., Obermeyer, F., Root, A.~R., Beam, A.~L., Poelwijk, F.~J., and Grigoryan, G.
\newblock Illuminating protein space with a programmable generative model.
\newblock \emph{Nature}, 623\penalty0 (7989):\penalty0 1070--1078, November 2023.
\newblock ISSN 1476-4687.
\newblock \doi{10.1038/s41586-023-06728-8}.

\bibitem[Janati et~al.(2024)Janati, Moufad, Durmus, Moulines, and Olsson]{janati_divide-and-conquer_2024}
Janati, Y., Moufad, B., Durmus, A.~O., Moulines, E., and Olsson, J.
\newblock Divide-and-{{Conquer Posterior Sampling}} for {{Denoising Diffusion}} priors.
\newblock In \emph{The {{Thirty-eighth Annual Conference}} on {{Neural Information Processing Systems}}}, November 2024.

\bibitem[Karras et~al.(2019)Karras, Laine, and Aila]{karras_style-based_2019}
Karras, T., Laine, S., and Aila, T.
\newblock A {{Style-Based Generator Architecture}} for {{Generative Adversarial Networks}}.
\newblock In \emph{2019 {{IEEE}}/{{CVF Conference}} on {{Computer Vision}} and {{Pattern Recognition}} ({{CVPR}})}, pp.\  4396--4405, June 2019.
\newblock \doi{10.1109/CVPR.2019.00453}.

\bibitem[Karras et~al.(2022)Karras, Aittala, Aila, and Laine]{karras_elucidating_2022-1}
Karras, T., Aittala, M., Aila, T., and Laine, S.
\newblock Elucidating the {{Design Space}} of {{Diffusion-Based Generative Models}}.
\newblock \emph{Advances in Neural Information Processing Systems}, 35:\penalty0 26565--26577, December 2022.

\bibitem[Kawar et~al.(2021)Kawar, Vaksman, and Elad]{kawar_snips_2021}
Kawar, B., Vaksman, G., and Elad, M.
\newblock {{SNIPS}}: {{Solving Noisy Inverse Problems Stochastically}}.
\newblock In \emph{Advances in {{Neural Information Processing Systems}}}, volume~34, pp.\  21757--21769. Curran Associates, Inc., 2021.

\bibitem[Kawar et~al.(2022)Kawar, Elad, Ermon, and Song]{kawar_denoising_2022}
Kawar, B., Elad, M., Ermon, S., and Song, J.
\newblock Denoising {{Diffusion Restoration Models}}.
\newblock \emph{Advances in Neural Information Processing Systems}, 35:\penalty0 23593--23606, December 2022.

\bibitem[Kim et~al.(2023)Kim, Kim, Kwon, Kang, and Moon]{kim_refining_2023-1}
Kim, D., Kim, Y., Kwon, S.~J., Kang, W., and Moon, I.-C.
\newblock Refining {{Generative Process}} with {{Discriminator Guidance}} in {{Score-based Diffusion Models}}.
\newblock In \emph{Proceedings of the 40th {{International Conference}} on {{Machine Learning}}}, pp.\  16567--16598. PMLR, July 2023.

\bibitem[Kong et~al.(2021)Kong, Ping, Huang, Zhao, and Catanzaro]{kong_diffwave_2021}
Kong, Z., Ping, W., Huang, J., Zhao, K., and Catanzaro, B.
\newblock {{DiffWave}}: {{A Versatile Diffusion Model}} for {{Audio Synthesis}}.
\newblock In \emph{International {{Conference}} on {{Learning Representations}}}, 2021.

\bibitem[Levy et~al.(2024)Levy, Chan, {Fridovich-Keil}, Poitevin, Zhong, and Wetzstein]{levy_solving_2024}
Levy, A., Chan, E.~R., {Fridovich-Keil}, S., Poitevin, F., Zhong, E.~D., and Wetzstein, G.
\newblock Solving {{Inverse Problems}} in {{Protein Space Using Diffusion-Based Priors}}, June 2024.

\bibitem[Liu et~al.(2024)Liu, Zhang, Jaakkola, and Chang]{liu_correcting_2024}
Liu, Y., Zhang, Y., Jaakkola, T., and Chang, S.
\newblock Correcting {{Diffusion Generation}} through {{Resampling}}.
\newblock In \emph{Proceedings of the {{IEEE}}/{{CVF Conference}} on {{Computer Vision}} and {{Pattern Recognition}}}, pp.\  8713--8723, 2024.

\bibitem[Naesseth et~al.(2019)Naesseth, Lindsten, and Sch{\"o}n]{NaessethLS:2019a}
Naesseth, C.~A., Lindsten, F., and Sch{\"o}n, T.~B.
\newblock Elements of sequential {{Monte Carlo}}.
\newblock \emph{Foundations and Trends in Machine Learning}, 12\penalty0 (3):\penalty0 307--392, 2019.
\newblock \doi{10.1561/2200000074}.

\bibitem[Rombach et~al.(2022)Rombach, Blattmann, Lorenz, Esser, and Ommer]{rombach_high-resolution_2022}
Rombach, R., Blattmann, A., Lorenz, D., Esser, P., and Ommer, B.
\newblock High-{{Resolution Image Synthesis With Latent Diffusion Models}}.
\newblock In \emph{Proceedings of the {{IEEE}}/{{CVF Conference}} on {{Computer Vision}} and {{Pattern Recognition}}}, pp.\  10684--10695, 2022.

\bibitem[Ronneberger et~al.(2015)Ronneberger, Fischer, and Brox]{ronneberger_u-net_2015}
Ronneberger, O., Fischer, P., and Brox, T.
\newblock U-{{Net}}: {{Convolutional Networks}} for {{Biomedical Image Segmentation}}.
\newblock In Navab, N., Hornegger, J., Wells, W.~M., and Frangi, A.~F. (eds.), \emph{Medical {{Image Computing}} and {{Computer-Assisted Intervention}} -- {{MICCAI}} 2015}, pp.\  234--241, Cham, 2015. Springer International Publishing.
\newblock ISBN 978-3-319-24574-4.
\newblock \doi{10.1007/978-3-319-24574-4_28}.

\bibitem[Saharia et~al.(2022)Saharia, Chan, Saxena, Li, Whang, Denton, Ghasemipour, Gontijo~Lopes, Karagol~Ayan, Salimans, Ho, Fleet, and Norouzi]{saharia_photorealistic_2022}
Saharia, C., Chan, W., Saxena, S., Li, L., Whang, J., Denton, E.~L., Ghasemipour, K., Gontijo~Lopes, R., Karagol~Ayan, B., Salimans, T., Ho, J., Fleet, D.~J., and Norouzi, M.
\newblock Photorealistic {{Text-to-Image Diffusion Models}} with {{Deep Language Understanding}}.
\newblock \emph{Advances in Neural Information Processing Systems}, 35:\penalty0 36479--36494, December 2022.

\bibitem[{Sohl-Dickstein} et~al.(2015){Sohl-Dickstein}, Weiss, Maheswaranathan, and Ganguli]{sohl-dickstein_deep_2015}
{Sohl-Dickstein}, J., Weiss, E., Maheswaranathan, N., and Ganguli, S.
\newblock Deep {{Unsupervised Learning}} using {{Nonequilibrium Thermodynamics}}.
\newblock In \emph{Proceedings of the 32nd {{International Conference}} on {{Machine Learning}}}, pp.\  2256--2265. PMLR, June 2015.

\bibitem[Song et~al.(2021{\natexlab{a}})Song, Meng, and Ermon]{song_denoising_2021}
Song, J., Meng, C., and Ermon, S.
\newblock Denoising {{Diffusion Implicit Models}}.
\newblock In \emph{International {{Conference}} on {{Learning Representations}}}, 2021{\natexlab{a}}.

\bibitem[Song et~al.(2023)Song, Vahdat, Mardani, and Kautz]{song2023pseudoinverseguided}
Song, J., Vahdat, A., Mardani, M., and Kautz, J.
\newblock Pseudoinverse-guided diffusion models for inverse problems.
\newblock In \emph{International Conference on Learning Representations}, 2023.

\bibitem[Song et~al.(2021{\natexlab{b}})Song, {Sohl-Dickstein}, Kingma, Kumar, Ermon, and Poole]{song_score-based_2021}
Song, Y., {Sohl-Dickstein}, J., Kingma, D.~P., Kumar, A., Ermon, S., and Poole, B.
\newblock Score-{{Based Generative Modeling}} through {{Stochastic Differential Equations}}.
\newblock In \emph{International {{Conference}} on {{Learning Representations}}}, 2021{\natexlab{b}}.

\bibitem[Stogios et~al.(2022)Stogios, Evdokimova, Kim, Garsin, Savchenko, Joachimiak, Satchell, and {Center for Structural Genomics of Infectious Diseases (CSGID)}]{Stogios2022-pd}
Stogios, P.~J., Evdokimova, E., Kim, Y., Garsin, D., Savchenko, A., Joachimiak, A., Satchell, K. J.~F., and {Center for Structural Genomics of Infectious Diseases (CSGID)}.
\newblock Crystal structure of {{EntV136}} from {{Enterococcus}} faecalis, October 2022.

\bibitem[Sun et~al.(2023)Sun, Yu, Dai, Schuurmans, and Dai]{sun_score-based_2023}
Sun, H., Yu, L., Dai, B., Schuurmans, D., and Dai, H.
\newblock Score-based {{Continuous-time Discrete Diffusion Models}}.
\newblock In \emph{International {{Conference}} on {{Learning Representations}}}, February 2023.

\bibitem[Trippe et~al.(2023)Trippe, Yim, Tischer, Baker, Broderick, Barzilay, and Jaakkola]{trippe_diffusion_2023}
Trippe, B.~L., Yim, J., Tischer, D., Baker, D., Broderick, T., Barzilay, R., and Jaakkola, T.~S.
\newblock Diffusion {{Probabilistic Modeling}} of {{Protein Backbones}} in {{3D}} for the motif-scaffolding problem.
\newblock In \emph{The {{Eleventh International Conference}} on {{Learning Representations}}}, 2023.

\bibitem[Turberville et~al.(2023)Turberville, Martin, Hope, and Noble]{Turberville2023-vs}
Turberville, S., Martin, M.~P., Hope, I., and Noble, M. E.~M.
\newblock {{ATAD2}} in complex with {{FragLite2}}, March 2023.

\bibitem[Uria et~al.(2014)Uria, Murray, and Larochelle]{uria_deep_2014}
Uria, B., Murray, I., and Larochelle, H.
\newblock A {{Deep}} and {{Tractable Density Estimator}}.
\newblock In \emph{Proceedings of the 31st {{International Conference}} on {{Machine Learning}}}, pp.\  467--475. PMLR, January 2014.

\bibitem[Warstat et~al.(2023)Warstat, Pervaiz, Regenass, Amann, Schmidtkunz, Einsle, Jung, Breit, H{\"u}gle, and G{\"u}nther]{warstat_novel_2023}
Warstat, R., Pervaiz, M., Regenass, P., Amann, M., Schmidtkunz, K., Einsle, O., Jung, M., Breit, B., H{\"u}gle, M., and G{\"u}nther, S.
\newblock A novel pan-selective bromodomain inhibitor for epigenetic drug design.
\newblock \emph{European Journal of Medicinal Chemistry}, 249:\penalty0 115139, March 2023.
\newblock ISSN 0223-5234.
\newblock \doi{10.1016/j.ejmech.2023.115139}.

\bibitem[Wu et~al.(2023)Wu, Trippe, Naesseth, Blei, and Cunningham]{wu_practical_2023}
Wu, L., Trippe, B., Naesseth, C., Blei, D., and Cunningham, J.~P.
\newblock Practical and {{Asymptotically Exact Conditional Sampling}} in {{Diffusion Models}}.
\newblock \emph{Advances in Neural Information Processing Systems}, 36:\penalty0 31372--31403, December 2023.

\bibitem[Wu et~al.(2024)Wu, Sun, Chen, Zhang, Yue, and Bouman]{wu2024principled}
Wu, Z., Sun, Y., Chen, Y., Zhang, B., Yue, Y., and Bouman, K.
\newblock Principled probabilistic imaging using diffusion models as plug-and-play priors.
\newblock In \emph{The Thirty-Eighth Annual Conference on Neural Information Processing Systems}, 2024.

\bibitem[Xu et~al.(2022)Xu, Yu, Song, Shi, Ermon, and Tang]{xu_geodiff_2022}
Xu, M., Yu, L., Song, Y., Shi, C., Ermon, S., and Tang, J.
\newblock {{GeoDiff}}: {{A Geometric Diffusion Model}} for {{Molecular Conformation Generation}}.
\newblock In \emph{International {{Conference}} on {{Learning Representations}}}, 2022.

\bibitem[Zhang et~al.(2024)Zhang, Chu, Berner, Meng, Anandkumar, and Song]{zhang_improving_2024}
Zhang, B., Chu, W., Berner, J., Meng, C., Anandkumar, A., and Song, Y.
\newblock Improving {{Diffusion Inverse Problem Solving}} with {{Decoupled Noise Annealing}}, July 2024.

\bibitem[Zhang et~al.(2018)Zhang, Isola, Efros, Shechtman, and Wang]{zhang_unreasonable_2018}
Zhang, R., Isola, P., Efros, A.~A., Shechtman, E., and Wang, O.
\newblock The {{Unreasonable Effectiveness}} of {{Deep Features}} as a {{Perceptual Metric}}.
\newblock In \emph{2018 {{IEEE}}/{{CVF Conference}} on {{Computer Vision}} and {{Pattern Recognition}}}, pp.\  586--595, Salt Lake City, UT, June 2018. IEEE.
\newblock ISBN 978-1-5386-6420-9.
\newblock \doi{10.1109/CVPR.2018.00068}.

\bibitem[Zhao et~al.(2024)Zhao, Luo, Sj{\"o}lund, and Sch{\"o}n]{Zhao2024rsta}
Zhao, Z., Luo, Z., Sj{\"o}lund, J., and Sch{\"o}n, T.~B.
\newblock Conditional sampling within generative diffusion models, September 2024.

\end{thebibliography}
\bibliographystyle{icml2025}

\newpage
\appendix
\onecolumn

\section{Proofs}
\label{app:proofs}
\subsection{Equality between marginals}
The marginal distributions $p_\theta^0(\xt)$ and $p_\theta(\xt)$ are equal, as follows from the following proposition, inspired by \citet{zhang_improving_2024}.
\begin{proposition}
Conditioning on a sample $\xt \sim p(\xt)$ to sample $\x_0 \sim p(\x_0|\xt)$, and then forgetting about $\xt$ when sampling $\xtone \sim p(\xtone|\x_0)$, leads to a draw from $p(\xtone)$. 
\label{prop:dapsprop}
\end{proposition}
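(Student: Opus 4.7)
The plan is to verify this by a direct marginalization argument; the statement is essentially a structural property of any joint distribution and does not rely on the diffusion machinery. The three-step procedure defines a joint law on $(\xt, \x_0, \xtone)$ given by the product $p(\xt)\,p(\x_0 \mid \xt)\,p(\xtone \mid \x_0)$, and our task is just to integrate out $\xt$ and $\x_0$.

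First I would note that by the definition of conditional probability, the first two factors combine to give the true joint, $p(\xt)\,p(\x_0 \mid \xt) = p(\xt, \x_0)$. Marginalizing $\xt$ out of this joint therefore produces the correct prior marginal, $\int p(\xt, \x_0)\,d\xt = p(\x_0)$. So after the first two steps, $\x_0$ has distribution $p(\x_0)$, even though it was generated via the intermediate $\xt$.

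Next I would observe that in step three we draw $\xtone$ from $p(\xtone \mid \x_0)$, which is the conditional of $\xtone$ given $\x_0$ in the true joint $p(\x_0, \xt, \xtone)$ obtained by integrating $\xt$ out (as opposed to the finer conditional $p(\xtone \mid \x_0, \xt)$). Combining with the previous step, the pair $(\x_0, \xtone)$ has joint law $p(\x_0)\,p(\xtone \mid \x_0) = p(\x_0, \xtone)$, and marginalizing over $\x_0$ yields $\xtone \sim p(\xtone)$, as claimed.

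There is no real obstacle here beyond notational care. The one subtlety worth emphasizing is precisely what ``forgetting $\xt$'' means: we use the marginal conditional $p(\xtone \mid \x_0)$ rather than conditioning jointly on $(\x_0, \xt)$, which is what allows the intermediate $\xt$ to drop out cleanly in the marginalization. Applying this statement iteratively along the reverse chain, with the roles of $(\xt, \x_0, \xtone)$ specialized to $(\xtplusone, \x_0, \xt)$ and using $q$ for the forward kernels together with the assumption that $p_\theta(\x_0 \mid \xtplusone)$ matches the true reverse conditional, then yields the claimed equality $\int p_\theta^0(\xtt)\,d\xtplusoneT = \int p_\theta(\xtt)\,d\xtplusoneT$ for all $t$.
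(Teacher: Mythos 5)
Your proof is correct and is essentially the same marginalization argument as the paper's, just run in the opposite direction: the paper starts from $p(\xtone)$ and expands $p(\x_0)$ via $\int p(\x_0|\xt)p(\xt)\,d\xt$ to recover the sampling scheme, whereas you start from the joint law of the scheme and integrate out $\xt$ and $\x_0$. Your closing remark on the required assumption that $p(\x_0|\xtplusone)$ (and the forward kernel) match the true conditionals corresponds to the paper's own remark following its proof.
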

\begin{proof}
\begin{align}
p(\xtone) &= 
\int \underbrace{p(\xtone |\x_0)}_{(*)}\underbrace{p(\x_0)}_{(**)}d\x_0 = \\
&= \int  \underbrace{p(\xtone |\x_0)}_{(*)} \underbrace{\int p(\x_0|\xt) p(\xt) d\x_t}_{(**)} d\x_0 \\
&= \int \int p(\xtone |\x_0) p(\x_0|\xt) p(\xt) d\x_0 d\x_t \\
&= \mathbb{E}_{\xt}
\mathbb{E}_{\x_0|\xt}
p(\xtone|\x_0).
\label{eq:prop1proof}
\end{align}
\end{proof}
\begin{remark}
    In practice, the forward process is given by $q$, meaning $p(\xtone|\x_0)$ would be replaced by $q(\xtone|\x_0)$ above. Hence, for the above proposition to hold, it assumes that the backward process is the inverse of the forward process (i.e., $p(\xtone|\x_0) = q(\xtone|\x_0)$). 
\end{remark}
\begin{remark}
Proposition \ref{prop:dapsprop} is the same as the proposition in \citep{zhang_improving_2024}, but without conditioning on $\y$. Our proof can be used also for the conditional case (by only adding the corresponding conditioning on $\y$), but is different from the one by \citet{zhang_improving_2024} as it does not rely on the assumption of their graphical model.
\end{remark}

\section{DDSMC for DDPM/VP diffusion models}
\label{app:ddsmc_ddpm}
In the DDPM \citep{ho_denoising_2020-2} or Variance-preserving (VP) \citep{song_score-based_2021} formulation of a diffusion model, the forward process can described by
\begin{align}
    q(\xtplusone|\xt) = \mathcal{N}(\xtplusone|\sqrt{1-\beta_{t+1}}\xt, \beta_{t+1}I), \label{eq:ddpmforward}
\end{align}
which leads to 
\begin{align}
    q(\xt|\x_0) = \mathcal{N}(\xt|\sqrt{\bar\alpha_t}\x_0, (1-\bar\alpha_t)I), \label{eq:ddpmtzero}
\end{align}
where $\bar \alpha_t = \prod_{s=1}^t\alpha_s$ with $\alpha_s = 1 - \beta_s$.

\subsection{Target distribution}
The target distribution changes as the transition $p_\theta^0(\xt|\xtplusone) = \int q(\xt|\x_0)\delta_{\recon(\xtplusone)}(\x_0)d\x_0$ changes when $q(\xt|\x_0)$ changes. The new expression is now $p_\theta^0(\xt|\xtplusone) = \mathcal{N}\left(\sqrt{\bar \alpha_t}\recon(\xtplusone), (1-\bar \alpha_t)I\right)$

\subsection{Proposal}
In the DDPM formulation, we reuse the same Gaussian prior over $\x_0$ given $\xt$ as before, $\tilde p_\theta(\x_0|\xt) = \mathcal{N}(\recon(\xt), \rho_t^2I)$ and hence the same posterior $\tilde p_\theta(\x_0|\xtplusone, \y) = \mathcal{N}(\tilde \mu_\theta^{t+1}(\xtplusone, \y), \mM_{t+1}^{-1})$. However, we use $\tilde q(\xt|\x_0) = \mathcal{N}(\xt|\sqrt{\bar \alpha_t}\xt, \lambda_t^2I)$ which means that the proposal becomes
\begin{align}
    r_t(\xt|\xtplusone, \y) 
    &= \int \tilde q(\xt|\x_0) \tilde p_\theta(\x_0|\xtplusone, \y) d\x_0 \\
    &= \mathcal{N}(\sqrt{\bar \alpha_t}\tilde \mu^{t+1}_\theta(\xtplusone, \y), \lambda_t^2I + \bar \alpha_t \mM_{t+1}^{-1}).
\end{align}
To match the prior in case of non-informative measurements, we set $\lambda_t^2 = 1- \bar \alpha_t - \bar \alpha_{t} \rho^2_{t+1} = 1 - \bar \alpha_t(1 + \rho_{t+1}^2)$.

For expressions with generalized DAPS prior (i.e., arbitrary value of $\eta$), see \Cref{app:general_DAPS_prior}.
\section{Comparison with TDS and MCGDiff}
\label{app:smc_comparison}
In this section, we write down the main components in an SMC algorithm, the target distributions and proposals, for Twisted Diffusion Sampler (TDS) \citep{wu_practical_2023} and Monte Carlo Guided Diffusion (MCGDiff) \citep{cardoso_monte_2023-2},

\subsection{TDS}
\paragraph{Target distributions} TDS approximates the intractable likelihood $p(\y|\xt)$ by $\hat p(\y|\xt) \eqdef p(\y|\x_0=\recon(\xt))$. The intermediate target distributions for TDS are hence
\begin{align}
\gamma_t^{TDS}(\xtt) &= \hat p(\y|\xt) p(\x_T)\prod_{s=t}^{T-1} \ptheta(\x_s|\x_{s+1}), 
\end{align}
where
\begin{align}
p_\theta(\xs|\xsplusone) 
&= \mathcal{N}\left(\xs|\xsplusone + \beta_{s+1} s_\theta(\x_{s+1}, s+1), \beta_{s+1}^2I\right). \\
&= \mathcal{N}\left(\xs |\left(1 -\frac{\beta_{s+1}}{\sigma_{s+1}^2}\right)\xsplusone + \frac{\beta_{s+1}}{\sigma_{s+1}^2}\recon(\xsplusone), \beta_{t+1}\right),
\end{align}
and the last equality comes from rewriting the score as $(\recon(\xsplusone) - \xsplusone)/\sigma_{s+1}^2$ according to Tweedie's formula.

\paragraph{Proposal}
In TDS, they use the proposal
\begin{align}
    r_t^{TDS}(\xt|\xtplusone, \y) 
    &= \mathcal{N}\left(\xt|\xtplusone + \beta_{t+1}(s_\theta(\xtplusone, t+1) + \nabla_{\xtplusone} \log \hat p(\y|\xtplusone)), \beta_{t+1}\right) \\
    &= \mathcal{N}\left(\xt|\left(1-\frac{\beta_{t+1}}{\sigma_{t+1}^2}\right)\xtplusone + \frac{\beta_{t+1}}{\sigma_{t+1}^2}\recon(\xtplusone) + \nabla_{\xtplusone} \log \hat p(\y|\xtplusone), \beta_{t+1}\right),
\end{align}
where again the last equality is from rewriting the score according to Tweedie's formula. This proposal is reminiscent of classifier guidance \citep{dhariwal_diffusion_2021-1}, where the unconditional score is combined with the gradient of some log likelihood, in this case the log likelihood $\log \hat p(\y|\xtplusone)$ from the target distribution. As this likelihood relies on the reconstruction $\recon(\xtplusone)$, computing the gradient of $\log \hat p(\y|\xtplusone)$ with respect to $\xtplusone$ means differentiating through the reconstruction, which could be computationally very expensive. On the other hand, this proposal is not limited to the linear-Gaussian case, but works with any differentiable $\hat p(\y|\xtplusone)$.

\subsection{MCGDiff}
MCGDiff relies on the DDPM or Variance-preserving formulation of diffusion models when describing their method, and we will use it here. To discuss MCGDiff, we will work under their initial premises that $\y = \ox + \sigma_y\epsilon$, i.e., $A$ merely extracts the top coordinates of $\x$ (denoted $\ox$), and $\y$ is a potentially noisy observation of these coordinates\footnote{For general $A$, they also make use of the singular value decomposition of $A$ to make a change of basis where the "new" $A$ has this property.}. The variable $\ux$ represents the completely unobserved coordinates. We also follow their presentation, starting with the assumption of a noise-free observation $\y$, then continuing to the noisy case.

\subsubsection{Noiseless case}
\paragraph{Target distributions}
In the noiseless case, they target for $t=0$
\begin{align}
    \gammazero = p_\theta(\x_T) \left[\prod_{t=1}^{T-1}p_\theta(\xt|\xtplusone) \right]\underbrace{p_\theta(\y \concat \ux_0)|\x_1)}_{\delta_\y(\ox_0)p_\theta(\ux_0|\x_1)}.
\end{align}
where $p_\theta$ is the regular diffusion model. By then defining a likelihood for $t=0$ as $\bar q_{0|0}(\y|\ox_0) = \delta_\y(\ox_0)$ and $\bar q_{t|0}(\y|\ox_t) = \mathcal{N}(\sqrt{\abt}\y, (1-\abt)I)$ for $t>0$, they have a target on the general form
\begin{align}
    \gammat = \frac{\bar q_{t|0}(\y|\ox_t)}{\bar q_{t+1|0}(\y|\ox_{t+1})}p(\xt|\xtplusone)\gammatplusone.
\end{align}

\paragraph{Proposal}
For $0<t<T$, they construct their proposal
\begin{align}
    r_t(\xt|\xtplusone, \y) = \frac{p_\theta(\xt|\xtplusone)\bar q_{t|0}(\y|\ox_t)}{\int p_\theta(\xt|\xtplusone)\bar q_{t|0}(\y|\ox_t) d\xt} 
    = p_\theta(\ux_t|\xtplusone)\frac{p_\theta(\ox_t|\xtplusone)\bar q_{t|0}(\y|\ox_t)}{\int p_\theta(\ox_t|\xtplusone)\bar q_{t|0}(\y|\ox_t) d\ox_t},
\end{align}
and as everything is Gaussian (transition and likelihood), all terms can be computed exactly. In words, they use the "regular" diffusion model $p_\theta(\ux_t|\xtplusone)$ for the unobserved coordinates, and for the observed coordinates, they also weigh in the likelihood in the update. For $t=0$, they instead use
\begin{align}
    r_0(\x_0|\x_1, \y) = \delta_\y(\ox_0)p_\theta(\ux_0|\x_1)
\end{align}

\paragraph{Weighting function}
We have not written out the weighting function $\gammat / (r_t(\xt|\xtplusone)\gammatplusone)$ for DDSMC and TDS, but we do for MCGDiff. With the target and proposal above, the weighting function becomes
\begin{align}
    w_t(\xt,\xtplusone) &= \underbrace{\frac{\int p_\theta(\xt|\xtplusone)\bar q_{t|0}(\y|\ox_t) d\xt} {p_\theta(\xt|\xtplusone)\bar q_{t|0}(\y|\ox_t)} }_{1/q_t(\xt|\xtplusone)}
    \underbrace{
    \frac{\bar q_{t|0}(\y|\ox_t)}{\bar q_{t+1|0}(\y|\ox_{t+1})}p_\theta(\xt|\xtplusone)
    }_{\gammat/\gammatplusone} = \\
    &= \frac{\int p_\theta(\xt|\xtplusone)\bar q_{t|0}(\y|\ox_t) d\xt}{\bar q_{t+1|0}(\y|\ox_{t+1})}.
\end{align}
We do this as this weight does not actually depend on $\xt$, and can hence be computed \emph{before} sampling from the proposal. This opens up the possibility for a so called fully adapted particle filter, where one can sample exactly from the target and the importance weights are always $1/N$.

\subsubsection{Noisy Case}
In the noisy case, they assume that there exists a timestep $\tau$ such that $\sigma_y^2 = \frac{1-\abtau}{\abtau}$. They further define $\ytildetau = \sqrt{\abtau} \y$

They use as targets for $t\geq\tau$
\begin{align}
    \gammat 
    &= g_t(\xt)p_\theta(\x_T)\prod_{s=t}^{T-1}p_\theta(\xs|\xsplusone) \\
    &= \frac{g_t(\xt)}{g_{t+1}(\xtplusone)}p_\theta(\xt|\xtplusone)\gammatplusone
\end{align}
with the potential
\begin{align}
    g_t(\xt) = \mathcal{N}\left(\xt|\sqrt{\abt}\y, (1-(1-\kappa)\frac{\abt}{\abtau})I\right).
\end{align}
For $t=0$, they target
\begin{align}
    \gamma_0(\x_0, \x_{\tau:T}) 
    &= p(\y|\ox_0)p_\theta(\x_0|\x_\tau)p_\theta(\x_T)\prod_{s=\tau}^{T-1}p(\xs|\xsplusone) \\
    &= \frac{p(\y|\ox_0)p_\theta(\x_0|\x_\tau)}{g_\tau(\x_\tau)}\gamma_\tau(\x_{\tau:T})
\end{align}
What this means is that they between $T$ and $\tau$ run their algorithm conditioning on $\y$, and then sample unconditionally between $\tau$ and $0$.

\section{Detailed Expressions for General DAPS Prior}
\label{app:general_DAPS_prior}
We generalize the DAPS prior by tempering the "likelihood" $q(\xtplusone|\xt)$, and denote 
\begin{align}
    q_\eta(\xt|\xtplusone, \x_0) \propto q(\xtplusone|\xt)^\eta q(\xt|\x_0). \label{eq:q_eta_general_app}
\end{align}
as in \Cref{eq:q_eta}.

\subsection{Variance Exploding}
In the "Variance Exploding" setting that we have used throughout the paper, \Cref{eq:q_eta_general_app} becomes
\begin{align}
    q_\eta(\xt|\xtplusone, \x_0) = \mathcal{N}\left(\frac{\beta_{t+1}}{\eta\sigma_t^2 + \beta_{t+1}}\x_0 + \frac{\eta\sigma_t^2}{\eta\sigma_t^2 + \beta_{t+1}}\xtplusone, \frac{\beta_{t+1}}{\eta\sigma_t^2 + \beta_{t+1}}I\right).
\end{align}
We then get our generalized backward kernel $p_\theta^{\eta}(\xt|\xtplusone)$ by replacing $\x_0$ with the reconstruction $\recon(\xtplusone)$, i.e.,
\begin{align}
    p_\theta^\eta(\xt|\xtplusone) =\mathcal{N}\left(\frac{\beta_{t+1}}{\eta\sigma_t^2 + \beta_{t+1}}\recon(\xtplusone)
    +
    \frac{\eta\sigma_t^2}{\eta\sigma_t^2 + \beta_{t+1}}\xtplusone, \frac{\beta_{t+1}\sigma_t^2}{\eta\sigma_t^2 + \beta_{t+1}}I\right)
\end{align}
For the proposal, the posterior $\tilde p(\x_0|\xtplusone, \y)$ remains as in \Cref{eq:x0_posterior_closed_form}, but we use a generalization 
\begin{align}
    r_t(\xt|\xtplusone, \y) = \int \tilde q_\eta(\xt|\xtplusone, \x_0)\tilde p(\x_0|\xtplusone, \y)d\x_0
\end{align}
where $\tilde q_\eta(\xt|\xtplusone, \x_0)$ is on the form
\begin{align}
    q_\eta(\xt|\xtplusone, \x_0) = \mathcal{N}\left(\frac{\beta_{t+1}}{\eta\sigma_t^2 + \beta_{t+1}}\x_0 + \frac{\eta\sigma_t^2}{\eta\sigma_t^2 + \beta_{t+1}}\xtplusone, \lambda_t^2I\right).
\end{align}
This gives the proposal
\begin{gather}
    r_t(\xt|\xtplusone, \y) = \nonumber \\
    =\mathcal{N}\left(
    \frac{\beta_{t+1}}{\eta\sigma_t^2 + \beta_{t+1}} \mM_{t+1}^{-1}\rvb_{t+1} + \frac{\eta\sigma_t^2}{\eta\sigma_t^2 + \beta_{t+1}}\xtplusone, \lambda_t^2 I + \left(\frac{\beta_{t+1}}{\eta\sigma_t^2 + \beta_{t+1}}\right)^2 \mM_{t+1}^{-1}), \label{eq:general_eta_proposal}
    \right)
\end{gather}
where we choose
\begin{align}
    \lambda_t^2 = \frac{\beta_{t+1}\sigma_t^2}{\eta\sigma_t^2 + \beta_{t+1}} - \left(\frac{\beta_{t+1}\rho_{t+1}}{\eta\sigma_t^2 + \beta_{t+1}}\right)^2 
\end{align}
so that the proposal matches the prior in case of non-informative measurements ($\sigma_y \rightarrow \infty$).

\subsection{DDPM/VE}
Using the same techniques as for VE but replacing $q(\cdot)$ with its corresponding expressions, we get the transition 
\begin{align}
    p_\theta^\eta(\xt|\xtplusone) = \mathcal{N}\left(
    \frac{\sqrt{\abt}\beta_{t+1}\recon(\xtplusone)
    +
    \eta\sqrt{1-\beta_{t+1}}(1-\abt)\xtplusone}{\eta-\eta\beta_{t+1} - \eta\abtplusone + \beta_{t+1}},
    \frac{\beta_{t+1}(1-\abt)}{\eta-\eta\beta_{t+1} - \eta\abtplusone + \beta_{t+1}}I
    \right),
\end{align}
and with the same posterior $\tilde p(\x_0|\xt, \xtplusone)$ and the same procedure as before, we get the proposal as
\begin{align}
    r_t(\xt|\xtplusone, \y) = \mathcal{N}\left( \frac{\sqrt{\abt}\beta_{t+1}\mM_{t+1}^{-1}\rvb_{t+1}
    +
    \eta\sqrt{1-\beta_{t+1}}(1-\abt)\xtplusone}{\eta-\eta\beta_{t+1} - \eta\abtplusone + \beta_{t+1}},
    \lambda^2I +
    \left(\frac{\sqrt{\abt}\beta_{t+1}}{\eta-\eta\beta_{t+1} - \eta\abtplusone + \beta_{t+1}}\right)^2\mM_{t+1}^{-1}
    \right)
\end{align}
with 
\begin{align}
    \lambda_t^2 = 1-\abt - \left(\frac{\sqrt{\abt}\beta_{t+1}\rho_{t+1}}{\eta-\eta\beta_{t+1} - \eta\abtplusone + \beta_{t+1}}\right)^2
\end{align}

\section{Discrete Version}
\label{app:d3smc}
\subsection{Discrete Denoising Diffusion Probabilistic Models}
DDSMC relies on the data $\x_0$ being a vector of continuous variables. For discrete data, there are a number of different formulations developed like \citep{austin_structured_2021, campbell_continuous_2022-1, dieleman_continuous_2022, sun_score-based_2023}. In our work, we focus on the Discrete Denoising Diffusion Probabilistic Models (D3PM) model \citep{austin_structured_2021}. 

To introduce D3PM, we will denote by bold $\x$ a one-hot encoding of a single discrete variable. In this case, the forward noise process can be described as
\begin{align}
    q(\xtplusone|\xt) = \xt Q_{t+1},
\end{align}
and as for the continuous case we have a closed form expression for $q(\xt|\x_0)$ as
\begin{align}
    q(\xt|\x_0) = \x_0 \bar Q_{t}
\end{align}
where $\bar Q_t = Q_1\hdots Q_{t-1} Q_t$. The backward kernel $q(\xt|\xtplusone, \x_0)$ is also known, and as we are in the discrete regime, the backward kernel can hence be parametrized by marginalizing over $\x_0$ as
\begin{align}
    p_\theta(\xt|\xtplusone) = \sum_{x_0} q(\xt|\xtplusone, \x_0)\tilde p_\theta(\x_0|\xtplusone),
\end{align}
where $\tilde p_\theta(\x_0|\xtplusone)$ is a categorical distribution predicted by a neural network. 

\paragraph{Generalizing to $D$ Variables}
The reasoning above can be extended to general $D$-dimensional data by assuming a noise process where the noise is added independently to each ov the variables, meaning $q(\cdot)$ factorizes over these variables. Similarly, it is also assumed that the backward process $p_\theta(\cdot|\xtplusone)$ factorizes over the different variables. A single forward pass through the neural networks hence produces probability vectors in $D$ different categorical distributions.

\subsection{Measurements}
For the discrete setting where we have a single variable, we assume that we have a measurement $\y$ on the form
\begin{align}
    p(\y|\x_0) = \x_0 Q_y. \label{eq:discrete_measurement_model}
\end{align}
For the general $D$-dimensional case, we hence assume that we have $D$ measurements, one for each variable in $\x_0$, and that these are obtained independently from each other. 

\subsection{D3SMC}
In the discrete setting, it might be tempting to find an "optimal" proposal 
\begin{align}
    r_t(\xt|\xtplusone, \y) =  \frac{p(\y|\xt)p(\xt|\xtplusone)}{p(\y|\xtplusone)}.
\end{align} 
However, it is not possible to compute 
\begin{align}
    p(\y|\xt) = \sum_{\x_0}p(\y|\x_0)p(\x_0|\xt)
\end{align}
in a single evaluation as it requires evaluations of the neural network prior for all possible values of $\xt$. If only a single variable is updated at each denoising step, like in an order-agnostic autoregressive model \citep{uria_deep_2014}, this could be feasible \citep{ekstrom_kelvinius_discriminator_2024}, but for the $D$-dimensional case will require $d^D$ evaluations (where $d$ is the size of the state space of $\xt$) for D3PM with uniform noise as all $D$ variables are updated each step. Hence, we have created a proposal which only requires a single forward pass at each sampling step in the general $D$-dimensional case. 

Deriving the D3SMC algorithm is rather straight forward. In the derivation, we will continue using the notation with tilde. However, when we made Gaussian approximations around a reconstruction in the continuous case, we will in the discrete case use the categorical distribution predicted by our neural network, $\tilde p_\theta(\x_0|\xtplusone)$. 

\paragraph{Target Distribution}
In analogy with DDSMC, we write our target distribution $\gamma_t(\xt)$ as
\begin{align}
    \gamma_t(\xt) 
    = \frac{\tilde p(\y|\xt)}{\tilde p(\y|\xtplusone)}
    p^0_\theta(\xt|\xtplusone)\gamma_{t+1}(\xtplusone) 
\end{align}
where $p_\theta^0(\xt|\xtplusone) = \text{Categorical}(\rvp = \x^{\theta}_{0, t+1}\bar Q_t)$ with $\x^{\theta}_{0, t+1}$ being the predicted distribution over $\x_0$ by the D3PM model (i.e., $\tilde p_\theta(\x_0|\xtplusone)$), and $\tilde p(\y|\xt) = \sum_{\x_0}p(\y|\x_0)\tilde p_\theta(\x_0|\xt)$. The distribution $p_\theta^0(\xt|\xtplusone)$ is a D3PM analogy to the DAPS kernel, as we predict $\x_0$ and then propagate this forward in time.

\paragraph{Proposal Distribution}
The discrete proposal starts, just as for the continuous case, with a posterior over $\x_0$, 
\begin{align}
    \tilde p_\theta(\x_0|\xtplusone, \y) 
    &= \frac{p(\y|\x_0) \tilde p_\theta(\x_0|\xtplusone)}{\sum_{\x_0} p(\y|\x_0) p_\theta(\x_0|\xtplusone)}\\
    &= \frac{\y Q_y^T \odot \x_{0,t+1}^{\theta}}{Z_{t+1}},
\end{align}
where $\odot$ means element-wise multiplication and $Z_t$ can be computed by summing the elements in the numerator. With this, we can obtain a proposal distribution in analogy with the continuous case 
\begin{gather}
    r_t(\xt|\xtplusone, \y) = \sum_{\x_0} p(\xt|\x_0)\tilde p(\x_0|\xtplusone, \y) = \nonumber \\
    = \sum_{\x_0} \x_0 \bar Q_t \left(\left(\frac{\y Q_y^T \odot \x_{0, t+1}^{\theta}}{Z_{t+1}}\right)(\x_0)^T\right). \label{eq:discrete_proposal}
\end{gather}

\section{Protein Structure Completion}
\label{app:protein_details}
\citet{levy_solving_2024} provides a protein toy example where $\x \in \R^{N\times 4 \times 3}$ is the coordinates of the backbone. They then propose a general observation model 
\begin{align}
    \y = A\x_0 + \eta, \quad \eta \sim \mathcal{N}(0, \mathbf{\Sigma}).
\end{align}
However, the diffusion prior is based on Chroma\footnote{This includes using pretrained weight parameters, trained by GENERATE BIOMEDICINES, INC. who are the copyright holders of these weights. The weights have been used with permission according to the license at \url{https://chroma-weights.generatebiomedicines.com/}} \citep{ingraham_illuminating_2023}, which models the diffusion as
\begin{align}
    d\x = -\frac{1}{2} \x\beta_tdt + \sqrt{\beta_t}Rd\mathbf{w},
\end{align}
which is the regular continuous-time formulations of DDPM/VP process but with \emph{correlated} noise. Therefore, we can go to a new space $\z$ defined as 
\begin{align}
    \z = R^{-1}\x.
\end{align}
Now, in this space, the diffusion process follows the standard DDPM/VP diffusion process and we can at any point convert between the two spaces with
\begin{align}
    \xt = R\zt, \label{eq:protein_x_from_z}
\end{align}
or in other words, at any point
\begin{align}
    \zt | \z_0 \sim \mathcal{N}(\zt|\sqrt{\abt}\z_0, (1-\abt)I).
\end{align}

Going back to the measurement equation, we can write it in terms of the whitened space $\z$ as 
\begin{align}
    \y = AR\z + \eta.
\end{align}
Now, \citet{levy_solving_2024} make an assumption which is equivalent to the measurement
\begin{align}
    \y = AR(\z_0 + \tilde \eta), \quad \tilde \eta \sim \mathcal{N}(0, \sigma^2I_{d_z}),
\end{align}
where $d_z$ is the dimension of $\z$ (which is the same as $d_x$). This is equivalent to the previous measurement equation with $\mathbf{\Sigma}=\sigma^2USS^TU^T$, where $U$ and $S$ are from the SVD of the matrix $AR = USV^T$.

Now, to run DDSMC we can rewrite the measurement equation using the aforementioned SVD as 
\begin{align}
    \y = USV^T \z_0 + USV^T \tilde \eta \iff U^T\y = SV^T\z_0 + SV^T\tilde\eta \label{eq:protein_measure_mult_U}
\end{align}
and with a change of basis as $\y' = U^T\y$, $\z' = V^T\z$, $A'=S$ and $\tilde \eta' = SV^T\sim \mathcal{N}(0, \sigma^2 SV^TVS^T) = \mathcal{N}(0, \sigma^2SS^T)$ the measurement equation becomes
\begin{align}
    \y'=A'\z_0' + \tilde \eta', \quad \tilde\eta' \sim \mathcal{N}(0, \sigma^2SS^T).
\end{align}
We can now run DDSMC in this new basis with diagonal matrix $A'$. One should note here that the covariance do not necessarily have constant diagonal entries, which will slightly change the expressions for DDSMC: instead of relying on a single $\sigma_y$, we have to replace this in the detailed expressions with $s_i\sigma$ for the dimension $i\leq d_y$, where $s_i$ is the corresponding singular value $S_{ii}$.

\section{Additional Experimental Details and Results}
\subsection{Gaussian Mixture Model}
\label{app:gmm}
\subsubsection{Prior and Posterior}
For the GMM experiments, we tried following MCGDiff as closely as possible, and therefore used and modified code from the MCGDiff repository\footnote{\url{https://github.com/gabrielvc/mcg_diff}}. We briefly outline the setup here, but refer to their paper for the full details.

The data prior is a Gaussian Mixture with 25 components, each having unit covariances. Under the DDPM/VP model (see \Cref{app:ddsmc_ddpm}), the intermediate marginal distributions $q(\xt) = \int q(\xt|\x_0)q(\x_0)d\x_0$ will also be a mixture of Gaussians, and it is hence possible to compute $\nabla_{\xt}\log q(\xt)$ exactly. 

The sequence of $\beta_t$ is a linearly decreasing sequence between $0.02$ and $10^{-4}$.

The mixture weights and the measurement model $(A, \sigma_y)$ are randomly generated, and a measurement is then drawn by sampling first $\x_0^* \sim q(\x_0)$, then $\epsilon\sim \mathcal{N}(0, I)$, and finally setting $y=A\x^* + \sigma_y\epsilon$. From this measurement, it is possible to compute the posterior exactly, and 10k samples were drawn from the posterior. Additionally, 10k samples were generated by each of the algorithms, and the Sliced Wassertstein Distance between the exact posterior samples and the SMC samples were computed using the Python Optimal Transport (POT) package\footnote{\url{https://pythonot.github.io/index.html}} \citep{flamary_pot_2021}

\subsubsection{Implementation Details}
For the three SMC methods, we used 256 particles for all algorithms, and 20 DDIM timesteps \citep{song_denoising_2021}. In this case, the timesteps were chosen using the method described in MCGDiff. 

For DDSMC and DAPS, we used the DDIM ODE solver (see equation below), and solve the ODE for the ``remaining steps" in the diffusion process. This means that, at diffusion time $t$, we start with a sample $\x_{t'} = \xt$ and then update this sample according to the update rule
\begin{align}
    \x_{t'-1} = \sqrt{\frac{\bar \alpha_{t'-1}}{\bar \alpha_{t'}}}\x_{t'} + \left(
    (1-\bar \alpha_{t'})\sqrt{\frac{\bar \alpha_{t'-1}}{\bar \alpha_{t'}}} - \sqrt{(1-\bar \alpha_{t'-1})(1-\bar \alpha_{t'})}
    \right)\nabla_{\x_{t'}}q(\x_{t'}). \label{eq:ddim_ode}
\end{align}
for $t' = t, t-1, \dots, 1$. 

\paragraph{DDSMC}
We use $\rho_t^2 = (1-\bar \alpha_t)/\sqrt{2}$

\paragraph{MCGDiff}
We used code from the official implementation\footnote{\url{https://github.com/gabrielvc/mcg_diff}}. As per their paper, we used $\kappa=10^{-2}$. 

\paragraph{TDS}
We implemented the method ourselves according to their Algorithm 1, making the adaptions for the DDPM/VP setting (also described in their Appendix A.1).
\paragraph{DAPS}
We implemented the method ourselves. As we are in the linear-Gaussian setting, we replaced their approximate sampling of $\tilde p(\x_0|\xt, \y)$ with the exact expression (given by our \Cref{eq:x0_posterior_closed_form,eq:x0_posterior_mean_precisionMat}). We used the same schedule for $r_t$ as for our $\rho_t$. 

\paragraph{DDRM}
We used code from the official repository\footnote{\url{https://github.com/bahjat-kawar/ddrm}}, and used the same hyper-parameters $\eta$ and $\eta_b$ as in their paper. We used \thsnd{1} timesteps to avoid discretization error, but also tested 20 steps as it is argued in their paper that more is not necessarily better. This performed worse for $d_x=8$ and $80$, and similar for $d_x=800$, see \Cref{tab:gmm-table-ddrm}.
\begin{table*}[tb!]
\caption{Comparison of DDRM with 20 or \thsnd{1} steps in the GMM setting.}
\label{tab:gmm-table-ddrm}
\vskip 0.15in
\begin{center}
\begin{small}
\begin{sc}
\begin{tabular}{cccccccc}
\toprule
$d_x$                & $d_y$ & DDRM-20            & DDRM-\thsnd{1}     \\
\midrule
\multirow{3}{*}{8}   & 1     & $5.22\pm0.92$      & $3.83\pm1.05$      \\
                     & 2     & $4.58\pm1.09$      & $2.25\pm0.92$      \\
                     & 4     & $2.05\pm0.85$      & $0.55\pm0.29$      \\ \midrule
\multirow{3}{*}{80}  & 1     & $6.32\pm1.24$      & $5.19\pm1.07$      \\
                     & 2     & $7.60\pm0.90$      & $5.62\pm1.09$      \\
                     & 4     & $7.74\pm0.84$      & $4.95\pm1.25$      \\ \midrule
\multirow{3}{*}{800} & 1     & $7.02\pm1.14$      & $7.15\pm1.11$      \\
                     & 2     & $8.29\pm0.86$      & $8.21\pm0.88$      \\
                     & 4     & $9.07\pm0.73$      & $8.66\pm0.87$      \\
\bottomrule
\end{tabular}
\end{sc}
\end{small}
\end{center}
\vskip -0.1in
\end{table*}

\paragraph{DCPS}
We used code from the official repository \footnote{\url{https://github.com/Badr-MOUFAD/dcps/}}. We used $L=3$, $K=2$, and $\gamma=10^{-2}$ as specified in their paper. We used \thsnd{1} timesteps.

\subsubsection{Complexity and Number of Function Evaluations}
As DDSMC relies on multiple particles in parallel, it will inevitably require a larger computational effort. DDSMC-Tweedie uses one score evaluation per particle per diffusion step, thus requiring $N$ times more score evaluations per step compared to a method like DDRM (with $N$ being the number of particles). This is the same effort as MCGDiff, and slightly less than TDS which requires differentiating the score function. DDSMC-ODE and DAPS uses additional steps for the reconstruction function, and DDSMC-ODE hence requires $N$ times more score evaluations per diffusion step than DAPS, which in turn requires ``number of ODE steps" times more score evaluations than a method like DDRM. In the case of the GMM setting, we used 20 diffusion steps, and reconstruction required $20, 19, \dots, 1$ steps, meaning in total it required $\sim 10$ times more score evaluations just for reconstruction. As noted in the paper, using DDSMC-Tweedie already outperforms all other methods (including MCGDiff, which has the same computational footprint). We investigate further the spending of the computational effort in DDSMC-ODE in \Cref{app:gmm-fewer-particles}. 

\subsubsection{DDSMC-ODE with single particle (comparison with DAPS)}
\label{app:gmm-single-particle}
\begin{table*}[tb!]
\caption{Results on the Gaussian mixture model experiment, using DDSMC-ODE with 20 ODE steps and $N=1$ particles, i.e., sampling from the proposal. The numbers for $\eta=0$ are more or less identical to DAPS, verifying that our proposal and their method are essentially equivalent also in practice.}
\label{tab:gmm-table-single-particle}
\vskip 0.15in
\begin{center}
\begin{small}
\begin{sc}
\begin{tabular}{ccccc}
\toprule
$d_x$                & $d_y$ & $\eta=0.0$         & $\eta=0.5$         & $\eta=1.0$          \\
\midrule
\multirow{3}{*}{8}   & 1     & $5.62\pm0.90$      & $5.85\pm0.93$      & $6.26\pm0.96$        \\
                     & 2     & $5.93\pm1.16$      & $6.33\pm1.07$      & $6.92\pm1.00$        \\
                     & 4     & $4.88\pm1.34$      & $5.42\pm1.25$      & $6.16\pm1.16$        \\ \midrule
\multirow{3}{*}{80}  & 1     & $6.84\pm1.16$      & $7.15\pm1.11$      & $7.49\pm1.10$        \\
                     & 2     & $8.50\pm0.92$      & $8.87\pm0.90$      & $9.22\pm0.90$       \\
                     & 4     & $9.05\pm0.75$      & $9.42\pm0.71$      & $9.78\pm0.68$       \\ \midrule
\multirow{3}{*}{800} & 1     & $7.03\pm1.20$      & $7.13\pm1.19$      & $7.26\pm1.17$        \\
                     & 2     & $8.30\pm1.01$      & $8.41\pm1.02$      & $8.57\pm1.01$      \\
                     & 4     & $9.20\pm0.83$      & $9.33\pm0.84$      & $9.50\pm0.83$     \\
\bottomrule
\end{tabular}
\end{sc}
\end{small}
\end{center}
\vskip -0.1in
\end{table*}

\begin{table*}[tb!]
\caption{Results on the Gaussian mixture model experiment when using DDSMC-ODE, $\eta=0$, with fewer ODE steps and/or fewer particles N. *This setup requires as many score evaluations as DDSMC-Tweedie with $N=256$ particles.}
\label{tab:gmm-table-fewer-0}
\vskip 0.15in
\begin{center}
\begin{small}
\begin{sc}
\begin{tabular}{ccccccccc}
\toprule
            &       & \multicolumn{2}{c}{ODE steps = 3} & \multicolumn{2}{c}{ODE steps = 7} & \multicolumn{3}{c}{ODE steps = 20} \\
            \cmidrule(lr){3-4} \cmidrule(lr){5-6} \cmidrule(lr){7-9}
$d_x$                & $d_y$ & $N=90^*$            & $N=256$            & $N=40^*$            & $N=256$            & $N=12$             & $N=25^*$       & $N=256$       \\
\midrule
\multirow{3}{*}{8}   & 1     & $1.89\pm0.57$      & $1.79\pm0.52$      & $1.68\pm0.50$      & $1.49\pm0.40$      & $0.98\pm0.29$      & $0.94\pm0.20$      & $1.15\pm0.24$     \\
                     & 2     & $0.63\pm0.30$      & $0.63\pm0.30$      & $0.55\pm0.27$      & $0.53\pm0.26$      & $0.57\pm0.25$      & $0.48\pm0.18$      & $0.56\pm0.23$   \\
                     & 4     & $0.16\pm0.01$      & $0.16\pm0.01$      & $0.17\pm0.02$      & $0.17\pm0.04$      & $0.18\pm0.04$      & $0.19\pm0.07$      & $0.21\pm0.10$     \\ \midrule
\multirow{3}{*}{80}  & 1     & $1.22\pm0.36$      & $1.20\pm0.36$      & $1.14\pm0.34$      & $1.11\pm0.33$      & $0.71\pm0.16$      & $0.70\pm0.17$      & $0.69\pm0.17$   \\
                     & 2     & $1.15\pm0.42$      & $1.08\pm0.41$      & $1.09\pm0.48$      & $0.97\pm0.42$      & $0.88\pm0.57$      & $0.70\pm0.39$      & $0.69\pm0.36$     \\
                     & 4     & $0.53\pm0.28$      & $0.55\pm0.30$      & $0.59\pm0.29$      & $0.55\pm0.30$      & $0.71\pm0.45$      & $0.40\pm0.19$      & $0.41\pm0.23$    \\ \midrule
\multirow{3}{*}{800} & 1     & $1.69\pm0.54$      & $1.66\pm0.53$      & $1.65\pm0.53$      & $1.56\pm0.49$      & $1.56\pm0.49$      & $1.39\pm0.47$      & $1.37\pm0.43$    \\
                     & 2     & $1.24\pm0.64$      & $1.08\pm0.51$      & $1.42\pm0.95$      & $1.07\pm0.45$      & $1.64\pm0.86$      & $1.23\pm0.78$      & $0.81\pm0.40$     \\
                     & 4     & $0.38\pm0.27$      & $0.25\pm0.08$      & $0.50\pm0.40$      & $0.24\pm0.08$      & $1.18\pm0.82$      & $0.70\pm0.60$      & $0.23\pm0.06$     \\
\bottomrule
\end{tabular}
\end{sc}
\end{small}
\end{center}
\vskip -0.1in
\end{table*}

\begin{table*}[tb!]
\caption{Results on the Gaussian mixture model experiment when using DDSMC-ODE, $\eta=0.5$, with fewer ODE steps and/or fewer particles N. *This setup requires as many score evaluations as DDSMC-Tweedie with $N=256$ particles.}
\label{tab:gmm-table-fewer-05}
\vskip 0.15in
\begin{center}
\begin{small}
\begin{sc}
\begin{tabular}{ccccccccc}
\toprule
            &       & \multicolumn{2}{c}{ODE steps = 3} & \multicolumn{2}{c}{ODE steps = 7} & \multicolumn{3}{c}{ODE steps = 20} \\
            \cmidrule(lr){3-4} \cmidrule(lr){5-6} \cmidrule(lr){7-9}
$d_x$                & $d_y$ & $N=90^*$            & $N=256$            & $N=40^*$            & $N=256$            & $N=12$             & $N=25^*$       & $N=256$       \\
\midrule
\multirow{3}{*}{8}   & 1     & $1.92\pm0.73$      & $1.84\pm0.64$      & $1.49\pm0.52$      & $1.33\pm0.41$      & $0.70\pm0.25$      & $0.69\pm0.25$      & $0.88\pm0.33$     \\
                     & 2     & $0.62\pm0.36$      & $0.61\pm0.35$      & $0.44\pm0.29$      & $0.45\pm0.29$      & $0.44\pm0.23$      & $0.30\pm0.20$      & $0.34\pm0.20$   \\
                     & 4     & $0.08\pm0.03$      & $0.09\pm0.04$      & $0.05\pm0.02$      & $0.05\pm0.01$      & $0.07\pm0.05$      & $0.07\pm0.06$      & $0.09\pm0.08$     \\ \midrule
\multirow{3}{*}{80}  & 1     & $1.01\pm0.33$      & $1.01\pm0.34$      & $0.96\pm0.29$      & $0.92\pm0.30$      & $0.64\pm0.14$      & $0.53\pm0.14$      & $0.50\pm0.16$   \\
                     & 2     & $0.91\pm0.39$      & $0.85\pm0.40$      & $0.95\pm0.43$      & $0.77\pm0.36$      & $0.91\pm0.50$      & $0.57\pm0.27$      & $0.37\pm0.20$     \\
                     & 4     & $0.38\pm0.25$      & $0.38\pm0.27$      & $0.43\pm0.23$      & $0.36\pm0.24$      & $0.58\pm0.33$      & $0.31\pm0.14$      & $0.21\pm0.13$    \\ \midrule
\multirow{3}{*}{800} & 1     & $1.81\pm0.63$      & $1.76\pm0.60$      & $2.01\pm0.65$      & $1.82\pm0.57$      & $2.20\pm0.63$      & $1.92\pm0.59$      & $1.68\pm0.52$    \\
                     & 2     & $1.51\pm0.97$      & $1.27\pm0.78$      & $1.96\pm1.11$      & $1.34\pm0.77$      & $2.38\pm1.00$      & $1.94\pm0.97$      & $1.17\pm0.65$     \\
                     & 4     & $0.49\pm0.33$      & $0.33\pm0.15$      & $0.77\pm0.45$      & $0.38\pm0.16$      & $1.70\pm0.81$      & $1.10\pm0.66$      & $0.47\pm0.22$     \\
\bottomrule
\end{tabular}
\end{sc}
\end{small}
\end{center}
\vskip -0.1in
\end{table*}

\begin{table*}[tb!]
\caption{Results on the Gaussian mixture model experiment when using DDSMC-ODE, $\eta=1$, with fewer ODE steps and/or fewer particles N. *This setup requires as many score evaluations as DDSMC-Tweedie with $N=256$ particles.}
\label{tab:gmm-table-fewer-1}
\vskip 0.15in
\begin{center}
\begin{small}
\begin{sc}
\begin{tabular}{ccccccccc}
\toprule
            &       & \multicolumn{2}{c}{ODE steps = 3} & \multicolumn{2}{c}{ODE steps = 7} & \multicolumn{3}{c}{ODE steps = 20} \\
            \cmidrule(lr){3-4} \cmidrule(lr){5-6} \cmidrule(lr){7-9}
$d_x$                & $d_y$ & $N=90^*$            & $N=256$            & $N=40^*$            & $N=256$            & $N=12$             & $N=25^*$       & $N=256$       \\
\midrule
\multirow{3}{*}{8}   & 1     & $2.07\pm0.81$      & $1.99\pm0.71$      & $1.44\pm0.51$      & $1.30\pm0.38$      & $0.85\pm0.25$      & $0.76\pm0.25$      & $1.01\pm0.39$     \\
                     & 2     & $0.75\pm0.39$      & $0.69\pm0.38$      & $0.57\pm0.27$      & $0.50\pm0.29$      & $0.70\pm0.22$      & $0.48\pm0.19$      & $0.39\pm0.22$   \\
                     & 4     & $0.14\pm0.06$      & $0.13\pm0.06$      & $0.19\pm0.03$      & $0.14\pm0.02$      & $0.29\pm0.05$      & $0.23\pm0.05$      & $0.17\pm0.06$     \\ \midrule
\multirow{3}{*}{80}  & 1     & $1.14\pm0.24$      & $1.10\pm0.23$      & $1.17\pm0.21$      & $1.06\pm0.21$      & $1.11\pm0.14$      & $0.91\pm0.11$      & $0.77\pm0.12$   \\
                     & 2     & $1.08\pm0.36$      & $1.02\pm0.34$      & $1.18\pm0.35$      & $0.97\pm0.30$      & $1.39\pm0.46$      & $0.99\pm0.26$      & $0.69\pm0.15$     \\
                     & 4     & $0.69\pm0.16$      & $0.63\pm0.17$      & $0.79\pm0.18$      & $0.67\pm0.15$      & $1.01\pm0.27$      & $0.75\pm0.15$      & $0.54\pm0.04$    \\ \midrule
\multirow{3}{*}{800} & 1     & $2.17\pm0.62$      & $2.09\pm0.58$      & $2.44\pm0.64$      & $2.21\pm0.55$      & $2.67\pm0.64$      & $2.36\pm0.60$      & $2.09\pm0.51$    \\
                     & 2     & $2.05\pm0.95$      & $1.81\pm0.81$      & $2.56\pm1.08$      & $1.90\pm0.79$      & $2.96\pm0.99$      & $2.52\pm0.96$      & $1.76\pm0.67$     \\
                     & 4     & $1.18\pm0.48$      & $1.04\pm0.37$      & $1.50\pm0.54$      & $1.15\pm0.39$      & $2.38\pm0.80$      & $1.81\pm0.70$      & $1.24\pm0.47$     \\
\bottomrule
\end{tabular}
\end{sc}
\end{small}
\end{center}
\vskip -0.1in
\end{table*}

As the proposal of DDSMC-ODE with $\eta=0$ essentially corresponds to DAPS (if instead using $\lambda^2=\sigma^2$ in \Cref{eq:ddsmc_proposal}), running this setup with a single particle would become essentially equivalent to DAPS. We verify that this also holds empirically in \Cref{tab:gmm-table-single-particle}, where we present the numbers for DDSMC-ODE with a single particle. Indeed, the numbers for $\eta=0$ are more or less identical.

\subsubsection{DDSMC-ODE with fewer particles and/or ODE steps}
\label{app:gmm-fewer-particles}
As DDSMC-ODE requires additional score evaluations when solving the PF-ODE, we present some results in \Cref{tab:gmm-table-fewer-0,tab:gmm-table-fewer-05,tab:gmm-table-fewer-1} where we have used fewer particles and/or fewer ODE steps. Notice that the number of ODE steps is the \emph{maximum} number of ODE steps, and if there are fewer steps left in the (outer) diffusion process, this lower number is used as number of ODE steps that iteration. Hence, as an example, with the maximum number of ODE steps being 20 and the number of diffusion steps also being 20, the number of ODE steps used in the different iterations would be $20, 19, \dots, 1$, and DDSMC-ODE would in this case use $\sim10\times$ more score evaluations in total. Therefore, using $N=25$ would corresponding to approximately the same number of score evaluations as DDSMC-Tweedie.

In the tables, a general trend seems to be that more ODE steps are more important than more particles under a compute budget, as 20 ODE steps with 25 particles seems to perform better than 3 and 7 ODE steps with 90 and 40 particles, respectively. It also in general performs better than DDSMC-Tweedie (which in principle uses a single ODE step). This could lead to the conclusion that one should use as many steps possible, and then add more particles up to the desired computational budget. However, running a \emph{single} particle does not work well, and it hence seems like both aspects (more ODE-steps and more particles) are needed for optimal performance. 

It should be noted that we have used number of score evaluations as measure of computational effort, and as mentioned above in the discussion on complexity, since particles are run in parallel, adding more particles might not have the same effect on computational \emph{time} as adding more steps (if keeping the number of score evaluations equal). As more particles also improves performance, exactly how to tune the number of steps and particles can depend also on possibility for parallelization. 

\subsubsection{Additional Qualitative Results}
In \Cref{fig:gmm_ddsmc_appendix_8,fig:gmm_ddsmc_appendix_80,fig:gmm_ddsmc_appendix_800} we present additional qualitative results to compare DDSMC with different $\eta$ and reconstruction functions. In \Cref{fig:gmm_appendix_8,fig:gmm_appendix_80,fig:gmm_appendix_800}, we present extended qualitative comparisons with other methods.
\begin{figure*}[tb!]
\includegraphics[width=\textwidth]{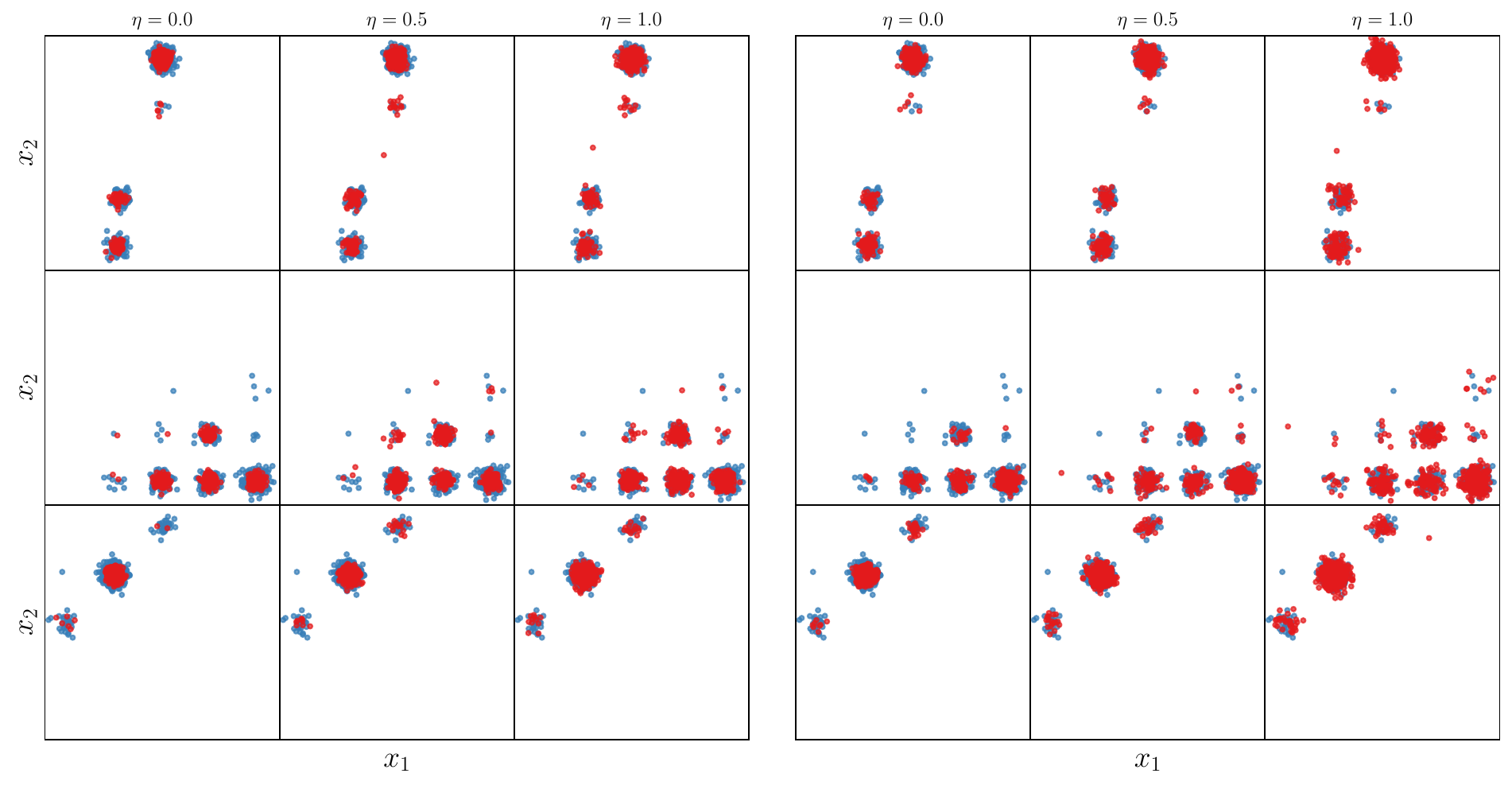}
\caption{\label{fig:gmm_ddsmc_appendix_8} Extended qualitative results on GMM experiments, comparing DDSMC with Tweedie (three left-most figures) or PF-ODE as reconstruction, $d_x=8$, $d_y=1$.}
\end{figure*}

\begin{figure*}[tb!]
\includegraphics[width=\textwidth]{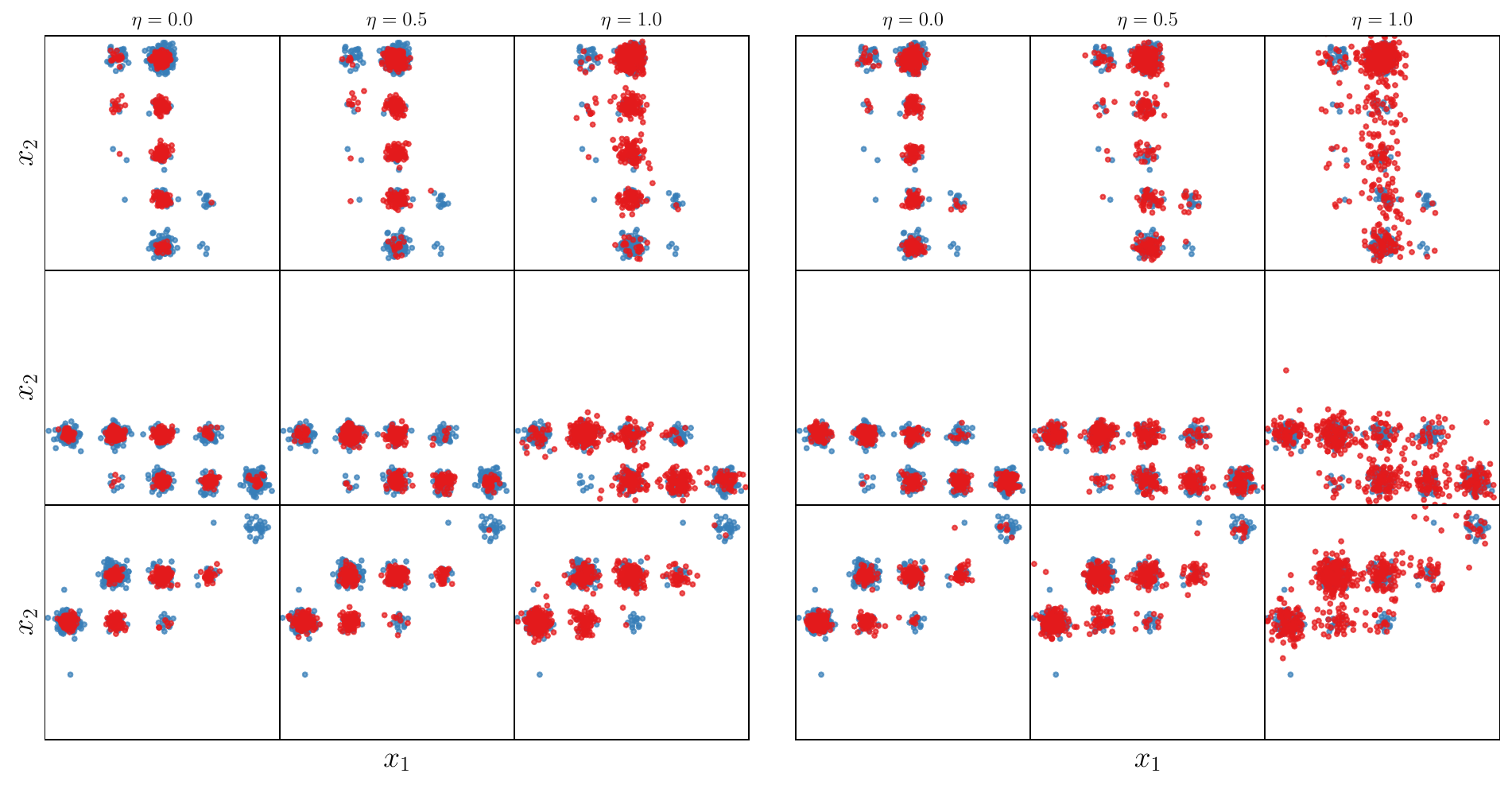}
\caption{\label{fig:gmm_ddsmc_appendix_80} Extended qualitative results on GMM experiments, comparing DDSMC with Tweedie (three left-most figures) or PF-ODE as reconstruction, $d_x=80$, $d_y=1$.}
\end{figure*}

\begin{figure*}[tb!]
\includegraphics[width=\textwidth]{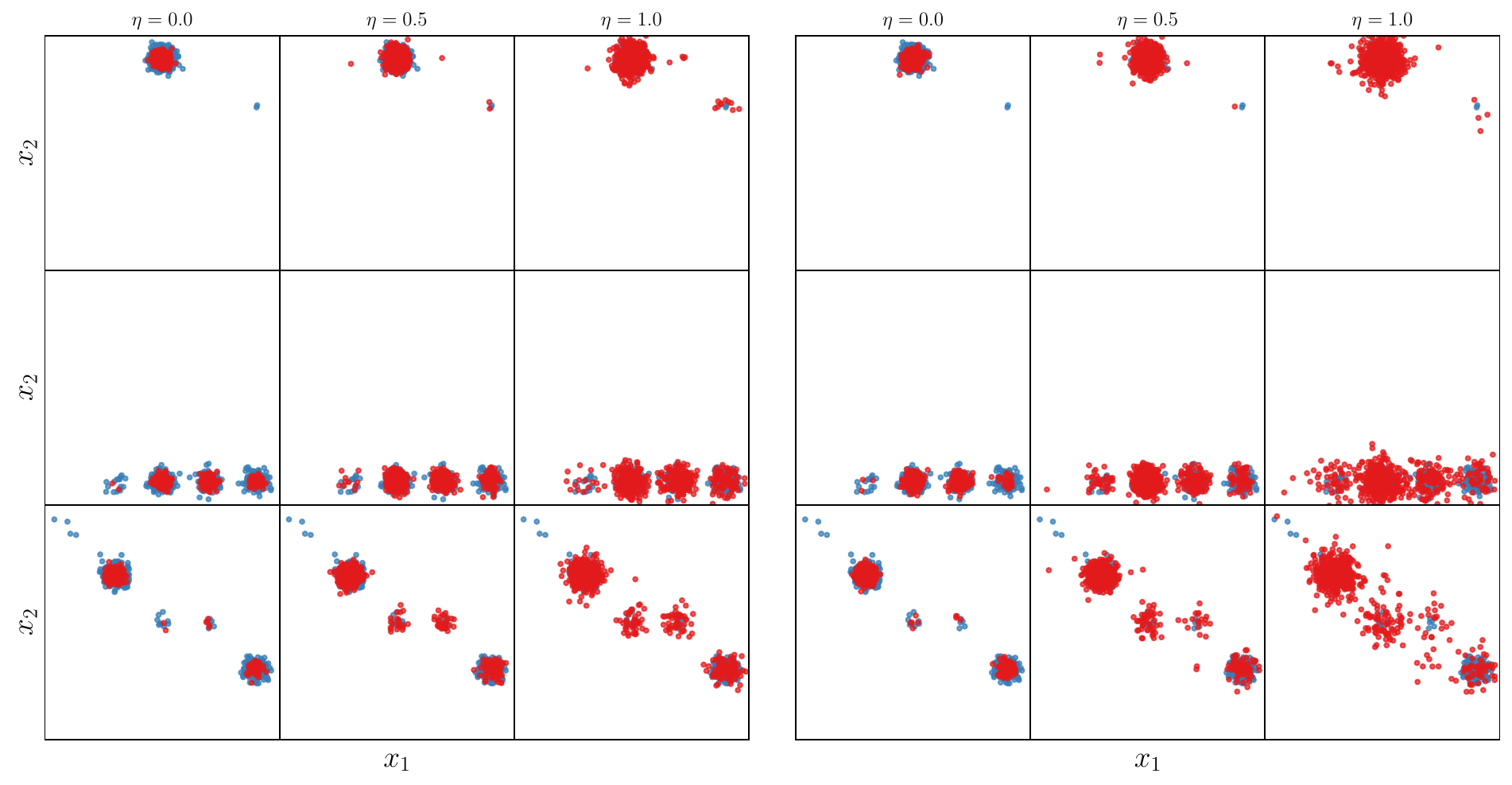}
\caption{\label{fig:gmm_ddsmc_appendix_800} Extended qualitative results on GMM experiments, comparing DDSMC with Tweedie (three left-most figures) or PF-ODE as reconstruction, $d_x=800$, $d_y=1$.}
\end{figure*}

\begin{figure*}[tb!]
\includegraphics[width=\textwidth]{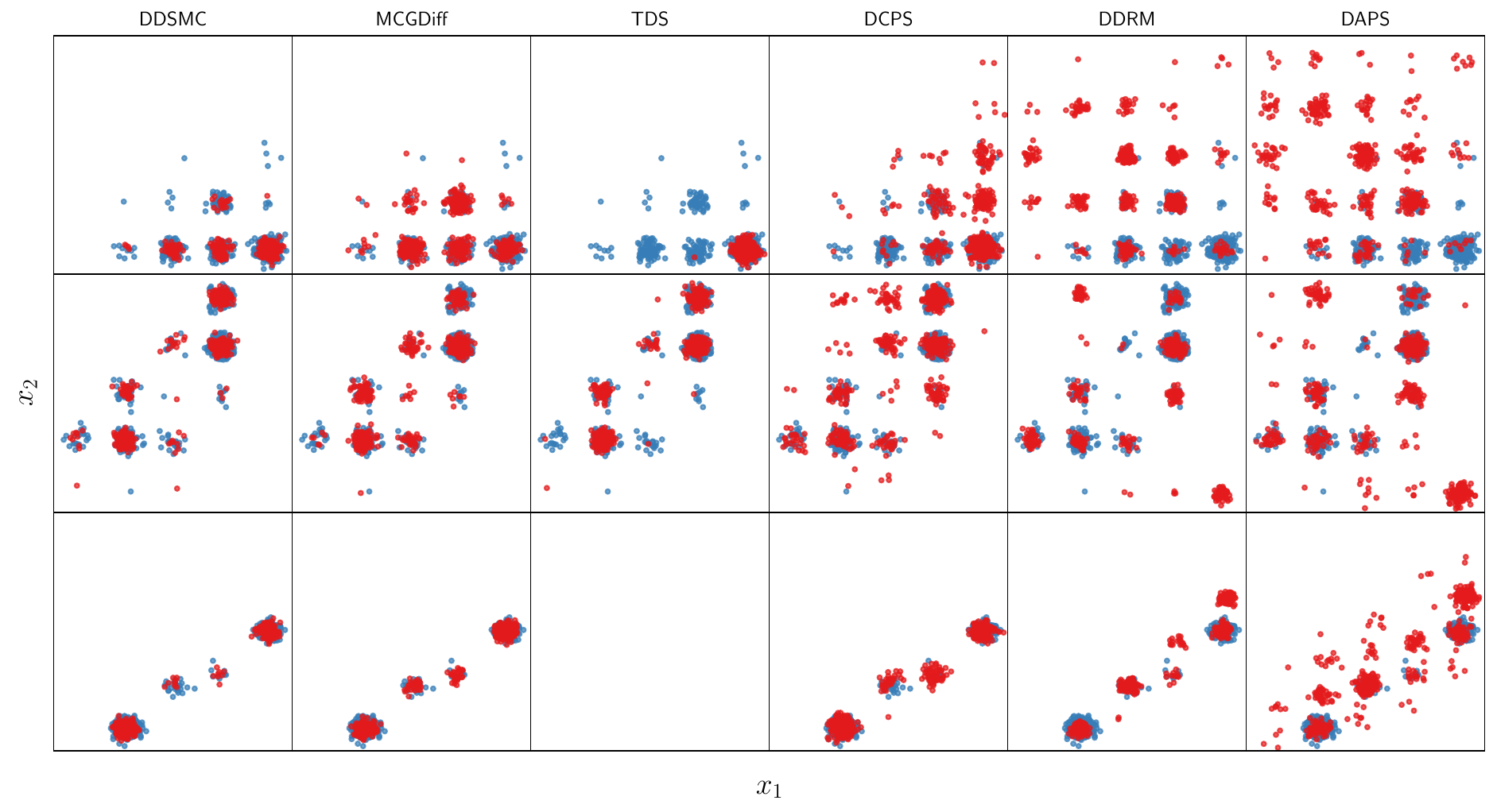}
\caption{\label{fig:gmm_appendix_8} Extended qualitative results on GMM experiments, $d_x=8$, $d_y=1$. DDSMC using PF-ODE as reconstruction and $\eta=0$.}
\end{figure*}

\begin{figure*}[tb!]
\includegraphics[width=\textwidth]{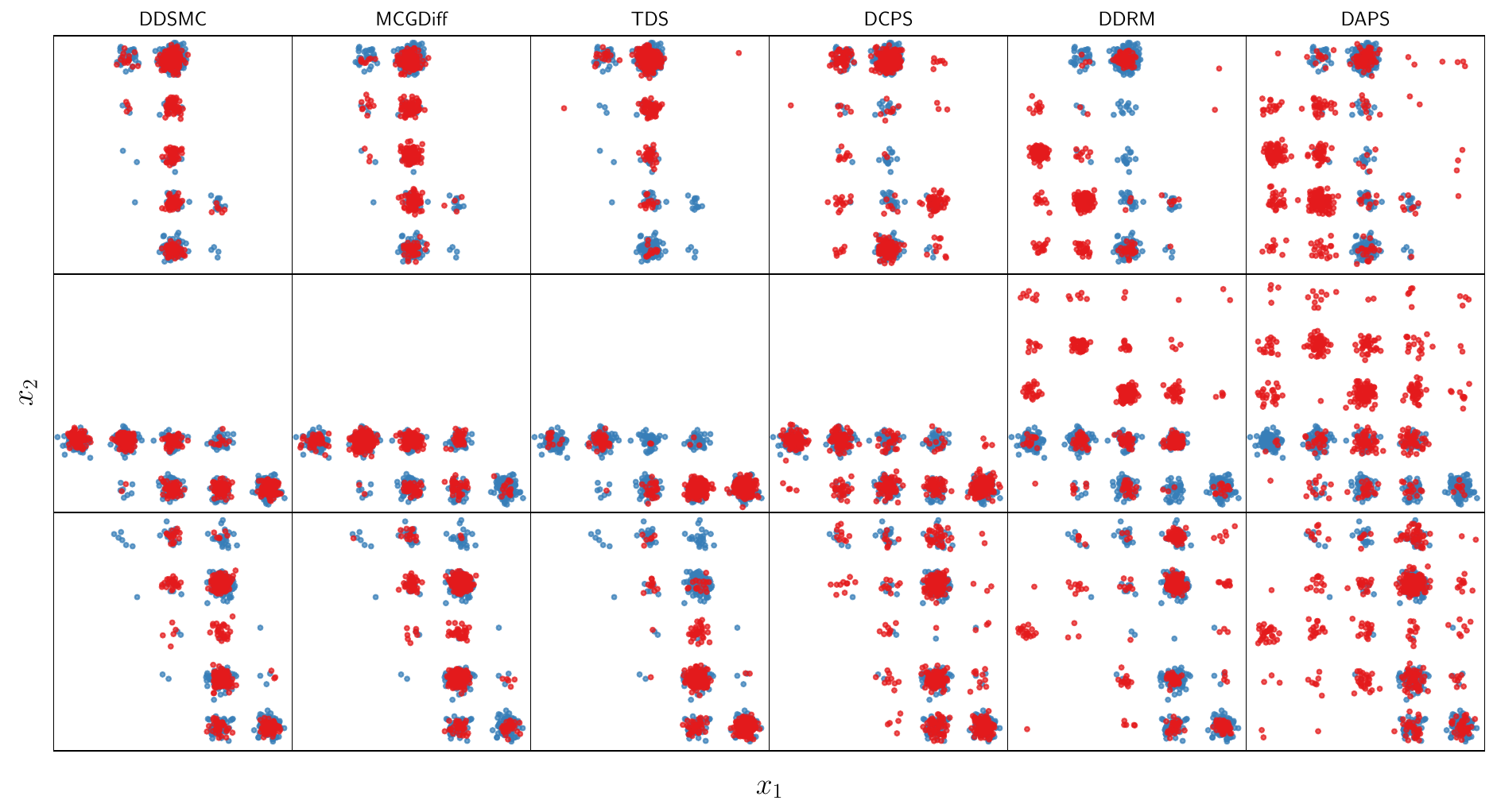}
\caption{\label{fig:gmm_appendix_80} Extended qualitative results on GMM experiments, $d_x=80$, $d_y=1$. DDSMC using PF-ODE as reconstruction and $\eta=0$.}
\end{figure*}

\begin{figure*}[tb!]
\includegraphics[width=\textwidth]{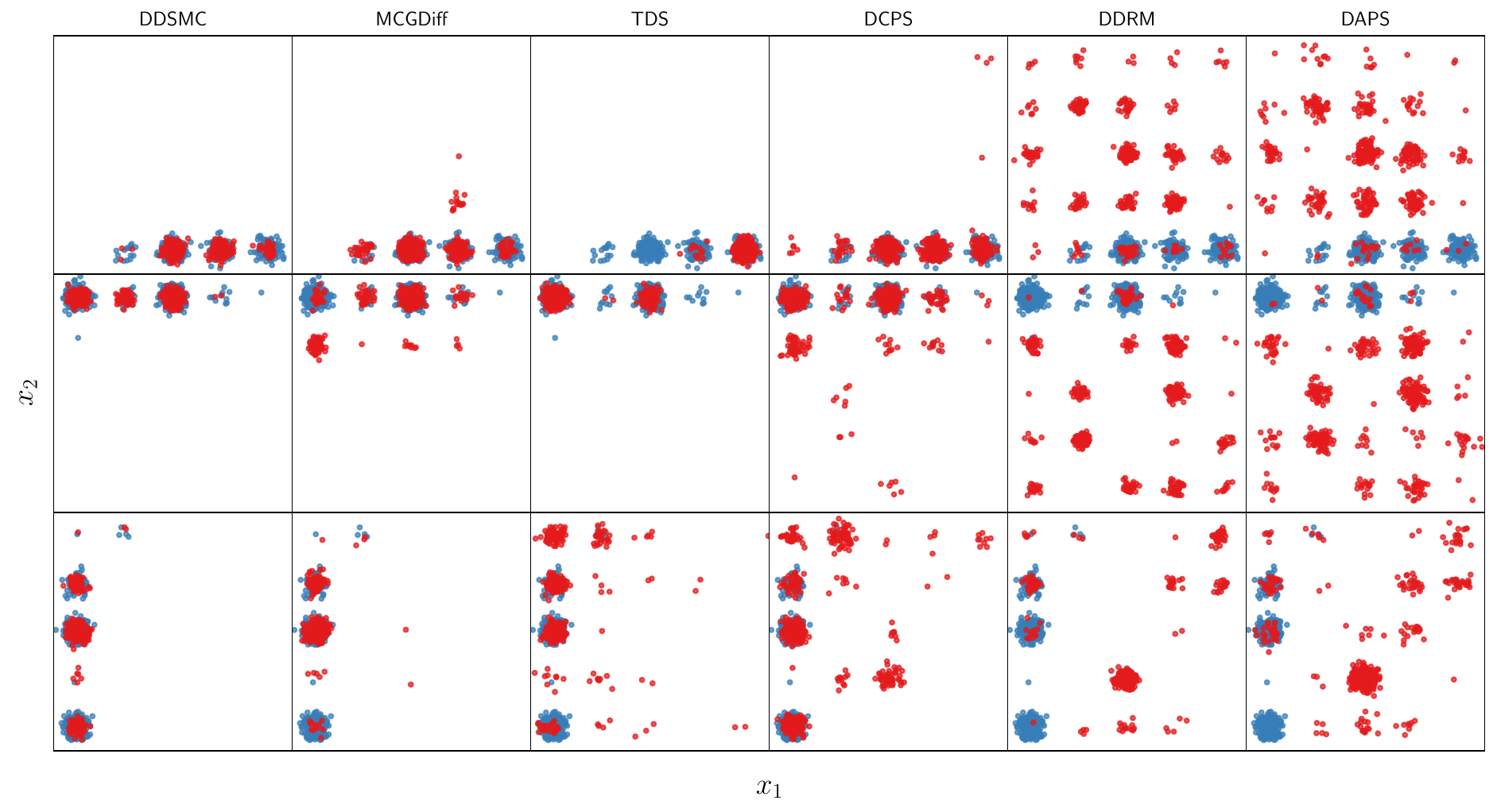}
\caption{\label{fig:gmm_appendix_800} Extended qualitative results on GMM experiments, $d_x=800$, $d_y=1$. DDSMC using PF-ODE as reconstruction and $\eta=0$.}
\end{figure*}

\clearpage
\subsection{Image restoration}
\label{app:image}
We perform the image restoration on the FFHQ dataset \cite{karras_style-based_2019}, downsampled from $1024\times 1024$ to $256\times 256$ pixels. 
\subsubsection{DDSMC Implementation Details and Hyperparameters}
For the image restoration tasks, we used $N=5$ particles. We also follow DAPS and take on the view of diffusion as a continuous process, and the backward process as a discretization of this process at timesteps $t_1, t_2, \dots, t_T$. With this, $t$ and $t+1$ in \Cref{algo:ddsmc} are replaced by $t_i$ and $t_{i+1}$, respectively. 
\paragraph{Time and $\sigma_t$ Schedule}
We follow DAPS which uses a noise-schedule $\sigma_{t_i} = t_i$ where the timepoints $t_i$ ($i=1, \dots, T$) are chosen as 
\begin{align}
    t_i = \left( t_\text{max}^{1/7} + \frac{T-i}{T-1}\left(t_\text{min}^{1/7} - t_\text{max}^{1/7}\right)\right)^7
    \label{eq:sigma_schedule}
\end{align} 
We used $t_\text{max} = 100$, $t_\text{min}=0.1$ and $T=200$

\paragraph{$\rho_t$ schedule}
In their public implementation\footnote{\url{https://github.com/zhangbingliang2019/DAPS/}}, DAPS uses $\rho_t=\sigma$. However, we find that for when using the exact solution $\tilde p(\x_0|\xtplusone, \y)$, these values are too high. We therefore lower $\rho_t$ by using a similar schedule as for $\sigma_t$, 
\begin{align}
    \rho_{t_i} = \left( \rho_\text{max}^{1/7} + \frac{T-i}{T-1}\left(\rho_\text{min}^{1/7} - \rho_\text{max}^{1/7}\right)\right)^7
\end{align}
but with with $\rho_\text{max} = 50$ and $\rho_\text{min}=0.03$. 

\paragraph{PF-ODE}
Again, we followed DAPS and used an Euler solver to solve the PF-ODE by \citet{karras_elucidating_2022-1}, using 5 steps with $\sigma$ according to the same procedure as for the outer diffusion process (i.e., \Cref{eq:sigma_schedule}), at each step $t_i$ starting with $t_\text{max} = t_i$ and ending with $t_\text{min} = 0.01$.

\subsubsection{Implementation of Other Methods}
For the other methods, we used the DCPS codebase \footnote{\url{https://github.com/Badr-MOUFAD/dcps/}} as basis where we implemented DDSMC and transferred DAPS.

\paragraph{DAPS}
We used the hyperparameters as in their public repository\footnote{\url{https://github.com/zhangbingliang2019/DAPS}}.

\paragraph{DCPS}
We used the settings from their public repository, running the algorithm for 300 steps as that was what was specified in their repository.

\paragraph{DDRM}
Just as for DCPS, we used 300 steps. We also tried 20 steps as stated in the original paper, but found that this worked worse (see \Cref{tab:image-table-lpips-full,tab:image-table-psnr}).

\paragraph{MCGDiff}
We used 32 particles. 

\subsubsection{Full quantitative results}
In \Cref{tab:image-table-lpips-full} we present the LPIPS values with standard deviations, and include DDRM with both 20 and 300 steps. In \Cref{tab:image-table-psnr} we present PSNR values. 
\begin{table*}[tb!]
\caption{LPIPS results on FFHQ experiments. The noise level is $\sigma_y=0.05$, and the tasks are inpainting a box in the \textbf{middle}, outpainting right \textbf{half} of the image, and \textbf{s}uper-\textbf{r}esolution $4\times$. Numbers are averages and standard deviations over 1k images. DDRM-20 and DDRM-300 means DDRM with 20 and 300 steps respectively. \textbf{Lower is better.}}
\label{tab:image-table-lpips-full}
\vskip 0.15in
\begin{center}
\begin{small}
\begin{sc}
\begin{tabular}{clccc}
\toprule
&            & Middle & Half & SR4 \\
            \midrule\midrule
& DDRM-20     & $0.08\pm0.02$ & $0.29\pm0.05$ & $0.2\pm0.05$ \\
& DDRM-300    & $0.04\pm0.01$ & $0.25\pm0.05$ & $0.2\pm0.05$ \\
& DCPS        & $0.03\pm0.01$ & $0.2\pm0.05$ & $0.1\pm0.03$ \\
& DAPS        & $0.05\pm0.02$ & $0.24\pm0.05$ & $0.15\pm0.04$ \\
& MCGDiff     & $0.1\pm0.03$ & $0.34\pm0.06$ & $0.15\pm0.04$  \\
\midrule
\multirow{3}{*}{\rotatebox[origin]{0}{Tweedie}}
& DDSMC-$0.0$ & $0.07\pm0.02$ & $0.26\pm0.05$ & $0.27\pm0.07$ \\ 
& DDSMC-$0.5$ & $0.07\pm0.02$ & $0.27\pm0.05$ & $0.2\pm0.05$ \\
& DDSMC-$1.0$ & $0.05\pm0.01$ & $0.24\pm0.04$ & $0.14\pm0.03$ \\
\midrule
\multirow{3}{*}{\rotatebox[origin]{0}{ODE}}
& DDSMC-$0.0$ & $0.05\pm0.01$ & $0.23\pm0.05$ & $0.21\pm0.06$ \\ 
& DDSMC-$0.5$ & $0.05\pm0.01$ & $0.23\pm0.05$ & $0.15\pm0.05$ \\
& DDSMC-$1.0$ & $0.08\pm0.03$ & $0.4\pm0.05$ & $0.36\pm0.08$  \\
\bottomrule

\end{tabular}
\end{sc}
\end{small}
\end{center}
\vskip -0.1in
\end{table*}

\begin{table*}[tb!]
\caption{PSNR results on FFHQ experiments. The noise level is $\sigma_y=0.05$, and the tasks are inpainting a box in the \textbf{middle}, outpainting right \textbf{half} of the image, and \textbf{s}uper-\textbf{r}esolution $4\times$. Numbers are averages and standard deviations over 1k images. DDRM-20 and DDRM-300 means DDRM with 20 and 300 steps respectively. \textbf{Higher is better.}}
\label{tab:image-table-psnr}
\vskip 0.15in
\begin{center}
\begin{small}
\begin{sc}
\begin{tabular}{clccc}
\toprule
&            & Middle & Half & SR4 \\
            \midrule\midrule
& DDRM-20     & $29.14\pm2.16$ & $16.19\pm2.51$ & $28.47\pm2.0$ \\
& DDRM-300    & $30.68\pm2.37$ & $16.85\pm3.21$ & $29.14\pm2.1$ \\
& DCPS        & $29.74\pm2.01$ & $16.18\pm2.77$ & $26.52\pm1.94$ \\
& DAPS        & $29.66\pm2.06$ & $16.18\pm2.75$ & $28.28\pm1.88$ \\
& MCGDiff     & $27.49\pm1.96$ & $14.39\pm2.26$ & $27.76\pm2.02$  \\
\midrule
\multirow{3}{*}{\rotatebox[origin]{0}{Tweedie}}
& DDSMC-$0.0$ & $30.09\pm2.07$ & $15.86\pm3.1$ & $27.28\pm1.85$ \\ 
& DDSMC-$0.5$ & $29.18\pm2.15$ & $15.78\pm2.98$ & $27.07\pm1.69$ \\
& DDSMC-$1.0$ & $27.75\pm2.35$ & $14.46\pm2.39$ & $26.59\pm1.97$ \\
\midrule
\multirow{3}{*}{\rotatebox[origin]{0}{ODE}}
& DDSMC-$0.0$ & $29.68\pm2.1$ & $15.12\pm2.96$ & $27.51\pm1.84$ \\ 
& DDSMC-$0.5$ & $28.85\pm2.16$ & $15.41\pm2.99$ & $26.81\pm1.86$ \\
& DDSMC-$1.0$ & $25.47\pm2.27$ & $11.87\pm1.64$ & $21.41\pm1.9$  \\
\bottomrule

\end{tabular}
\end{sc}
\end{small}
\end{center}
\vskip -0.1in
\end{table*}

\subsubsection{FFHQ Images Attributions}
\begin{table*}[tb!]
\caption{Attribution information for the FFHQ images displayed as ground truth in this paper. Images have been preprocessed (e.g., aligned and cropped) in the original curation of the FFHQ dataset. We then downsampled these to $256\times256$ for the experiments, and further downsampled for this paper. More details can be found in the metadata of the FFHQ dataset: \url{https://github.com/NVlabs/ffhq-dataset}}
\label{tab:ffhq-attribution}
\vskip 0.15in
\begin{center}
\begin{small}
\begin{sc}
\begin{tabular}{
    p{0.1\linewidth}
    p{0.2\linewidth}
    p{0.3\linewidth}
    p{0.3\linewidth}
}
\toprule
Image ID & Author & Original URL  & License URL \\
\midrule
100   &    Debbie Galant   &    \url{https://www.flickr.com/photos/debgalant/25236485093/}   &    \url{https://creativecommons.org/licenses/by/2.0/}   \\ \midrule
102   &    tenaciousme   &    \url{https://www.flickr.com/photos/tenaciousme/4674489614/}   &    \url{https://creativecommons.org/licenses/by/2.0/}   \\ \midrule
103   &    AlphaLab Startup Accelerator   &    \url{https://www.flickr.com/photos/alphalab/8880216552/}   &    \url{https://creativecommons.org/licenses/by/2.0/}   \\ \midrule
104   &    Salesiano San José   &    \url{https://www.flickr.com/photos/97073574@N04/9813413673/}   &    \url{https://creativecommons.org/licenses/by/2.0/}   \\ \midrule
110   &    Captured by Jolene C.P. Hoang   &    \url{https://www.flickr.com/photos/hoangjolene/38813884045/}   &    \url{https://creativecommons.org/licenses/by/2.0/}   \\ \midrule
111   &    Archidiecezja Krakowska Biuro Prasowe   &    \url{https://www.flickr.com/photos/archidiecezjakrakow/40877014205/}   &    \url{https://creativecommons.org/publicdomain/mark/1.0/}   \\ \midrule
126   &    gholzer   &    \url{https://www.flickr.com/photos/georgholzer/4612924700/}   &    \url{https://creativecommons.org/licenses/by-nc/2.0/}   \\ \midrule
135   &    ResoluteSupportMedia   &    \url{https://www.flickr.com/photos/isafmedia/4833372671/}   &    \url{https://creativecommons.org/licenses/by/2.0/}   \\ \midrule
137   &    Berends Producties   &    \url{https://www.flickr.com/photos/118258384@N07/39503180405/}   &    \url{https://creativecommons.org/licenses/by-nc/2.0/}   \\ \midrule
146   &    Eli Sagor   &    \url{https://www.flickr.com/photos/esagor/6109130164/}   &    \url{https://creativecommons.org/licenses/by-nc/2.0/}   \\ \midrule
147   &    Innotech Summit   &    \url{https://www.flickr.com/photos/115363358@N03/18076371188/}   &    \url{https://creativecommons.org/licenses/by/2.0/}   \\ \midrule
151   &    Hamner\_Fotos   &    \url{https://www.flickr.com/photos/jonathan_hamner/3765961686/}   &    \url{https://creativecommons.org/licenses/by/2.0/}   \\ \midrule
171   &    Jo Chou   &    \url{https://www.flickr.com/photos/gidgets/34616317141/}   &    \url{https://creativecommons.org/licenses/by-nc/2.0/}   \\ \midrule
185   &    osseous   &    \url{https://www.flickr.com/photos/osseous/37453437426/}   &    \url{https://creativecommons.org/licenses/by/2.0/}   \\ \midrule
189   &    Phil Whitehouse   &    \url{https://www.flickr.com/photos/philliecasablanca/3698708497/}   &    \url{https://creativecommons.org/licenses/by/2.0/}   \\ \midrule
190   &    University of the Fraser Valley   &    \url{https://www.flickr.com/photos/ufv/26175131313/}   &    \url{https://creativecommons.org/licenses/by/2.0/}   \\ \midrule
199   &    AMISOM Public Information   &    \url{https://www.flickr.com/photos/au_unistphotostream/40272370250/}   &    \url{https://creativecommons.org/publicdomain/zero/1.0/}   \\
\bottomrule
\end{tabular}
\end{sc}
\end{small}
\end{center}
\vskip -0.1in
\end{table*}

In \Cref{tab:ffhq-attribution}, attribution information for the images from FFHQ displayed in this paper are provided.

\subsubsection{Additional Qualitative Results}
We provide additional results on the image reconstruction task in \Cref{fig:ffhq_appendix_outpainting_half,fig:ffhq_appendix_inpainting_middle,fig:ffhq_appendix_sr4}.
\begin{figure*}[h!]
    \centering
    \begin{subfigure}[t]{0.083\linewidth}
        \centering
        \includegraphics[width=\linewidth]{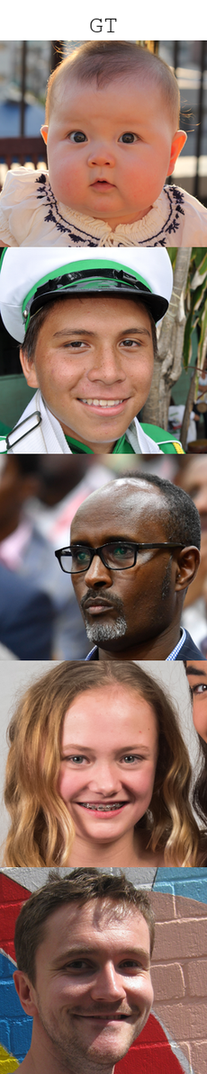}
    \end{subfigure}%
    \hfill
    \begin{subfigure}[t]{0.083\linewidth}
        \centering
        \includegraphics[width=\linewidth]{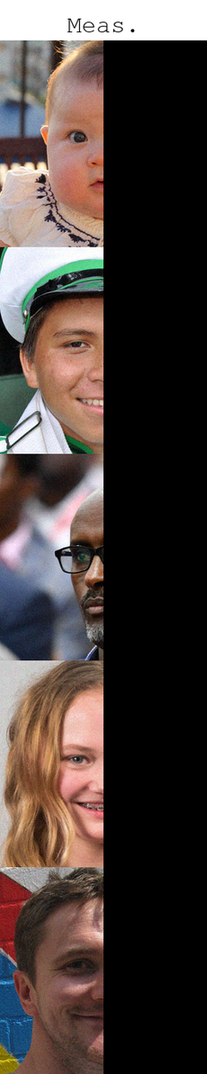}
    \end{subfigure}%
    \hfill
    \begin{subfigure}[t]{0.25\linewidth}
        \centering
        \includegraphics[width=\linewidth]{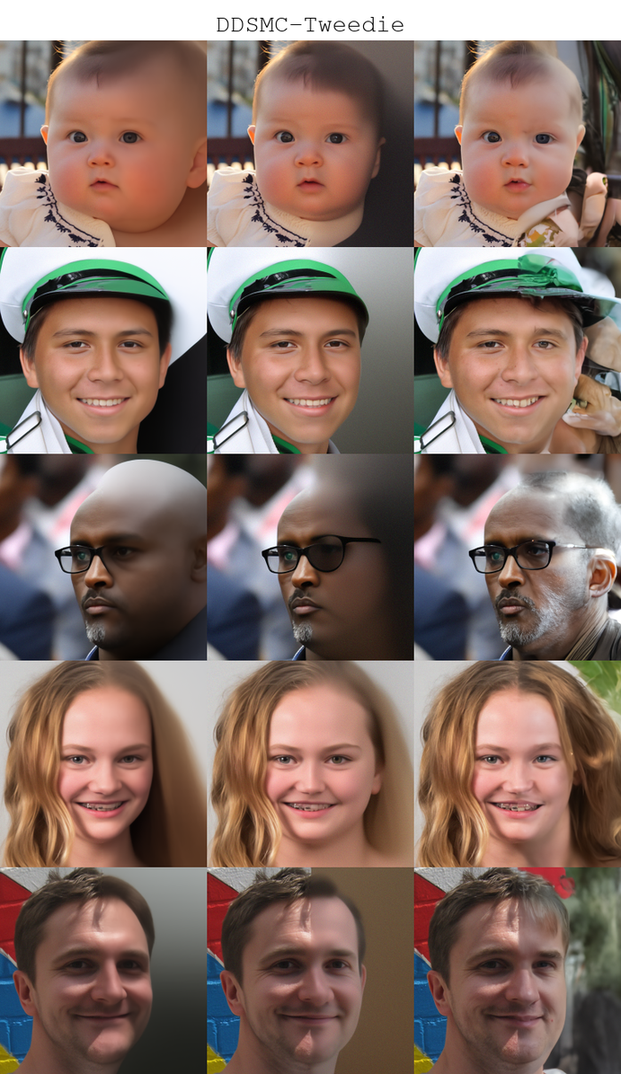}
    \end{subfigure}%
    \hfill
    \begin{subfigure}[t]{0.25\linewidth}
        \centering
        \includegraphics[width=\linewidth]{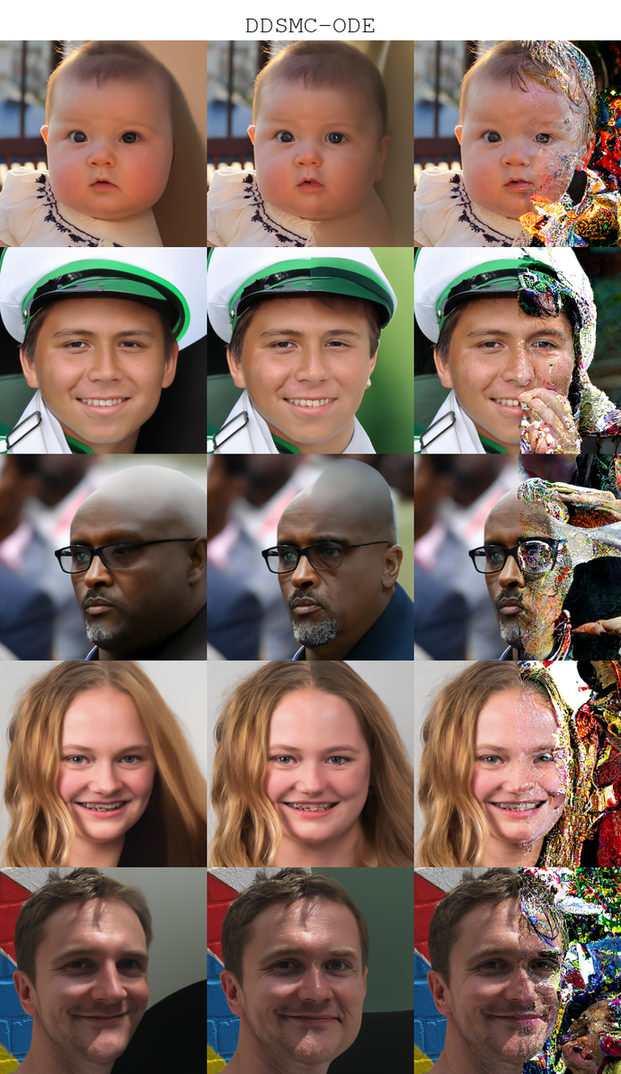}
    \end{subfigure}%
    \hfill
    \begin{subfigure}[t]{0.083\linewidth}
        \centering
        \includegraphics[width=\linewidth]{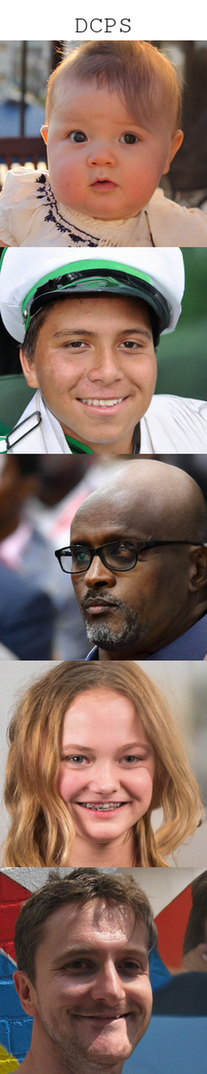}
    \end{subfigure}%
    \begin{subfigure}[t]{0.083\linewidth}
        \centering
        \includegraphics[width=\linewidth]{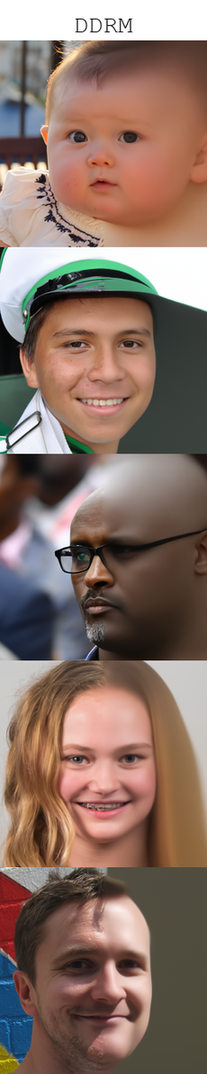}
    \end{subfigure}%
    \begin{subfigure}[t]{0.083\linewidth}
        \centering
        \includegraphics[width=\linewidth]{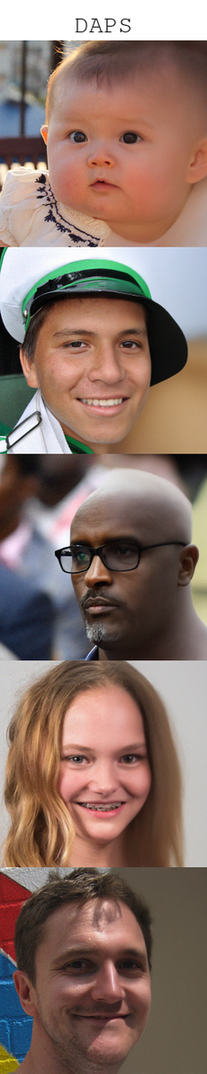}
    \end{subfigure}%
    \begin{subfigure}[t]{0.083\linewidth}
        \centering
        \includegraphics[width=\linewidth]{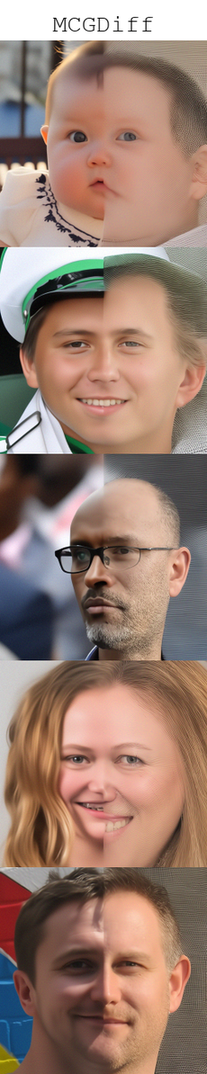}
    \end{subfigure}%
\caption{Additional results on the outpainting task. The DDSMC samples are ordered with $\eta=0$ to the left, $\eta=0.5$ in the middle, and $\eta=1$ to the right. Ground truth images are FFHQ image IDs 102, 104, 199, 137, and 189 (see \Cref{tab:ffhq-attribution} for attribution). }
\label{fig:ffhq_appendix_outpainting_half}
\end{figure*}

\begin{figure*}[h!]
    \centering
    \begin{subfigure}[t]{0.083\linewidth}
        \centering
        \includegraphics[width=\linewidth]{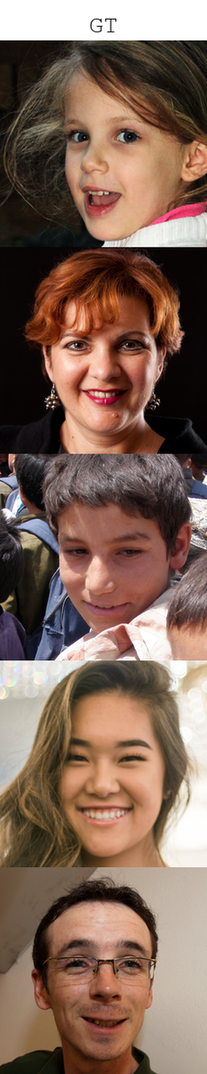}
    \end{subfigure}%
    \hfill
    \begin{subfigure}[t]{0.083\linewidth}
        \centering
        \includegraphics[width=\linewidth]{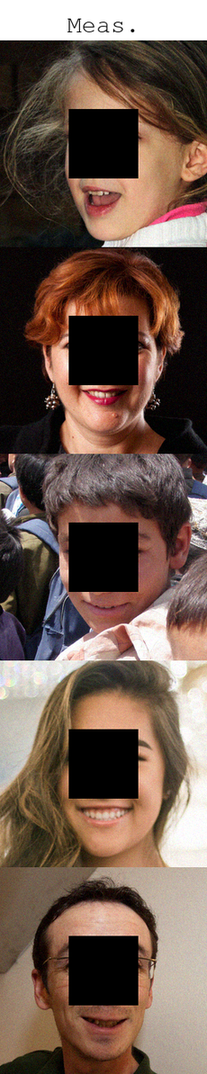}
    \end{subfigure}%
    \hfill
    \begin{subfigure}[t]{0.25\linewidth}
        \centering
        \includegraphics[width=\linewidth]{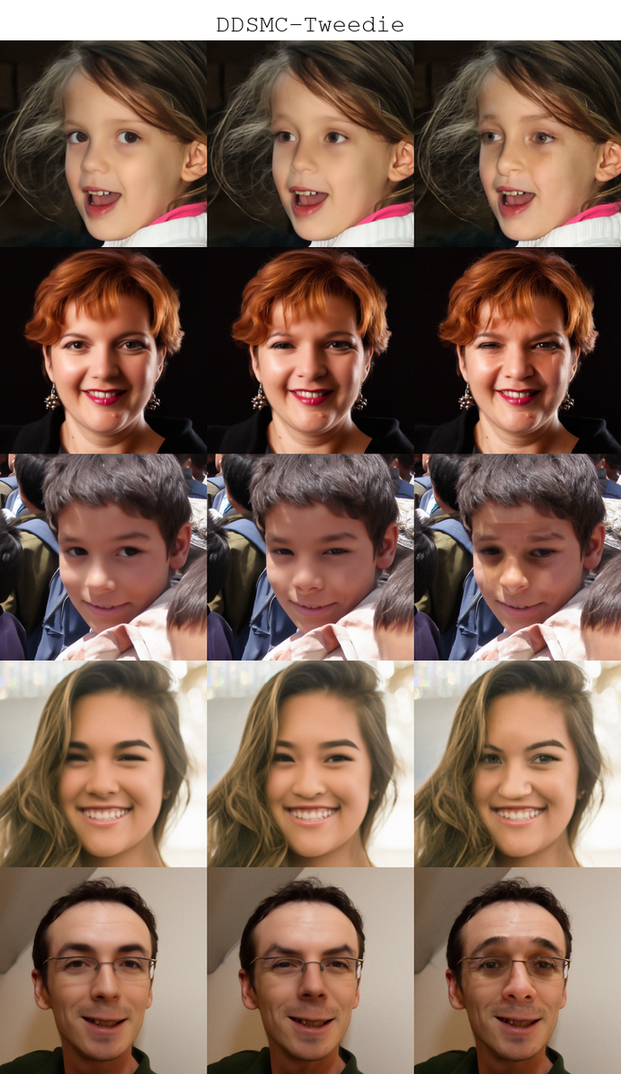}
    \end{subfigure}%
    \hfill
    \begin{subfigure}[t]{0.25\linewidth}
        \centering
        \includegraphics[width=\linewidth]{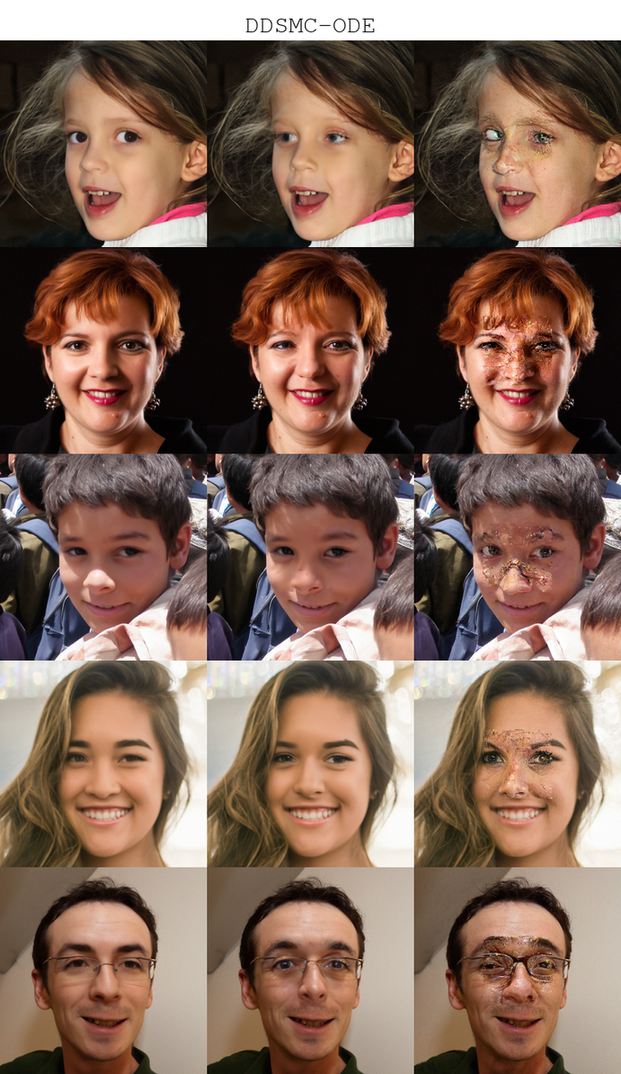}
    \end{subfigure}%
    \hfill
    \begin{subfigure}[t]{0.083\linewidth}
        \centering
        \includegraphics[width=\linewidth]{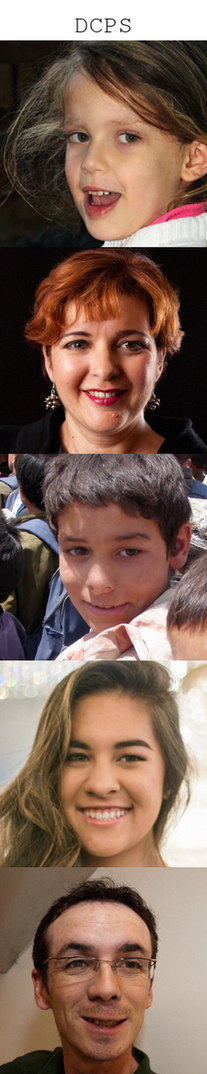}
    \end{subfigure}%
    \begin{subfigure}[t]{0.083\linewidth}
        \centering
        \includegraphics[width=\linewidth]{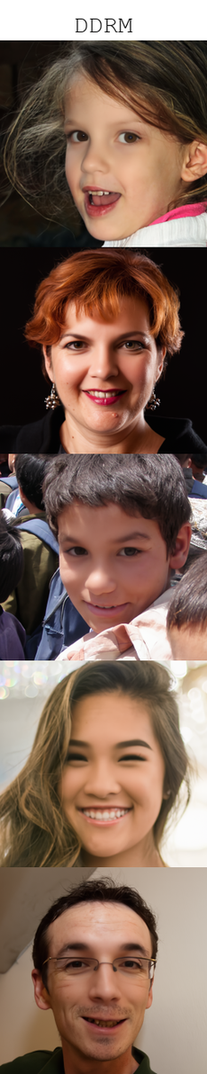}
    \end{subfigure}%
    \begin{subfigure}[t]{0.083\linewidth}
        \centering
        \includegraphics[width=\linewidth]{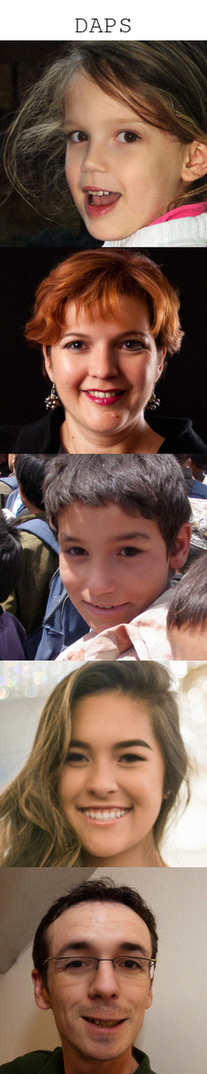}
    \end{subfigure}%
    \begin{subfigure}[t]{0.083\linewidth}
        \centering
        \includegraphics[width=\linewidth]{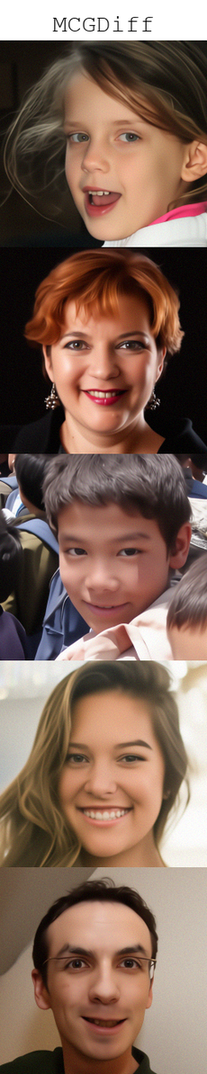}
    \end{subfigure}%
\caption{Additional results on the inpainting task. The DDSMC samples are ordered with $\eta=0$ to the left, $\eta=0.5$ in the middle, and $\eta=1$ to the right. Ground truth images are FFHQ image IDs 151, 171, 135, 110, and 126, (see \Cref{tab:ffhq-attribution} for attribution).}
\label{fig:ffhq_appendix_inpainting_middle}
\end{figure*}

\begin{figure*}[h!]
    \centering
    \begin{subfigure}[t]{0.083\linewidth}
        \centering
        \includegraphics[width=\linewidth]{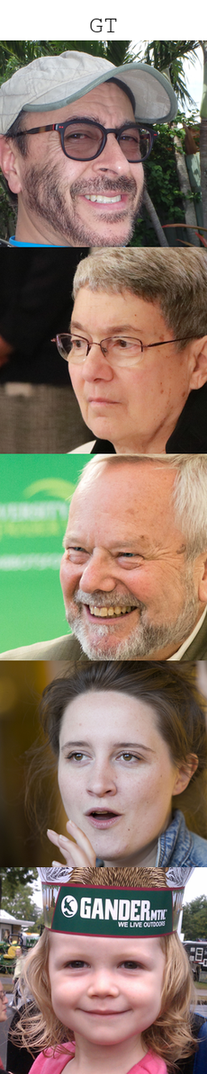}
    \end{subfigure}%
    \hfill
    \begin{subfigure}[t]{0.083\linewidth}
        \centering
        \includegraphics[width=\linewidth]{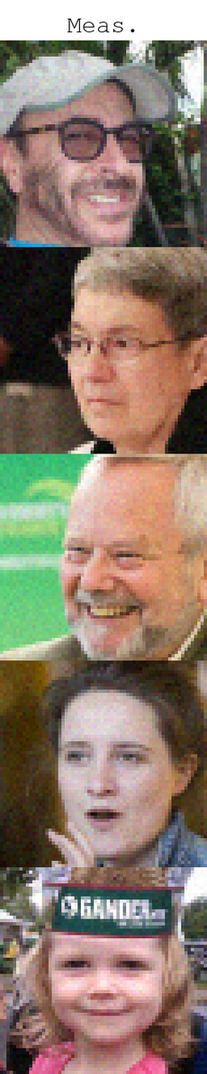}
    \end{subfigure}%
    \hfill
    \begin{subfigure}[t]{0.25\linewidth}
        \centering
        \includegraphics[width=\linewidth]{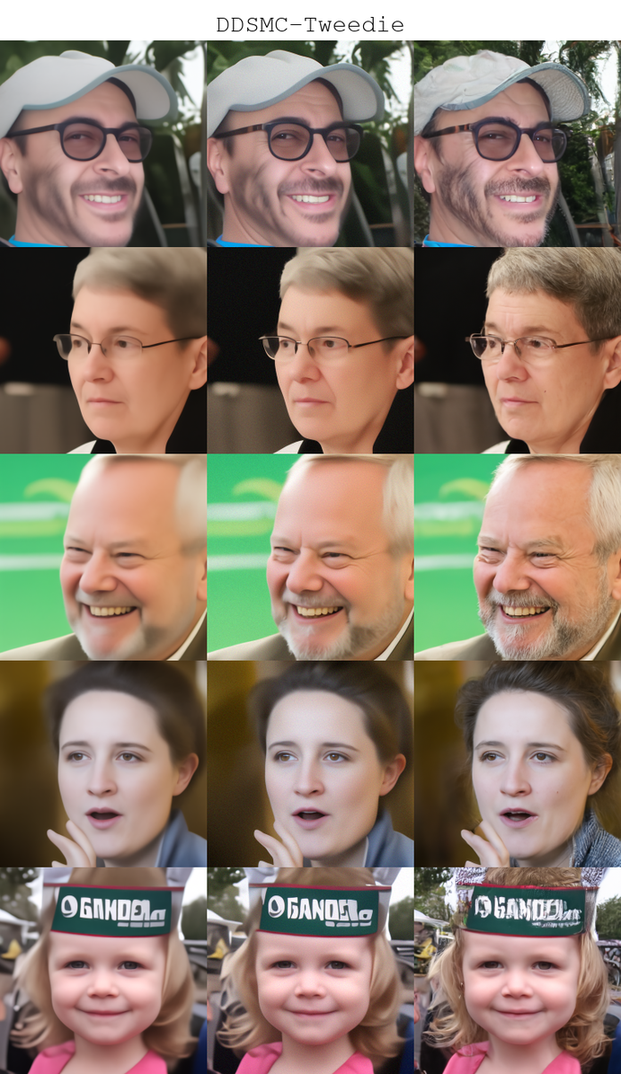}
    \end{subfigure}%
    \hfill
    \begin{subfigure}[t]{0.25\linewidth}
        \centering
        \includegraphics[width=\linewidth]{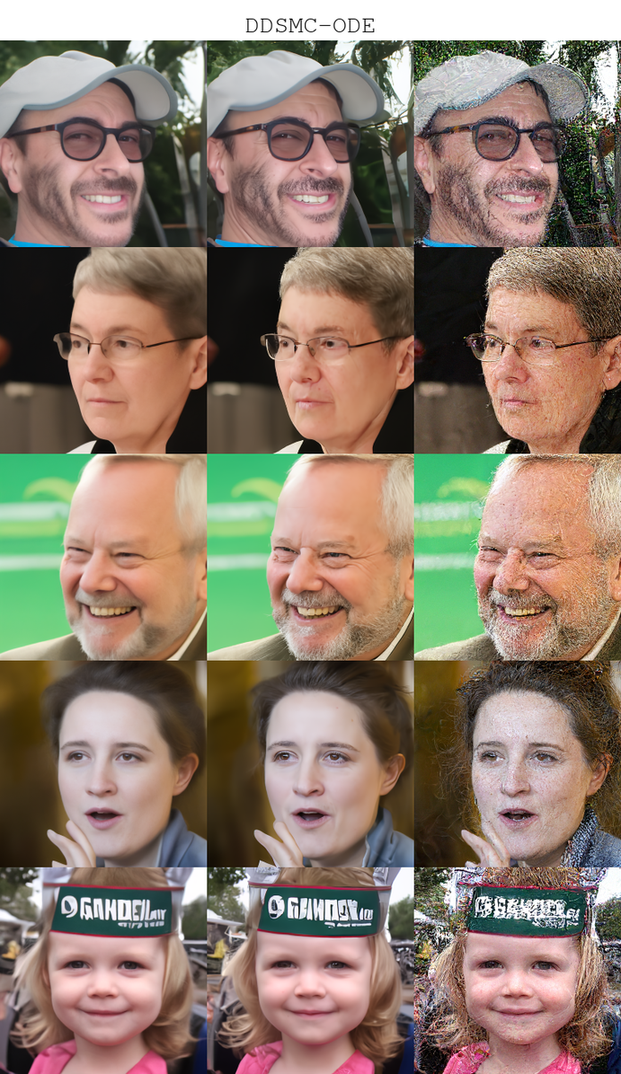}
    \end{subfigure}%
    \hfill
    \begin{subfigure}[t]{0.083\linewidth}
        \centering
        \includegraphics[width=\linewidth]{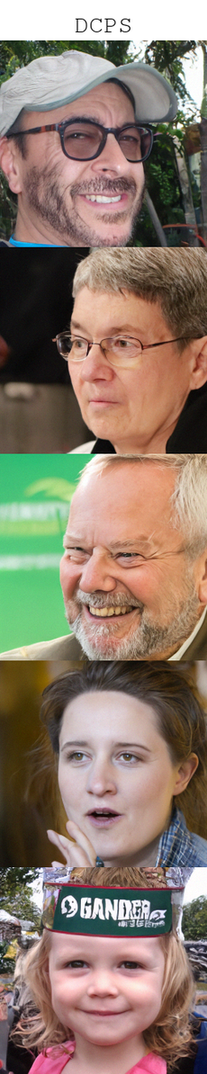}
    \end{subfigure}%
    \begin{subfigure}[t]{0.083\linewidth}
        \centering
        \includegraphics[width=\linewidth]{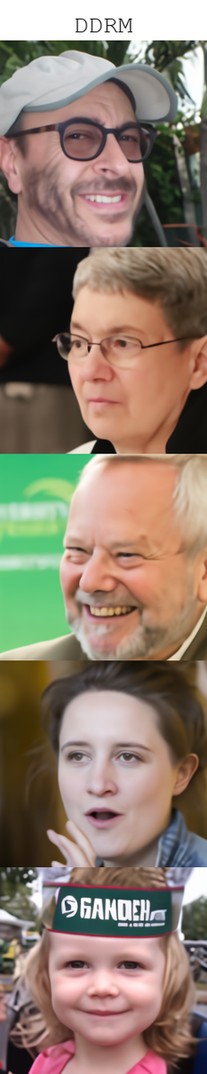}
    \end{subfigure}%
    \begin{subfigure}[t]{0.083\linewidth}
        \centering
        \includegraphics[width=\linewidth]{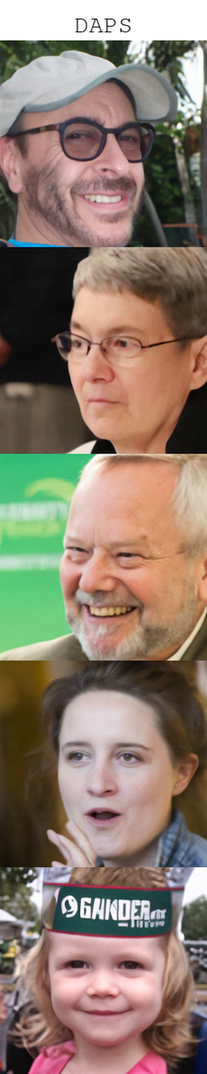}
    \end{subfigure}%
    \begin{subfigure}[t]{0.083\linewidth}
        \centering
        \includegraphics[width=\linewidth]{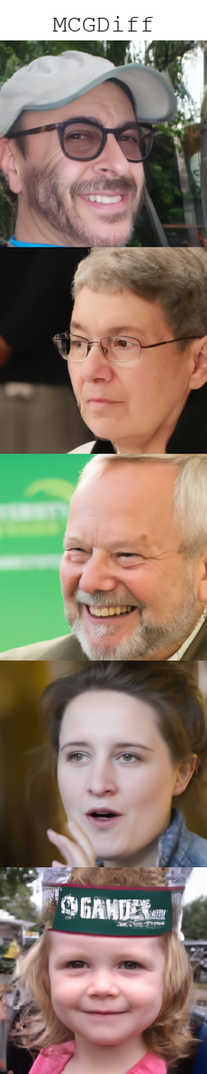}
    \end{subfigure}%
\caption{Additional results on the super-resolution ($4\times$) task. The DDSMC samples are ordered with $\eta=0$ to the left, $\eta=0.5$ in the middle, and $\eta=1$ to the right. Ground truth images are FFHQ image IDs 185, 111, 190, 147, 146 (see \Cref{tab:ffhq-attribution} for attribution).}
\label{fig:ffhq_appendix_sr4}
\end{figure*}

\clearpage
\subsection{Protein Structure Completion}
\label{app:protein_results}
\subsubsection{Implementation details}
We implement DDSMC in the ADP-3D codebase\footnote{\url{https://github.com/axel-levy/axlevy-adp-3d}}, and used DDSMC-Tweedie with $\eta = 0$. We otherwise follow the ADP-3D experimental setup, but also extend the experiments to include non-zero noise levels. For ADP-3D, we had to lower the learning rate for the higher noise levels, using 0.1 and 0.001 for $\sigma=0.1$ and $\sigma=0.5$, respectively. We run the algorithms with 8 different seeds. To somewhat compensate for the additional computations when running DDSMC, we use 200 steps instead of \thsnd{1} that are used in ADP3D. As the diffusion process in $\mathbf{z}$ follows the DDPM/VP formulation, we use the same choice of $\rho_t$ as in the GMM setting.

\subsubsection{Extended results}
In \Cref{fig:protein_app}, we present results on four additional proteins with PDB identifiers \texttt{7pzt} \cite{Dix2022-aa}, \texttt{7r5b} \cite{warstat_novel_2023,Huegle2023-he}, \texttt{7roa} \cite{cruz_structural_2022,Stogios2022-pd}, and \texttt{8em5} \cite{cuthbert_structure_2024,Cuthbert2023-pt}. In general, we see the same trend that DDSMC is competitive with ADP-3D, which specifically was designed for this problem, and that the SMC aspect with parallel particles and resampling improves performance.
\begin{figure*}[h]
    \centering
    \hfill
    \begin{subfigure}[t]{\linewidth}
        \centering
        \includegraphics[width=\linewidth]{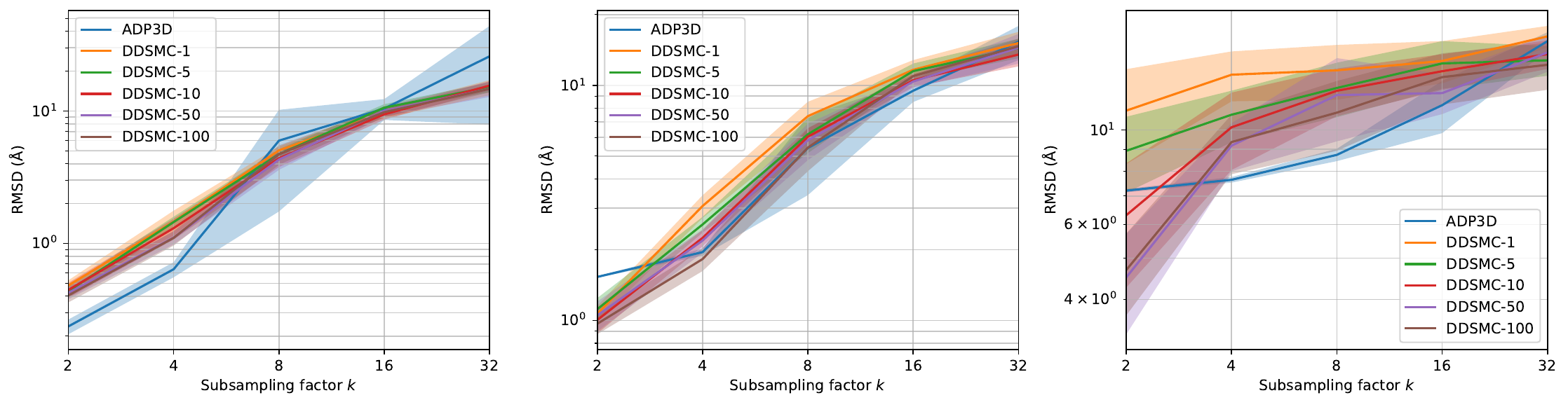}
        \caption{\texttt{7pzt}}
    \end{subfigure}%
    \hfill
    \begin{subfigure}[t]{\linewidth}
        \centering
        \includegraphics[width=\linewidth]{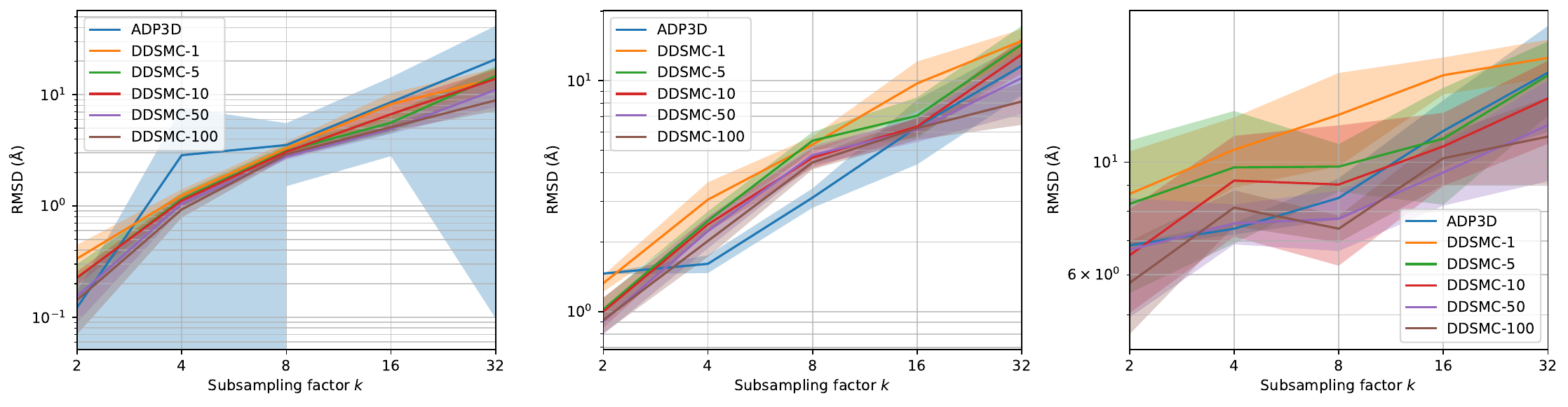}
        \caption{\texttt{7r5b}}
    \end{subfigure}%
    \hfill
    \begin{subfigure}[t]{\linewidth}
        \centering
        \includegraphics[width=\linewidth]{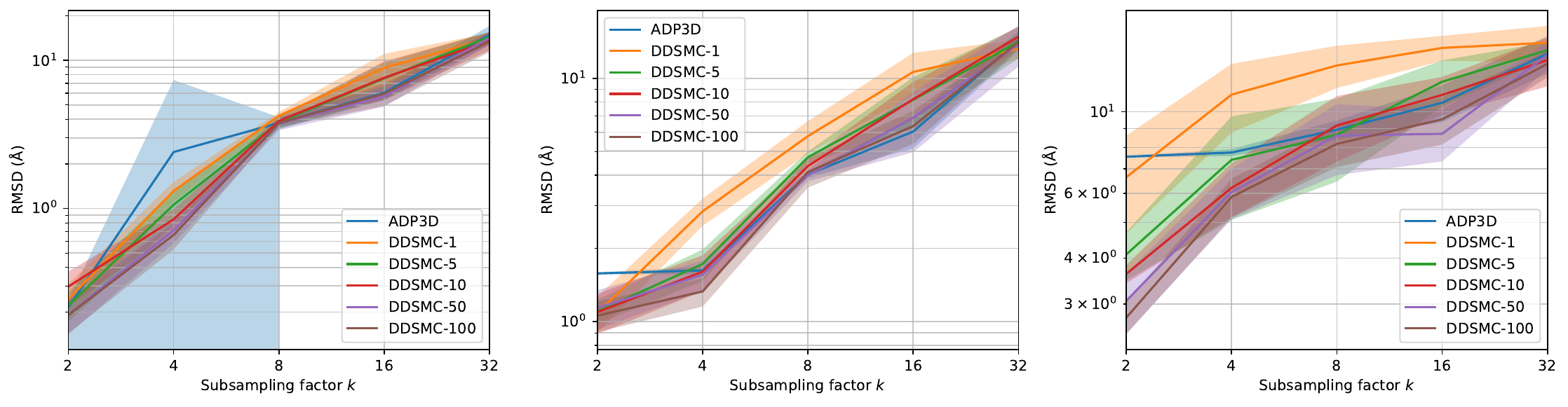}
        \caption{\texttt{7roa}}
    \end{subfigure}%
    \hfill
    \begin{subfigure}[t]{\linewidth}
        \centering
        \includegraphics[width=\linewidth]{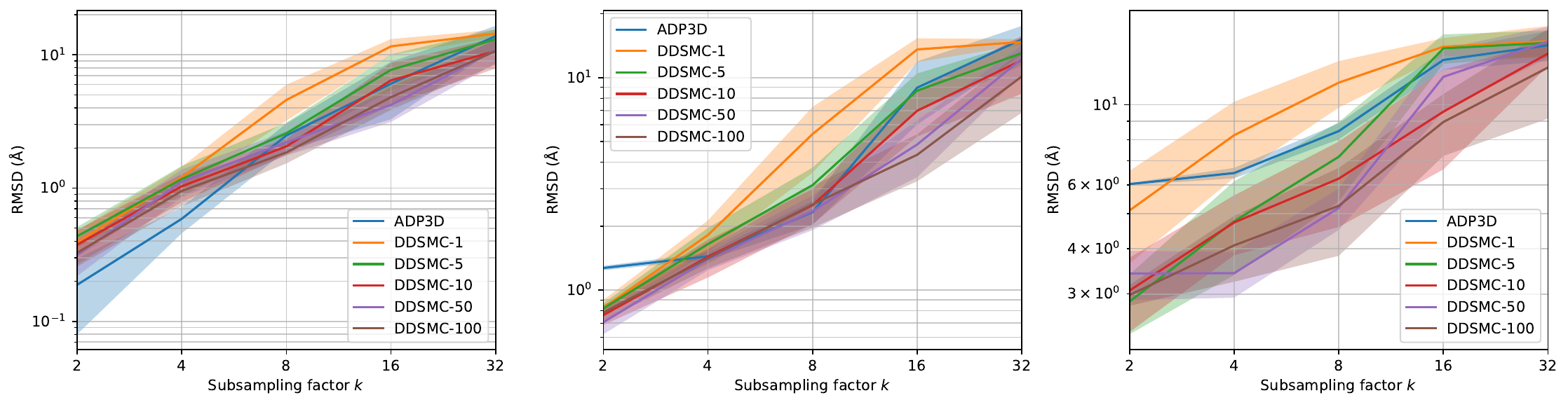}
        \caption{\texttt{8em5}}
    \end{subfigure}%
\caption{Additional results on the protein structure completion problem in \Cref{sec:protein_exp}. RMSD vs subsampling factor $k$ for additional protein with different noise levels $\sigma$: $0$ (left), $0.1$ (middle), $0.5$ (right). Comparing ADP-3D \citep{levy_solving_2024} and DDSMC with different number of particles.}
\label{fig:protein_app}
\end{figure*}

\clearpage
\subsection{Binary MNIST}
\label{app:bmnist}
In \Cref{fig:bmnist_10runs_5particles} and \ref{fig:bmnist_10runs_100particles} we provide extended results when running D3SMC on binary MNIST, using the same measurement model as in the main text and $N=5$ or $N=100$ particles, respectively.
\begin{figure*}[tb!]
\includegraphics[width=\textwidth]{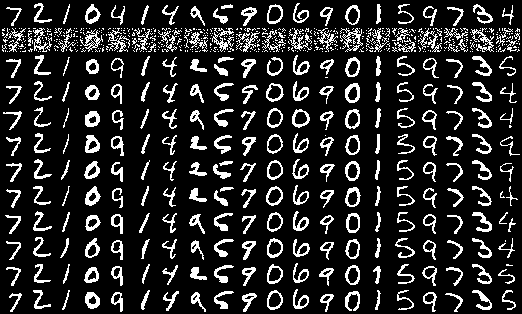}
\caption{\label{fig:bmnist_10runs_5particles} Extended qualitative results on binary MNIST using D3SMC with $N=5$ particles. Top row is ground truth, second row the measurement, and the bottom 10 rows are samples from different independent SMC chains.}
\end{figure*}
 
\begin{figure*}[tb!]
\includegraphics[width=\textwidth]{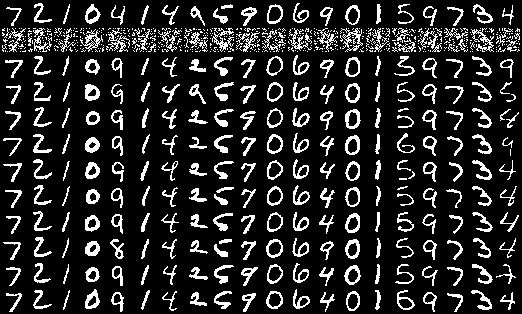}
\caption{\label{fig:bmnist_10runs_100particles} Extended qualitative results on binary MNIST using D3SMC with $N=100$ particles. Top row is ground truth, second row the measurement, and the bottom 10 rows are samples from different independent SMC chains.}
\end{figure*}

\end{document}